\newtheorem{theorem}{Theorem}
\newtheorem{example}{Example}
\newtheorem{lemma}{Lemma}
\newtheorem{assumption}{Assumption}
\theoremstyle{nonumberplain}
\newtheorem{proof}{Proof}
\newcommand{\bszero}{\boldsymbol{0}}
\newcommand{\bsone}{\boldsymbol{1}}
\newcommand{\bse}{\boldsymbol{e}}
\newcommand{\bsg}{\boldsymbol{g}}
\newcommand{\bsp}{\boldsymbol{p}}
\newcommand{\bsw}{\boldsymbol{w}}
\newcommand{\bsz}{\boldsymbol{z}}
\newcommand{\bsphi}{\boldsymbol{\phi}}
\newcommand{\bspsi}{\boldsymbol{\psi}}
\newcommand{\EE}{\mathbb{E}}
\newcommand{\PP}{\mathbb{P}}
\newcommand{\RR}{\mathbb{R}}
\newcommand{\VV}{\mathbb{V}}
\newcommand{\cE}{\mathcal{E}}
\newcommand{\cH}{\mathcal{H}}
\newcommand{\cO}{\mathcal{O}}
\newcommand{\cT}{\mathcal{T}}
\newcommand{\cU}{\mathcal{U}}
\newcommand{\cW}{\mathcal{W}}
\newcommand{\cX}{\mathcal{X}}
\newcommand{\cY}{\mathcal{Y}}
\newcommand{\cZ}{\mathcal{Z}}
\newcommand{\obs}{\mathrm{obs}}
\newcommand{\base}{\mathrm{base}}
\newcommand{\rbox}{\mathrm{box}}
\newcommand{\textinf}{\mathrm{inf}}
\newcommand{\textsup}{\mathrm{sup}}
\newcommand{\ZSB}{\mathrm{ZSB}}
\newcommand{\CMC}{\mathrm{CMC}}
\newcommand{\KCMC}{\mathrm{KCMC}}
\newcommand{\KPCA}{\mathrm{KPCA}}
\newcommand{\IPW}{\mathrm{IPW}}
\newcommand{\rd}{\, \mathrm{d}}
\newcommand{\pto}{{\overset{p.}{\to}}}
\newcommand{\bbmone}{\mathds{1}}  
\newcommand{\indep}{\perp \!\!\! \perp}
\renewcommand{\epsilon}{\ensuremath\varepsilon}
\renewcommand{\phi}{\ensuremath{\varphi}}
\DeclareMathOperator*{\sumint}{
\mathchoice
  {\ooalign{$\displaystyle\sum$\cr\hidewidth$\displaystyle\int$\hidewidth\cr}}
  {\ooalign{\raisebox{.14\height}{\scalebox{.7}{$\textstyle\sum$}}\cr\hidewidth$\textstyle\int$\hidewidth\cr}}
  {\ooalign{\raisebox{.2\height}{\scalebox{.6}{$\scriptstyle\sum$}}\cr$\scriptstyle\int$\cr}}
  {\ooalign{\raisebox{.2\height}{\scalebox{.6}{$\scriptstyle\sum$}}\cr$\scriptstyle\int$\cr}}
}
\begin{document}

%
%

\twocolumn[
    \aistatstitle{Kernel Conditional Moment Constraints for Confounding Robust Inference}
    \aistatsauthor{ Kei Ishikawa \And Niao He }
    \aistatsaddress{
    ETH Z\"urich \\
    \href{mailto:kishikawa@student.ethz.ch}{kishikawa@student.ethz.ch}\\
\And
    ETH Z\"urich \\
    \href{mailto:niao.he@inf.ethz.ch}{niao.he@inf.ethz.ch}
} 
]

\begin{abstract}
We study policy evaluation of offline contextual bandits subject to unobserved confounders. 
Sensitivity analysis methods are commonly used to estimate the policy value under the worst-case confounding over a given uncertainty set. 
However, existing work often resorts to some coarse relaxation of the uncertainty set  for the sake of tractability, leading to overly conservative estimation of the policy value. 
In this paper, we propose a general estimator that provides a sharp lower bound of the policy value.
It can be shown that our estimator contains the recently proposed sharp estimator by \citet{dorn2022sharp} as a special case, and our method enables a novel extension of the classical marginal sensitivity model using f-divergence.
To construct our estimator, we leverage the kernel method to obtain a tractable approximation to the conditional moment constraints, which traditional non-sharp estimators failed to take into account. 
In the theoretical analysis, we provide a condition for the choice of the kernel which guarantees no specification error that biases the lower bound estimation.
Furthermore, we provide consistency guarantee of policy evaluation and extend the result to policy learning.
In the experiments with synthetic and real-world data, we demonstrate the effectiveness of the proposed method.
\end{abstract}


\section{INTRODUCTION}\label{chap:intro}

The offline contextual bandit is a simple but powerful model for decision-making with a wide range of applications such as data-driven personalized medical treatment, recommendations, and advertisements on online platforms.
In the evaluation of its policy value, the inverse probability weighting (IPW) method \citep{hirano2001estimation, hirano2003efficient} or its variant is commonly used.
This method relies on a so-called \emph{unconfoundedness} assumption, which essentially requires full observability of all  relevant variables so that there exist no unobserved variables that influence the selection of action and resulting reward \citep{rubin1974estimating}. 
However, in practice, such an assumption can easily be violated due to the existence of unobserved confounders that are not recorded in the logged data. 

A common way to address this problem is resorting to the worst-case lower bound of the policy value, namely, we minimize the policy value over a plausible uncertainty set that contains all the possible confounding situations. With such a lower bound, we can make an informed decision that is robust to confounding. 
The estimation and inference of such a lower bound are called sensitivity analysis and it has been extensively studied over the years  \citep{rosenbaum2002overt, tan2006distributional, rosenbaum2010design, liu2013introduction}.
Among a wide range of existing sensitivity models, a popular choice is the marginal sensitivity model by \citet{tan2006distributional} and its extensions.
Recently, \citet{zhao2019sensitivity} introduced an elegant algorithm for Tan's marginal sensitivity model using the linear fractional programming, which has revitalized the study of this model.
This approach was further extended to policy learning in \citet{kallus2018confounding, kallus2021minimax}.

However, these sensitivity analysis methods rely on algorithms using linear programming that finds an overly conservative lower bound of policy value. 
This is a fundamental problem, as these loose lower bound estimators are only guaranteed to be lower than or equal to the true lower bound of the uncertainty set, but they are not necessarily the consistent estimator of the true lower bound.
Even so, these algorithms have been widely adopted for their tractability.
To obtain a sharp lower bound, conditional moment constraints, which consist of infinite-dimensional linear constraints, must be leveraged. 
Recently, \citet{dorn2022sharp} analyzed these constraints and characterized a sharp lower bound of Tan's marginal sensitivity model using a conditional quantile function of the reward distribution. 
With this characterization, they proposed the first tractable algorithm to obtain the sharp lower bound that converges to the true lower bound of the policy value. 

In this paper, we address the same problem of sharp estimation from a new perspective.
Instead of using the conditional quantile function, we employ the kernel method \citep{scholkopf2002learning}, a rich and flexible modeling paradigm in machine learning.
We develop a tractable kernel approximation of the 
conditional moment constraints and propose an efficient algorithm to obtain the sharp lower bound.

\paragraph{Our contributions.} We summarize our contributions in several aspects below.

First, we extend the existing sensitivity analysis models by considering uncertainty sets characterized by more general convex constraints.
Our model includes the original sensitivity model by \citet{tan2006distributional} as the special case but it also includes a new f-sensitivity model 
that extends \citet{tan2006distributional}'s sensitivity model using f-divergence.

Second, we provide efficient algorithms based on the kernel method and low-rank approximation to obtain sharp estimators of the worst-case lower bound for the extended model.
Our new estimator is very general and it includes the previous sharp estimator by \citet{dorn2022sharp} as a special case.
Using the duality of the associated convex optimization problem, we further identify conditions for zero specification error guarantees and establish consistency guarantees of our estimator in policy evaluation.

Third, we show that our method can naturally be extended to policy learning, as it offers a very simple way to compute the policy gradient.
This is an advantage of our estimator compared to the previous sharp estimator \citep{dorn2022sharp}, which does not offer the possibility of policy learning.
We provide a consistency guarantee for policy learning with a sharp lower bound, which is similar to the guarantee for the non-sharp estimator by \citet{kallus2018confounding, kallus2021minimax}.

Last but not least, we demonstrate the effectiveness of imposing the kernel conditional moment constraints in several numerical experiments on both synthetic and real-world data.    We cover a wide range of problems in sensitivity analysis such as the generalized sensitivity models defined with f-divergence and policy learning, and our estimator consistently outperforms the conventional non-sharp estimators in these settings.

\paragraph{Related work.} Similar to our paper, \citet{kremer2022functional} used the kernel method for parameter estimation of models characterized by conditional moment restrictions.
They solved the dual of their original problem by using the dual representation of the $L_2$-norm of the conditional moment.
Though we solve a primal problem in this paper, we take great advantage of such a dual formulation in our theoretical analysis.
\citet{muandet2020kernel} considered hypothesis testing for conditional moment conditions.
They constructed their test statistic using a quadratic form of kernel matrix similar to the one we use.
The idea of using the kernel method to impose constraints has also been explored in other contexts such as fair regression \citep{perez2017fair}, distributionally-robust optimization \citep{staib2019distributionally}, worst-case risk quantification \citep{zhu2020worst}, and shape constraints to derivatives \citep{aubin2020hard}.
Recently, the kernel method has found various novel applications in causal inference, including instrumental variable regression \citep{singh2019kernel},
negative controls \citep{singh2020kernel, kallus2021causal, mastouri2021proximal},
and conditional mean squared error minimization for policy evaluation \citep{kallus2018balanced}.


\section{BACKGROUNDS AND PROBLEM SETTINGS}\label{chap:background}

\subsection{Confounded Offline Contextual Bandits}

Confounded offline contextual bandits are an extension of the standard offline contextual bandits that have an additional unobserved confounding variable.
We are interested in evaluating the value of policy $\pi$ from the offline data following base policy $\pi_\base$, which is generated according to the following model:
\begin{align}
\begin{array}{ll}
   X, U &\sim p(x, u), \\
   T|X, U &\sim \pi_\base(t|X, U),\\
   Y|T, X, U &\sim p(y|T, X, U),
\end{array}\label{eq:data_gen_base_confounded}
\end{align}
where only $Y$, $T$, and $X$ are observable and policy $\pi_\base(t|x, u)$ is unknown.
Action $t\in\cT$ is chosen by the (stochastic) base policy given context $X\in\cX$ and unobserved variable $U\in\cU$.
Reward $Y$ is randomly generated conditionally on the values of $T$, $X$, and $U$.

In the offline evaluation of policy $\pi$, we are interested in the expectation of $Y$ under the modified process of \eqref{eq:data_gen_base_confounded} where  $\pi_\base(t|x, u)$ is replaced by $\pi(t|x)$.
Thus, the desired policy value of $\pi$ can be written as 
\begin{align}
\begin{split}
V(\pi) &= \EE_{T\sim \pi(\cdot|X)}\left[ Y \right] \\
&= \EE_{T\sim \pi_\base(\cdot|X, U)}\left[ \left(\frac{\pi(T|X)}{\pi_\base(T|X, U)}\right) Y \right].
\end{split}
\label{eq:ipw_confounded}
\end{align}
Here, we only consider an observable policy $\pi(t|x)$, because it is trivially impossible to evaluate a policy that depends on unobserved variable $U$ only using the offline data.
For simplicity of notations, we denote $\EE_{T\sim \pi_\base(\cdot|X, U)}[f(Y, T, X, U)]$ as the expectations of $f(Y, T, X, U)$ under generative process \eqref{eq:data_gen_base_confounded} and $\EE_{T\sim \pi(\cdot|X)}[f(Y, T, X, U)]$  as its modification where $\pi_\base$ is replaced by $\pi$. 
Hereafter, we assume that $\pi_\base(T|X, U) > 0$ holds almost surely 
so that the inverse probability weights are always well-defined.

In unconfounded offline contextual bandits, we can use the inverse probability weighting (IPW) estimator 
\begin{equation}
    \hat V_\IPW(\pi) := \frac{1}{n}\sum_{i=1}^n \left( \frac{\pi(T_i|X_i)}{\hat \pi_\base(T_i|X_i)} \right) Y_i
    \label{eq:ipw}
\end{equation}
with estimated base policy $\hat \pi_\base$ to evaluate the policy $\pi$ consistently. 
However, when $\pi_\base$ depends on $U$, we can no longer construct such a consistent estimator, as the observable variables are only $Y$, $T$, and $X$, and any valid estimator must depend only on them.



To indicate a part of model \eqref{eq:data_gen_base_confounded} that can be approximated by the offline data, we use $p_\obs$ to indicate the observable distribution of \eqref{eq:data_gen_base_confounded} such that
\[
p_\obs(y, t, x) = \int p(y|t, x, u) \pi_\base(t|x, u) p(x, u)\rd u.
\]
Similarly, $p_\obs(t|x)$ and $p_\obs(x)$ denote the corresponding conditional and marginal distributions, and $\EE_\obs[f(Y, T, X)]$ represents the expectation of $f(Y, T, X)$ with respect to $p_\obs(y, t, x)$.
To represent the empirical average that approximates $\EE_\obs$, we use $\hat\EE_n$ so that $\hat\EE_n[f(Y, T, X)] := \frac{1}{n}\sum_{i=1}^n f(Y_i, T_i, X_i)$ for any $f(y, t, x)$.
Finally, we use abbreviation $\EE_\obs[f|t, x]$ to represent conditional expectation $\EE_\obs[f(Y, T, X)|T=t, X=x]$.
Hereafter, these observable distributions are assumed to be available for constructing estimators.

\subsection{Uncertainty Sets of Base Policies}\label{sec:policy_uncertainty_set}
A practical workaround to the above-mentioned issue is partial identification of policy value under some reasonable assumption about confounding.
More specifically, we first define some uncertainty set $\cE$ of $\pi_\base(t|x, u)$ in the form of constraint conditions.
Then we find the infimum  policy value $V_\textinf$ (or the supremum $V_\textsup$) within the uncertainty set as
\begin{equation}
V_\textinf(\pi) := \inf_{\pi_\base \in \cE} \EE_{T\sim\pi_\base(\cdot|X, U)}\left[\left(\frac{\pi(T|X)}{\pi_\base(T|X, U)} \right)Y\right].
\label{eq:v_inf_exact_policy}
\end{equation}
In the following, we list a few types of constraints used for the construction of the uncertainty sets.

\subsubsection*{Box Constraints}
The box constraints have been  widely adopted in the sensitivity analysis, and they can be written as 
\begin{equation}
    a_\pi(t, x) \leq \pi_\base(t|x, u) \leq b_\pi(t, x)
    \label{eq:box_policy_uncertainty_set}
\end{equation}
for some $a_\pi(t, x)$ and $b_\pi(t, x)$.
This assumption is used in the well-known marginal sensitivity model by \citet{tan2006distributional} as well as many of its extensions \citep{zhao2019sensitivity, kallus2018confounding, dorn2022sharp}.
\citet{tan2006distributional} considered a binary action space and assumed that the odds ratio of observational conditional probability $p_\obs(t|x)$ and the true base policy $\pi_\base(t|x, u)$ is not too far from $1$ so that
\begin{equation}
    \Gamma^{-1} \leq \frac{p_\obs(t|x) (1 - \pi_\base(t|x, u)}{(1 - p_\obs(t|x)) \pi_\base(t|x, u)} \leq \Gamma.
    \label{eq:tan_box_constraints}
\end{equation}
As $p_\obs(t|x)$ can be estimated from the observational data, we can enforce such constraints by choosing $a_\pi$ and $b_\pi$ in \eqref{eq:box_policy_uncertainty_set} as
$a_\pi(t, x) = 1/(1 + \Gamma (1/p_\obs(t|x) - 1))$
and $b_\pi(t, x) = 1/(1 + \Gamma^{-1} (1/p_\obs(t|x) - 1))$.

\subsubsection*{f-divergence Constraint}
The f-divergence is a measure of dissimilarity between two distributions.
For probability mass function (or density function) $p(t)$ and $q(t)$, the f-divergence between them is defined as 
$    D_f[p||q] := \sumint f \left( \frac{p(t)}{q(t)} \right)q(t)\rd t$
for some convex function $f:\RR\to\RR$ satisfying $f(1)=0$. \footnote{We use $\sumint$ because our method can handle both discrete and continuous treatment spaces in the same way.}
It is a rich class of divergence between probability distributions that includes many divergences such as the Kullback–Leibler (KL) divergence. 
Using the f-divergence, we introduce a new class of sensitivity assumption \footnote{\citet{jin2022sensitivity} proposed a similar uncertainty set based on f-divergence, but their model is different from ours. See more discussion in the supplementary material. }
\begin{equation}
    \EE_{X, U} \left[D_f[p_\obs(t|X) || \pi_\base(t|X, U)]\right] \leq \gamma,
    \label{eq:f_policy_uncertainty_set}
\end{equation}
where the expectation $\EE_{X, U}$ is taken with respect to $p(x, u)$ in \eqref{eq:data_gen_base_confounded} regardless of the policy.
By encoding the proximity of $\pi_\base(t|x, u)$ from $p_\obs(t|x)$ using the f-divergence instead of the box constraints, we can construct a flexible class of the uncertainty sets.
As we see later, this formulation is computationally convenient, 
as it can be expressed as simple expectation
\begin{align}
    &\EE_{X, U} \left[D_f[p_\obs(t|X) || \pi_\base(t|X, U)]\right]  \\
    & =  \EE_{T\sim\pi_\base(\cdot|X, U)}\left[ f\left(\frac{p_\obs(T|X)}{\pi_\base(T|X, U)}\right) \right].
    \label{eq:expected_f_divergence}
\end{align}
Compared to the marginal sensitivity model \citep{tan2006distributional} that imposes uniform bounds on the odds ratio $p_\obs(t|x)/\pi_\base(t|x, u)$ for any $x, u$, our f-sensitivity model upper bounds its average deviations from 1 (i.e. the unconfounded case).
Thus, when the odds ratio is locally very far from 1 around some x, u but is close to 1 elsewhere, the f-sensitivity model is a more reasonable choice than the conventional model.

\subsubsection*{Conditional f-constraint}
The above f-sensitivity model can be extended to an even more general case by letting convex function $f$ depend on $T$ and $X$.
Let $f:\RR\times\cT\times\cX\to\RR$ be a function satisfying that $f_{t, x}(\cdot):=f(\cdot, t, x)$ is convex and $f_{t, x}(1)=0$ for any fixed $t\in\cT$ and $x\in\cX$.
Then, we introduce a conditional f-constraint defined as
\begin{equation}
    \EE_{T\sim\pi_\base(\cdot|X, U)}\left[ f_{T, X}\left(\frac{p_\obs(T|X)}{\pi_\base(T|X, U)}\right) \right] \leq \gamma.
\label{eq:conditional_f_policy_uncertainty_set}
\end{equation}
Clearly, this uncertainty set generalizes f-divergence constraint \eqref{eq:f_policy_uncertainty_set}.
Moreover, this model contains the box constraint \eqref{eq:box_policy_uncertainty_set} as a special case.
By choosing 
\begin{equation}
    f_{t, x}(\tilde w) 
    = \begin{cases}
        0 &\text{ if } a_{\tilde w}(t, x) \leq \tilde w \leq b_{\tilde w}(t, x) \\
        \infty & \text{otherwise}
    \end{cases}
    \label{eq:f_box_constraints}
\end{equation}
for $a_{\tilde w}(t, x) = p_\obs(t|x) / b_\pi(t, x)$
and $b_{\tilde w}(t, x) = p_\obs(t|x) / a_\pi(t, x)$,
it becomes equivalent to box constraints \eqref{eq:f_box_constraints}.
To provide a systematic treatment of different types of uncertainty sets and unify the theoretical analysis, we will hereafter assume that the uncertainty sets of inverse probability weights always have some conditional f-constraint unless otherwise specified.

\subsection{Relaxed Uncertainty Sets of Inverse Probability Weights}\label{sec:weight_undertainty_set}

Let us introduce re-parametrization $w(y, t, x)= \EE_{T\sim\pi_\base(\cdot|X, U)}\left[\frac{1}{\pi_\base(T|X, U)}|y, t, x\right]$ to obtain tractable uncertainty sets.
The uncertainty set for $\pi_\base$ described earlier requires reparametrized weight $w(y, t, x)$ to satisfy the following two conditions:
\begin{equation}
    \EE_{T\sim\pi_\base(\cdot|X, U)}\left[ f_{T, X}\left(\frac{p_\obs(T|X)}{\pi_\base(T|X, U)}\right) \right] \leq \gamma
\end{equation}
and 
\begin{equation}
w(y,t,x) = \EE_{T\sim\pi_\base(\cdot|X, U)}\left[\frac{1}{\pi_\base(T|X, U)}|y, t, x\right]
\end{equation}
for some proper policy $\pi_\base$.
In general, both conditions are intractable.  Therefore, we will consider the relaxation of these conditions.

\subsubsection*{Relaxation of the Conditional f-constraint}
Let us first consider conditional f-constraint \eqref{eq:conditional_f_policy_uncertainty_set}.
With Jensen's inequality, we have 
\begin{align*}
    &\EE_{T\sim\pi_\base(\cdot|X, U)}\left[ f_{T, X}\left(\frac{p_\obs(T|X)}{\pi_\base(T|X, U)}\right) \right]\\
    &\geq \EE_{T\sim\pi_\base(\cdot|X, U)}\left[ f_{T, X}\left(
        \EE\left[\frac{p_\obs(T|X)}{\pi_\base(T|X, U)}|Y, T, X\right]
    \right) \right].
\end{align*}
Therefore, we can relax the condition \eqref{eq:conditional_f_policy_uncertainty_set} to
\begin{equation}
    \EE_\obs\left[ f_{T, X}\left( p_\obs(T|X)w(Y, T, X) \right) \right] \leq \gamma.
    \label{eq:relaxed_f_constraints}
\end{equation}


\subsubsection*{Relaxation of the Distributional Constraints}
Now we consider the relaxation of the second constraint, i.e.,  there exists proper underlying distribution $\pi_\base(t|x, u)$ that yields $w(y, t, x)$.
This constraint is usually relaxed to a one-dimensional linear constraint in previous work. Here, we present a tighter relaxation using infinite-dimensional linear constraints  called \emph{conditional moment constraints}. 
In the following, we present these two types of relaxation.

First, we describe the simple relaxation adopted in previous work \citep{zhao2019sensitivity, kallus2018confounding, kallus2021minimax}.
When action space $\cT$ is discrete and finite, the distributional constraint can be relaxed to
\begin{equation}
    \EE_\obs[\bbmone_{T=t} w(Y, T, X)] = 1 \text{ for any } t\in\cT
    \label{eq:zsb_constraint}
\end{equation}
and
\begin{equation}
  w(y, t, x) \geq 0 \text{ for any } y\in\cY, t\in\cT, \text{ and }x\in\cX
    \label{eq:nonnegativity_constraint}
\end{equation}
where $\bbmone$ denotes the indicator function for event $A$.
Following the naming convention in \citet{dorn2022sharp}, we will call constraint \eqref{eq:zsb_constraint} the ZSB constraint after the authors of \citet{zhao2019sensitivity}. 

Combining the above with the relaxation of the conditional f-constraint as in \eqref{eq:relaxed_f_constraints}, the following uncertainty set with the ZSB constraint can be defined:
\begin{equation}
    \cW_{f_{t, x}}^\ZSB := \left\{
    w \geq 0:
    \begin{array}{c}
        \EE_{X, U} \left[f_{T, X}\left(p_\obs(T|X)w\right)\right] \leq \gamma, \\
        \EE_\obs[\bbmone_{T=t}w] = 1 \text{ for any } t\in\cT
    \end{array}
    \right\}.
    \label{eq:zsb_uncertainty_set}
\end{equation}
For this uncertainty set, the associated lower bound is 
\begin{equation}
V_\textinf^\ZSB(\pi) := \inf_{w\in\cW_{f_{t, x}}^\ZSB}\EE_\obs\left[w(Y, T, X)\pi(T|X)Y \right].
\label{eq:v_inf_zsb}
\end{equation}

Now, we discuss the other relaxation based on conditional moment constraints.
It can be shown that it is possible to relax the distributional constraints to
\begin{equation}
    \EE_\obs[w(Y, T, X)|T=t, X=x] \cdot p_\obs(t|x) = 1
    \label{eq:conditional_moment_constraints}
\end{equation}
for any $t\in\cT$ and $x\in\cX$ plus a non-negativity constraint \eqref{eq:nonnegativity_constraint}.
The derivation of the above constraints is deferred to the supplementary material.
Note that, unlike the ZSB constraints, conditional moment constraints do not require that action space $\cT$ is discrete and finite.
We can again combine these conditional moment constraints (CMC) with relaxed conditional f-constraint \eqref{eq:relaxed_f_constraints} to obtain
\begin{equation}
    \cW_{f_{t, x}}^\CMC:= \left\{
    w \geq 0:
    \begin{array}{c}
        \EE_{X, U} \left[f_{T, X}\left(p_\obs(T|X)w\right)\right] \leq \gamma, \\
        \EE_\obs[w|t, x] \cdot p_\obs(t|x) = 1\\
        \text{ for any } t\in\cT \text{ and } x\in\cX
    \end{array}
    \right\}
    \label{eq:conditional_moment_uncertainty_set}
\end{equation}
and its corresponding lower bound
\begin{equation}
V_\textinf^\CMC(\pi):= \inf_{w\in\cW_{f_{t, x}}^\CMC}\EE_\obs\left[w(Y, T, X)\pi(T|X)Y \right]. \label{eq:v_inf_conditional_moment_constraints}\\
\end{equation}

Indeed, it can be shown that the conditional moment constraints are strictly sharper than the ZSB constraints as discussed in the supplementary material. 
Therefore, one can naturally obtain 
$\cW_{f_{t, x}} \subseteq \cW^\CMC_{f_{t, x}}\subseteq \cW^\ZSB_{f_{t, x}}$ and 
$V_\textinf \geq V_\textinf^\CMC \geq V_\textinf^\ZSB$.


\section{KERNEL CONDITIONAL MOMENT CONSTRAINTS}\label{chap:kernel_conditional_moment_constraints}

In this section, we introduce a empirical approximation of the conditional moment constraints using the kernel method \citep{scholkopf2002learning}.
The key idea is to approximate conditional moment $\EE_\obs[w(Y, T, X)p_\obs(T|X)|T=t, X=x]$ using the kernel ridge regression.
By constraining the estimated conditional moment to be close to $1$, we impose the conditional moment constraints to the empirical weight $\bsw=(w_1, \ldots, w_n)^T = \left(w(Y_1, T_1, X_1), \ldots, w(Y_n, T_n, X_n)\right)^T$.
In the following, we introduce three types of kernel conditional moment constraints (KCMC), namely, Gaussian process constraints, low-rank Gaussian process constraints, and low-rank hard constraints.
These kernel conditional moment constraints are all convex constraints, and they enable us to define a tractable uncertainty set and an associated estimator of lower bound as
\begin{equation}
\hat\cW^\KCMC_{f_{t, x}}
= \left\{w\geq 0: 
\begin{array}{c}
    \hat\EE_n \left[f_{T, X}\left( p_\obs(T|X)w\right)\right] \leq \gamma, \\
    w(y, t, x)\text{ satisfies the KCMC}
\end{array}
\right\}
    \label{eq:kernel_empirical_uncertainty_set}
\end{equation}
and
\begin{equation}
    \hat V_\text{inf}^\KCMC := \min_{\bsw\in\hat\cW^\KCMC_{f_{t, x}}}\hat\EE_n[w(Y, T, X)\pi(T|X)Y].
    \label{eq:v_inf_kernel_empirical}
\end{equation}

\subsection{Gaussian Process Constraints}
In this subsection, we derive our first kernel conditional moment constraints, which we call the Gaussian process constraints.
We begin by formally formulating the idea of using the kernel ridge regression for the conditional moment constraints and then motivate its interpretation as a Gaussian process to obtain reasonable kernel conditional moment constraints.

\subsubsection*{Estimation of Conditional Expectation by Kernel Ridge Regression}
Let us introduce kernel $k:(\cT\times\cX) \times (\cT\times\cX)\to \RR$ with associated reproducing kernel Hilbert space (RKHS) $\cH$ of functions $h:\cT\times\cX\to\RR$, inner product $\langle\cdot, \cdot\rangle_\cH$, and norm $\|\cdot\|_\cH$.
Let us further introduce re-parametrization 
$$e(y, t, x) := p_\obs(t|x)w(y, t, x) - 1,$$
so that conditional moment constraints \eqref{eq:conditional_moment_constraints} can be written as 
$$\EE_\obs[e(Y, T, X)|T=t, X=x] = 0$$
for any $t\in\cT$ and $x\in\cX$.

Then, using the kernel ridge regression, one can estimate conditional expectation $g(t, x):=\EE_\obs[e(Y, T, X)|T=t, X=x]$ as 
\begin{equation*}
     \hat g = \arg\min_{g\in\cH} \hat\EE_n |g(T, X) - e(Y, T, X)|^2 + \sigma^2\|g\|_\cH^2.
\end{equation*}
for some $\sigma^2>0$.
The above problem yields an analytical solution, and we can get
\[
  \hat \bsg = K(K+\sigma^2 I_n)^{-1}\bse
\]
for $\hat \bsg := \left(\hat g(T_1, X_1), \ldots, \hat g(T_n, X_n)\right)^T$ and $\bse := \left(e(Y_1, T_1, X_1), \ldots, e(Y_n, T_n, X_n)\right)^T$.
Here, $K$ denotes the kernel matrix such that $K_{i,j} = k((T_i, X_i), (T_j, X_j))$ and $I_n$ is the identity matrix of order $n$.
To impose the conditional moment constraints, we can consider the constraint
$\hat \bsg \approx \bszero$.
Note that we cannot impose the exact equality, i.e., 
$K(K+\sigma^2 I_n)^{-1}\bse = \bszero$,
as this leads to only solution $\bsw = \bsone / \bsp_{\obs, T|X}$,
\footnote{Here, $\bsp_{\obs, T|X}:=\left(p_\obs(T_1|X_1), \ldots, p_\obs(T_n|X_n)\right)^T$ and $\bsone:=(1, \ldots, 1)\in\RR^n$. The division is taken element-wise.}
whose resulting policy value estimator is exactly the confounded IPW estimator.
Therefore, we need to find a reasonable way to impose this close-to-zero constraint.

\subsubsection*{Construction of Uncertainty Set With a Credible Set of Gaussian Process}
Now, by interpreting the kernel ridge regression as the Gaussian processes regression \citep{rasmussen2003gaussian}, we make an intuitive association of the close-to-zero constraint with the credible set in Bayesian statistics.
Consider the following Gaussian process regression model \citep{rasmussen2003gaussian}:
\begin{equation}
\begin{array}{l}
    g \sim \mathcal{GP}(0, k(\cdot, \cdot)), \;
    \bse = \bsg + \varepsilon, \;
    \varepsilon \sim\mathcal{N}(\bszero, \sigma^2I_n),
\end{array}
\label{eq:gaussian_process_regression_model}
\end{equation}
where $\bsg := \left(g(T_1, X_1), \ldots, g(T_n, X_n)\right)$.
$\mathcal{GP}$ and $\mathcal{N}$ indicate the Gaussian process and the multivariate normal distribution with the specified mean and covariance parameters. 
Under this model, the posterior of $\bsg$ given $\bse$ is a multivariate normal distribution with mean and variance $\mu_{\bsg|\bse} = K(K + \sigma^2 I_n)^{-1} \bse = \hat \bsg$ and $\Sigma_{\bsg|\bse} = K - K(K + \sigma^2 \mathrm{I})^{-1}K$. 
For the posterior of $\bsg$, we now define a $(1 - \alpha)$ credible set.
As the posterior of $\bsg$ given $\bse$ is a multivariate normal distribution, we can take highest posterior density set
\begin{equation}
C_{\bsg|\bse} = \{\bsg : (\bsg - \mu_{\bsg|\bse})^T \Sigma_{\bsg|\bse}^{-1} (\bsg - \mu_{\bsg|\bse}) \leq \chi^2_{n}(1 - \alpha)\}.
\label{eq:gp_credible_set}
\end{equation}
as the credible set.
Here, $\chi^2_n(1-\alpha)$ denotes the $(1-\alpha)$-percentile of $\chi^2$ distribution with degrees of freedom $n$.
Now, as we want estimated conditional moment function $g(t, x)$ to be close to zero, we can require that $\bsg=\bszero$ to be included in the $(1-\alpha)$ credible set and obtain the Gaussian process kernel conditional moment constraints as follows:
\begin{align}
\begin{split}
    \KCMC_\text{GP}
    &\stackrel{\text{def}}{\Leftrightarrow} \bszero \in  C_{\bsg|\bse} \\
    &\Leftrightarrow \bse^T M_\mathrm{GP} \bse \leq \chi^2_{n}(1 - \alpha) \\
\end{split}
\label{eq:kernel_conditional_moment_constraints_gaussian_process}
\end{align}
where $M_\text{GP}:=(K+\sigma^2I_n)^{-1}K(K - K(K+\sigma^2I_n)^{-1}K)^{-1}K(K+\sigma^2I_n)^{-1}$.
As $\bse=\bsp_{\obs, T|X} \odot \bsw - \bsone$,\footnote{The element-wise product operator is denoted by $\odot$.}
this is a quadratic constraint for $\bsw$, which makes it possible to compute the associated lower bound by solving convex programming.

\subsection{Low-rank Gaussian Process Constraints} 

A practical downside of the above formulation is the linear growth of the number of constraints to sample size $n$.
Additionally, the calculation of the matrix inverse takes $\cO(n^3)$, which can be impractically slow for a large sample size.
To mitigate these issues, we propose the use of low-rank approximation to the kernel matrix.

Let's consider spectral decomposition of the kernel matrix, $K=V\Lambda V^T$ with orthonormal matrix $V\in\RR^{n\times n}$ and diagonal matrix $\Lambda = \mathrm{diag}(\lambda_1, \ldots, \lambda_n)\in \RR^{n\times n}$ satisfying $\lambda_1 \geq \lambda_2 \geq \ldots \geq \lambda_n \geq 0$.
By truncating the spectrum after the first $D$ dimensions, we approximate the kernel matrix as $K\approx \tilde K := \tilde V \tilde \Lambda {\tilde V}^T$, where $\tilde \Lambda = \mathrm{diag}(\lambda_1, \ldots, \lambda_D)\in \RR^{D\times D}$ and $\tilde V \in \RR^{n\times D}$ is the first $D$ columns of matrix $V$.
Then, we can substitute $\tilde K$ in place of $K$ to obtain the approximate posterior of $\bsg$, which is a  multivariate normal distribution with mean and variance $\tilde\mu_{\bsg|\bse} = \tilde K( \tilde K + \sigma^2 I_n)^{-1} \bse$ and $\tilde\Sigma_{\bsg|\bse} = \tilde K - \tilde K(\tilde K + \sigma^2 \mathrm{I})^{-1}\tilde K$.
Thus, we can analogously define the credible set and the kernel conditional moment constraints as
\begin{equation}
\tilde C_{\bsg|\bse} = \{\bsg : (\bsg - \tilde\mu_{\bsg|\bse})^T \tilde\Sigma_{\bsg|\bse}^{-1} (\bsg - \tilde\mu_{\bsg|\bse}) \leq \chi^2_{D}(1 - \alpha)\}.
\label{eq:low_rank_gq_credible_set}
\end{equation}
and
\begin{align}
\begin{split}
    \KCMC_\text{low-rank GP}
    &\stackrel{\text{def}}{\Leftrightarrow} \bszero \in \tilde C_{\bsg|\bse} \\
    &\Leftrightarrow \bse^T M_\text{low-rank GP} \bse \leq \chi^2_{D}(1 - \alpha) \\
\end{split}
\label{eq:kernel_conditional_moment_constraints_low_rank_gaussian_process}
\end{align}
where $M_\text{low-rank GP}:=(\tilde K+\sigma^2I_n)^{-1}\tilde K(\tilde K - \tilde K(\tilde K+\sigma^2I_n)^{-1}\tilde K)^{-1}\tilde K(\tilde K+\sigma^2I_n)^{-1}$.

One big difference of the credible set of original Gaussian process \eqref{eq:gp_credible_set} and low-rank Gaussian process \eqref{eq:low_rank_gq_credible_set} is the degree of freedom of the $\chi^2$ distribution.
We will discuss the reason for it in the supplementary material, but this is essentially due to the fact that the distribution of the $\bsg$ is restricted to the $D$-dimensional subspace spanned by columns of $\tilde V$.

\subsection{Low-rank Hard Constraints}

Indeed, when we use the low-rank approximation, it is possible to impose the hard constraint to the solution of the low-rank kernel ridge regression as $\hat \bsg = \tilde K (\tilde K + \sigma^2 I_n) \bse = 0$ without suffering from the issue of feasibility set reducing to singleton $\{\bsone/\bsp_{\obs, T|X}\}$. 

Using the interpretation of the spectral decomposition as the kernel principal component analysis (PCA) \citep{scholkopf1997kernel}, the above hard constraints can be reformulated as the following empirical orthogonality condition 
\[
    \hat \EE_n[e(Y, T, X)\bsphi^\KPCA(T, X)] = \bszero
\]
of error $e(Y, T, X)$ and kernel principal components 
$\bsphi^\KPCA(t, x):= \left( \phi_1^\KPCA, \ldots, \phi_D^\KPCA \right)^T$
obtained of the kernel PCA applied to $\{T_i, X_i\}_{i=1}^n$.
This condition is obviously a relaxation of the conditional moment constraints \eqref{eq:conditional_moment_constraints}, as they imply orthogonality condition $\EE_\obs[e(Y, T, X) \psi(T, X)]=0$ for any $\psi(t, x)$. 
By slightly generalizing the above condition, we introduce the following low-rank hard constraints.
\begin{equation}
    \KCMC_\text{low-rank orth}
    \stackrel{\text{def}}{\Leftrightarrow} \hat \EE_n[e(Y, T, X)\bspsi(T, X)] = \bszero,
    \label{eq:kernel_conditional_moment_constraints_low_rank_orthogonality}
\end{equation}
for orthogonal function class $\bspsi(t, x):=\left(\psi_1, \ldots, \psi_D\right)^T$.

In our theoretical analysis, constraints \eqref{eq:kernel_conditional_moment_constraints_low_rank_orthogonality} provide the most suitable estimator for our studies.
This is because choosing $\{\phi_d\}_{d=1}^D$ independently from the samples enables us to decouple the discussion on the goodness of the constraints and the goodness of the policy value estimator.

In practice, these low-rank estimators of the lower bound have a trade-off between sharpness and credibility. Low-rank Gaussian process constraints \eqref{eq:kernel_conditional_moment_constraints_low_rank_gaussian_process}
only impose soft quadratic constraints and can sometimes produce a too pessimistic lower bound.
On the other hand, low-rank hard constraints \eqref{eq:kernel_conditional_moment_constraints_low_rank_orthogonality} 
produce a tighter lower bound but require careful selection of the number of constraints, as excessively strong constraints would lead to a too optimistic estimate. 

\section{THEORETICAL ANALYSIS}\label{chap:theoretical_analysis}

In this section, we study the property of the kernel conditional moment constraints for confounding robust inference.
For the convenience of the analysis, we only consider low-rank  orthogonality condition \eqref{eq:kernel_conditional_moment_constraints_low_rank_orthogonality} as the kernel conditional moment constraints in this section.
Thus, we will focus on the property of the following population lower bound 
\begin{equation}
    V_\textinf^\KCMC(\pi) = \inf_{w\in\cW^\KCMC_{f_{t, x}}}\EE_\obs[w(Y, T, X)\pi(T|X)Y]
    \label{eq:v_inf_kernel_conditional_moment_constraints}
\end{equation}
for
\begin{equation}
    \cW^\KCMC_{f_{t, x}} = \left\{
    w\geq 0:
    \begin{array}{c}
         \EE_\obs[f_{T, X}(wp_\obs(T|X))] \leq \gamma, \\ 
         \EE_\obs[\left(wp_\obs(T|X) - 1\right)\bspsi] = \bszero  \\
    \end{array}
    \right\}
\end{equation}
and its empirical version 
\begin{equation}
    \hat V_\textinf^\KCMC(\pi) = \inf_{\bsw\in\hat\cW^\KCMC_{f_{t, x}}}\hat\EE_n[w(Y, T, X)\pi(T|X)Y]
    \label{eq:empirical_v_inf_kernel_conditional_moment_constraints}
\end{equation}
for
\begin{equation}
    \hat\cW^\KCMC_{f_{t, x}} = \left\{
    w \geq 0:
    \begin{array}{c}
         \hat\EE_n[f_{T, X}(wp_\obs(T|X))] \leq \gamma, \\
         \hat\EE_n[\left(wp_\obs(T|X) - 1\right)\bspsi] = \bszero  \\
    \end{array}
    \right\}.
\end{equation}
In the theoretical analysis, we characterize the properties of our estimator with the dual solution of our original problem.
We first analyze specification error of the kernel conditional moment constraints $\left|V^\CMC_\textinf - V^\KCMC_\textinf\right|$ and provide a condition on orthogonal function class $\{\psi_d(t, x)\}_{d=1}^D$ under which the specification error becomes zero.
Additionally, we study empirical estimator $\hat V_\textinf^\KCMC$ and prove consistency guarantees for policy evaluation and learning.

Due to the space limitation, we will defer the proofs and the precise assumptions for the following statements to the supplementary material.

\subsection{Specification Error}

First, we present the condition under which the specification error of estimator $V_\textinf^\KCMC(\pi)$ becomes zero for policy $\pi$ such that $\left| V_\textinf^\KCMC(\pi) - V_\textinf^\CMC(\pi) \right| = 0$.

\begin{theorem}[No specification error]\label{th:no_specification_error}
Let $\eta^*_\CMC:\cT\times\cX\to\RR$  be the solution to the dual problem of \eqref{eq:v_inf_conditional_moment_constraints}. 
Then $V_\textinf^\KCMC(\pi)$ has zero specification error if
\begin{equation}
    \eta_\CMC^* \in \mathrm{span}\left(\{\psi_1, \ldots, \psi_D\}\right).
    \label{eq:eta_cmc_in_kernel_subspace}
\end{equation}
\end{theorem}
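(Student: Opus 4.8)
The plan is to compare the two programs through their Lagrangian duals, exploiting the fact that the only structural difference between $\cW^\CMC_{f_{t, x}}$ and $\cW^\KCMC_{f_{t, x}}$ is how the conditional moment constraint is enforced on $e(Y,T,X) := w\,p_\obs(T|X) - 1$: the CMC set requires $\EE_\obs[e\mid t,x]=0$ for every $(t,x)$, which is equivalent to testing $e$ against \emph{every} (bounded measurable) function $\eta(t,x)$ via $\EE_\obs[e\,\eta(T,X)]=0$, whereas the KCMC set tests $e$ only against the $D$ functions $\psi_1,\ldots,\psi_D$. Since both programs share the same objective $\EE_\obs[w\,\pi(T|X)Y]$, the same non-negativity constraint, and the same f-constraint, the specification error is entirely governed by this difference in the moment-testing directions.

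First I would dualize. Writing $\beta\geq 0$ for the multiplier of the f-constraint and a dual function $\eta(t,x)$ for the moment constraint, the CMC Lagrangian is
\[
L(w,\beta,\eta) = \EE_\obs[w\,\pi(T|X)Y] + \beta\left(\EE_\obs[f_{T, X}(w\,p_\obs(T|X))] - \gamma\right) - \EE_\obs[(w\,p_\obs(T|X)-1)\,\eta(T,X)],
\]
with dual objective $g(\beta,\eta) := \inf_{w\geq 0} L(w,\beta,\eta)$. Because every term is an expectation of a pointwise function of $w$, this infimum decouples over $(y,t,x)$ and is evaluated through the convex conjugate of $f_{t, x}$; the precise closed form is not needed, only that $\eta$ enters $g$ linearly (through $\EE_\obs[\eta]$ and the conjugate term). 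The CMC dual is then $\sup_{\beta\geq 0,\,\eta}\,g(\beta,\eta)$ with $\eta$ unrestricted, while the KCMC dual is the \emph{same} supremum but with $\eta$ confined to $\mathrm{span}(\{\psi_1,\ldots,\psi_D\})$, since dualizing the $D$ equalities $\EE_\obs[e\,\bspsi]=\bszero$ yields multipliers $\lambda_1,\ldots,\lambda_D$ and an effective dual function $\eta=\sum_{d}\lambda_d\psi_d$.

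Next I would invoke strong duality for both programs, which holds under the constraint qualification deferred to the supplement: the point $w\equiv 1/p_\obs(T|X)$ gives $e\equiv 0$, satisfying all (affine) moment constraints exactly and the f-constraint with strict slack $f_{t, x}(1)=0<\gamma$, so Slater's condition applies. Strong duality gives $V^\CMC_\textinf=\sup_{\beta\geq 0,\eta}g(\beta,\eta)$ and $V^\KCMC_\textinf=\sup_{\beta\geq 0,\,\eta\in\mathrm{span}}g(\beta,\eta)$. Restricting the supremum to a subset shows $V^\KCMC_\textinf\leq V^\CMC_\textinf$, recovering the relaxation inequality. For the reverse, let $(\beta^*,\eta^*_\CMC)$ attain the CMC dual. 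By the hypothesis $\eta^*_\CMC\in\mathrm{span}(\{\psi_1,\ldots,\psi_D\})$, this pair is feasible for the KCMC dual, so $V^\KCMC_\textinf\geq g(\beta^*,\eta^*_\CMC)=V^\CMC_\textinf$. Combining the two inequalities gives $V^\KCMC_\textinf(\pi)=V^\CMC_\textinf(\pi)$, i.e.\ zero specification error.

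The main obstacle is the rigorous justification of strong duality and, crucially, dual attainment for the CMC program, which carries an infinite family of equality constraints indexed by $(t,x)$; without attainment the very statement $\eta^*_\CMC\in\mathrm{span}(\{\psi_d\})$ is not well-posed. I would handle this by placing $w$ in an appropriate $L^p(p_\obs)$ space, verifying the Slater point above, and appealing to a strong-duality theorem for convex programs in Banach spaces to obtain both zero gap and a maximizing $\eta^*_\CMC$. A secondary point is confirming that the pointwise inner minimization over $w\geq 0$ is finite and attained, which follows from the convexity and coercivity of $f_{t, x}$; once these analytic facts are secured, the span-membership argument above is immediate.
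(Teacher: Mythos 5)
Your proposal is correct and follows essentially the same route as the paper's own proof: dualize both programs, observe that the KCMC dual is exactly the CMC dual with $\eta$ restricted to $\mathrm{span}(\{\psi_1,\ldots,\psi_D\})$, use the hypothesis to conclude the restricted supremum attains the unrestricted optimum, and invoke strong duality (via Slater, since the uncertainty sets have nonempty relative interior) to transfer the equality back to the primal values. Your explicit Slater point $w\equiv 1/p_\obs(T|X)$ and your remark that dual attainment is needed for the hypothesis to be well-posed are slightly more careful than the paper's treatment, but the argument is the same.
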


Using this lemma, it is possible to prove that the previous sharp estimator for box constraints by \citet{dorn2022sharp} is a special case of estimator that uses our kernel conditional moment constraints.
They identified the analytical form for optimal orthogonal function class $\{\psi_1\} = \{\eta^*_\CMC\}$ to derive a one-dimensional linear constraint to impose the conditional moment constraints, which are originally infinite-dimensional.
More details on their estimator are discussed in the supplementary material.

\subsection{Consistency of Policy Evaluation and Learning}

Now, we study empirical estimator $\hat V_\textinf^\KCMC$ and provide consistency guarantees for policy evaluation and learning in the case of finite-dimensional concave policy class.
We prove both consistency results by a reduction of our problem to the M-estimation \citep{van2000empirical} using the dual formulation.

Let us define $L_{\theta, \pi}:\cY\times\cT\times\cX\to\RR$ as the loss function of the dual objective of \eqref{eq:v_inf_kernel_conditional_moment_constraints} 
so that the dual problem becomes
$\max_{\theta\in\Theta}\EE[-L_{\theta, \pi}(Y, T, X)]$
for dual parameter $\theta\in\Theta$.
Additionally, let us introduce the following assumption:
\begin{assumption}[Regularity of loss function]\label{assum:regular_loss}
Loss function $\ell_\theta:\cY\times\cT\times\cX\to\RR$ satisfies
1) $\theta\mapsto\ell_{\theta}$ is continuous,
2) $\EE\left|\ell_{\theta}\right|<\infty$ for any $\theta\in\Theta$,
3) $\theta_0\in\arg\min_{\theta\in\Theta}\EE[\ell_{\theta}]$ is unique,
and 4) $\EE[G_\varepsilon]<\infty$ for $G_\varepsilon:=\sup_{\theta\in\Theta:\ \|\theta - \theta_0\| \leq \varepsilon} \left| \ell_{\theta} \right|$ for some $\varepsilon > 0$.
\end{assumption}
With this assumption, we can immediately show the consistency guarantee for policy evaluation:
\begin{theorem}[Consistency of policy evaluation]\label{th:policy_evaluation_consistency}
For fixed policy $\pi$, if $\ell_\theta:=L_{\theta, \pi}$ satisfies Assumption \ref{assum:regular_loss}, then we have $\hat V_\textinf^\KCMC(\pi)  \pto V_\textinf^\KCMC(\pi)$.
\end{theorem}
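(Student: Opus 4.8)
The plan is to reduce the value-consistency claim to a standard M-estimation argument through Lagrangian duality, as the phrase ``reduction \ldots to the M-estimation'' already signals. First I would establish strong duality for the convex program defining $V_\textinf^\KCMC(\pi)$. The primal minimizes the linear objective $\EE_\obs[w\pi(T|X)Y]$ over $w\geq 0$ subject to one convex inequality (the relaxed f-constraint) and the $D$ linear equalities $\EE_\obs[(wp_\obs(T|X)-1)\bspsi]=\bszero$. Introducing a multiplier $\lambda\geq 0$ for the f-constraint and $\bseta\in\RR^D$ for the equalities, the Lagrangian decomposes pointwise in $(y,t,x)$, and minimizing over $w\geq 0$ at each point produces, via the convex conjugate $f^*_{t,x}$, exactly the pointwise loss $-L_{\theta,\pi}(y,t,x)$ with $\theta=(\lambda,\bseta)$. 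Combining the interchangeability of pointwise minimization and integration for this integral functional with Slater's condition --- supplied by the strictly feasible weight $w\equiv\bsone/\bsp_{\obs,T|X}$, which leaves the f-constraint slack ($f_{t,x}(1)=0<\gamma$) and satisfies every equality exactly --- I obtain
\[
V_\textinf^\KCMC(\pi) = -\min_{\theta\in\Theta}\EE[\ell_\theta], \qquad \hat V_\textinf^\KCMC(\pi) = -\min_{\theta\in\Theta}\hat\EE_n[\ell_\theta],
\]
the empirical identity following by the same duality applied to the finite-dimensional empirical program, which again admits $\bsone/\bsp_{\obs,T|X}$ as a Slater point.

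With the problem in extremum-estimator form, the second step applies a standard M-estimation consistency theorem \citep{van2000empirical}. Assumption~\ref{assum:regular_loss} supplies precisely the needed ingredients: continuity of $\theta\mapsto\ell_\theta$ (part 1), pointwise integrability (part 2), a unique population minimizer $\theta_0$ (part 3), and an integrable local envelope $G_\varepsilon$ dominating $|\ell_\theta|$ on a neighborhood of $\theta_0$ (part 4). The envelope together with continuity yields a local uniform law of large numbers $\sup_{\|\theta-\theta_0\|\leq\varepsilon}|\hat\EE_n[\ell_\theta]-\EE[\ell_\theta]|\pto 0$ via a Glivenko--Cantelli/bracketing argument, while uniqueness and continuity give the well-separation of $\theta_0$. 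Together these imply consistency of the empirical minimizer, $\hat\theta_n\pto\theta_0$.

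Finally I would upgrade minimizer consistency to value consistency. For the upper bound, feasibility of $\theta_0$ gives $\min_\theta\hat\EE_n[\ell_\theta]\leq\hat\EE_n[\ell_{\theta_0}]$, whose right side converges to $\EE[\ell_{\theta_0}]$ by the ordinary law of large numbers at the fixed point $\theta_0$. For the lower bound, since $\hat\theta_n\pto\theta_0$ confines $\hat\theta_n$ to the $\varepsilon$-ball with probability tending to one, the local uniform deviation and the minimizing property $\EE[\ell_{\hat\theta_n}]\geq\EE[\ell_{\theta_0}]$ give $\hat\EE_n[\ell_{\hat\theta_n}]\geq\EE[\ell_{\theta_0}]-o_P(1)$. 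Hence $\min_\theta\hat\EE_n[\ell_\theta]\pto\EE[\ell_{\theta_0}]=\min_\theta\EE[\ell_\theta]$; negating and invoking the duality identities of the first step yields $\hat V_\textinf^\KCMC(\pi)\pto V_\textinf^\KCMC(\pi)$.

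I expect the main obstacle to lie in the first step rather than in the asymptotics. Justifying strong duality requires verifying that the integrand is a normal integrand so that the pointwise infimum over $w$ may be exchanged with the expectation, producing the additively separable dual objective $\EE[-L_{\theta,\pi}]$, and confirming that the constraint qualification rules out a duality gap in both the population and empirical problems. Once this extremum-estimator representation is in hand, Assumption~\ref{assum:regular_loss} is tailored so that the remaining M-estimation argument is essentially routine.
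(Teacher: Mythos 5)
Your proposal matches the paper's proof essentially step for step: strong duality (via Slater's condition) turns both the population and empirical lower bounds into negated minima of the expected/empirical dual loss, van de Geer's M-estimation consistency lemma gives $\hat\theta_n \pto \theta_0$ under Assumption \ref{assum:regular_loss}, and local uniform convergence on the $\varepsilon$-ball upgrades this to value consistency. The only nuance worth noting is that the localization in the M-estimation step is driven by convexity of $\theta \mapsto \ell_\theta$ (automatic here, since the loss is the perspective of $f^*_{t,x}$ arising from the concave dual), not by well-separation of $\theta_0$ alone, which with a merely local envelope would not suffice.
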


The above theorem can be extended to policy learning,
by considering joint parameter space $\Theta':=\Theta\times\mathcal{B}$ of $\theta':=(\theta, \beta)$ for policy class $\{\pi_\beta(t|x):\ \beta\in\mathcal{B}\}$ and
introducing joint loss function $L'_{\theta'} := L_{\theta, \pi_\beta}$ so that 
$\max_{\beta\in\mathcal{B}}V_\textinf^\KCMC(\pi_\beta)
= \max_{\theta'\in\Theta'}\EE[-L'_{\theta'}(Y, T, X)]$.
\begin{theorem}[Consistency of concave policy learning]\label{th:policy_learning_consistency}
Assume the policy class is concave so that $\beta\mapsto\pi_\beta(t|x)y$ is concave.
Define $\beta_0 \in \arg\max_{\beta\in\mathcal{B}} V_\textinf^\KCMC(\pi_\beta)$ 
and its estimator $\hat\beta \in \arg\max_{\beta\in\mathcal{B}} \hat V_\textinf^\KCMC(\pi_\beta)$.
If $L'_{\theta'}$ satisfies Assumption \ref{assum:regular_loss}, 
then, we have
$\hat V_\textinf^\KCMC(\pi_{\hat\beta}) \pto V_\textinf^\KCMC(\pi_{\beta_0})$.
\end{theorem}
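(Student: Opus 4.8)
The plan is to recast the learning problem as a single M-estimation over the joint parameter $\theta'=(\theta,\beta)$ and to transport the value-convergence argument behind Theorem~\ref{th:policy_evaluation_consistency} to this enlarged parameter space; the concavity hypothesis is precisely what supplies the extra structure needed to make the \emph{supremum} of the empirical dual objective converge, rather than only a fixed dual objective.

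First I would set up the reformulation. By the strong duality that defines $L_{\theta,\pi}$, the inner worst-case problem satisfies $V_\textinf^\KCMC(\pi_\beta)=\max_{\theta\in\Theta}\EE[-L_{\theta,\pi_\beta}]$, and the same duality applied to the empirical program gives $\hat V_\textinf^\KCMC(\pi_\beta)=\max_{\theta\in\Theta}\hat\EE_n[-L_{\theta,\pi_\beta}]$. Taking the outer maximum over $\beta$ and using $L'_{\theta'}=L_{\theta,\pi_\beta}$, both target quantities become optimal values of a single M-estimation problem:
\begin{align}
V_\textinf^\KCMC(\pi_{\beta_0}) &= \max_{\theta'\in\Theta'}\EE[-L'_{\theta'}], \\
\hat V_\textinf^\KCMC(\pi_{\hat\beta}) &= \max_{\theta'\in\Theta'}\hat\EE_n[-L'_{\theta'}].
\end{align}
Writing $M(\theta'):=\EE[-L'_{\theta'}]$ and $M_n(\theta'):=\hat\EE_n[-L'_{\theta'}]$, it then suffices to prove $\max_{\theta'}M_n(\theta')\pto\max_{\theta'}M(\theta')$, where the population maximizer $\theta'_0=(\theta_0,\beta_0)$ is unique by Assumption~\ref{assum:regular_loss}.

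Next I would exploit concavity. The assumption that $\beta\mapsto\pi_\beta(t|x)y$ is concave makes the dual integrand $-L'_{\theta'}$ jointly concave in $\theta'$ for each $(y,t,x)$: concavity in the dual block $\theta$ comes from the convex-conjugate structure of the f-constraint's dual, while concavity in $\beta$ holds because $\beta$ enters the dual objective only through the term built from $\pi_\beta(t|x)y$. Consequently $M$ and $M_n$ are concave on $\Theta'$. This is the decisive ingredient beyond Theorem~\ref{th:policy_evaluation_consistency}: for concave objectives the pointwise weak law of large numbers, valid at each fixed $\theta'$ by the integrability part of Assumption~\ref{assum:regular_loss}, upgrades automatically to uniform convergence on compact neighborhoods, so the purely local envelope of Assumption~\ref{assum:regular_loss} is enough to control $M_n-M$ near $\theta'_0$ despite the absence of a global envelope. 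Combining continuity, pointwise convergence, joint concavity, and uniqueness of $\theta'_0$ then yields $\hat\theta'_n\pto\theta'_0$ by the standard convexity argument for M-estimation \citep{van2000empirical}.

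Finally I would close with a sandwich. The lower bound $M_n(\hat\theta'_n)=\max_{\theta'}M_n(\theta')\geq M_n(\theta'_0)\pto M(\theta'_0)$ follows from the law of large numbers at the single point $\theta'_0$; the matching upper bound follows from $\hat\theta'_n\pto\theta'_0$, continuity of $M$, and the local uniform convergence $\sup_{\|\theta'-\theta'_0\|\leq\varepsilon}|M_n-M|\pto 0$ established above, giving $\hat V_\textinf^\KCMC(\pi_{\hat\beta})=M_n(\hat\theta'_n)\pto M(\theta'_0)=V_\textinf^\KCMC(\pi_{\beta_0})$. The main obstacle is the upper bound of this sandwich, i.e.\ preventing the empirical optimum from overshooting the population optimum: this cannot come from a global uniform law, since only a local envelope is assumed, and it is exactly the joint concavity secured by the concavity of $\beta\mapsto\pi_\beta(t|x)y$ that propagates pointwise to local uniform convergence and pins $\hat\theta'_n$ into a shrinking neighborhood of the unique $\theta'_0$. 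A secondary point is to verify the empirical duality identity $\hat V_\textinf^\KCMC(\pi_\beta)=\max_\theta\hat\EE_n[-L_{\theta,\pi_\beta}]$ on an event of probability tending to one, which needs a Slater-type feasibility of the empirical f-constraint.
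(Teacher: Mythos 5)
Your proposal is correct and follows essentially the same route as the paper's proof: reduce to a joint M-estimation over $\theta'=(\theta,\beta)$, use the concavity of $\beta\mapsto\pi_\beta(t|x)y$ to make the dual objective jointly concave (the paper phrases this as a pointwise infimum over $\tilde w\geq 0$ of functions concave in $\theta'$), invoke the convex M-estimation consistency lemma to get $\hat\theta'_n\pto\theta'_0$, and conclude value convergence from local uniform convergence on the $\varepsilon$-ball around $\theta'_0$ supplied by the envelope condition. The only cosmetic differences are your sandwich phrasing of the final step and your explicit remark on empirical Slater feasibility, which the paper handles implicitly via its standing strong-duality assumption.
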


An example of concave policy is mixed policy $\pi_\beta(t|x):=\sum_k\beta_k\pi_k(t|x)$ for $\sum_k\beta_k=1$, $\beta_k\geq0$. Indeed, policy learning with such a concave policy class is concave; therefore, the globally optimal policy can be found by convex optimization algorithms.


\section{NUMERICAL EXPERIMENTS}\label{chap:experiments}

In this section, we present numerical examples to compare our estimators with the existing estimators.
In addition to the standard policy evaluation, we also consider policy learning and the f-sensitivity model.

\begin{figure*}[!tb]
    \centering
    \begin{subfigure}[b]{0.32\textwidth}
        \centering
        \includegraphics[width=\linewidth,trim={5 0 0 11mm},clip]{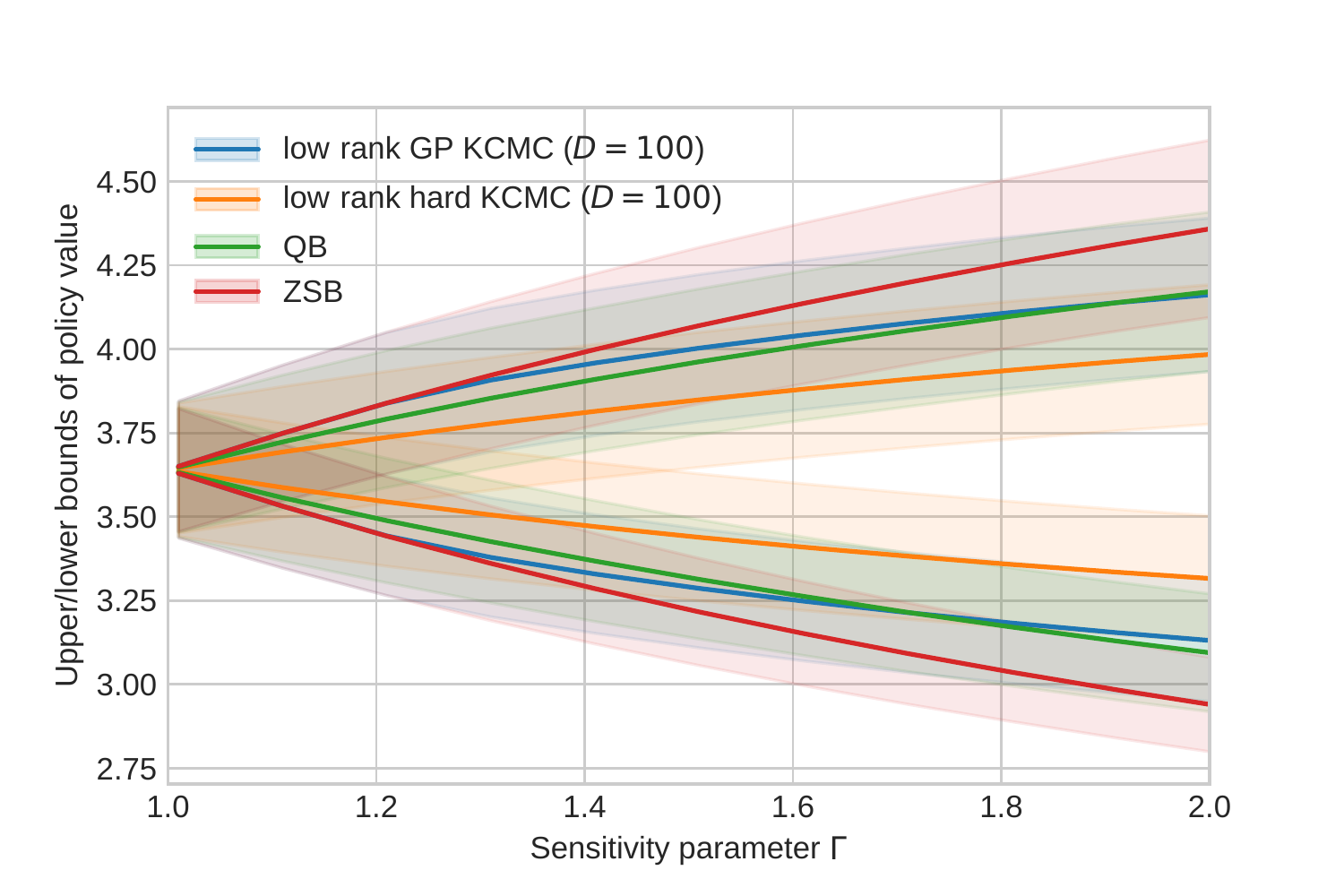}
        \caption{}
        \label{fig:policy_evaluation_synthetic_binary_changing_lambda}
    \end{subfigure}
    \begin{subfigure}[b]{0.32\textwidth}
        \centering
        \includegraphics[width=\linewidth, trim={5 0 0 11mm},clip]{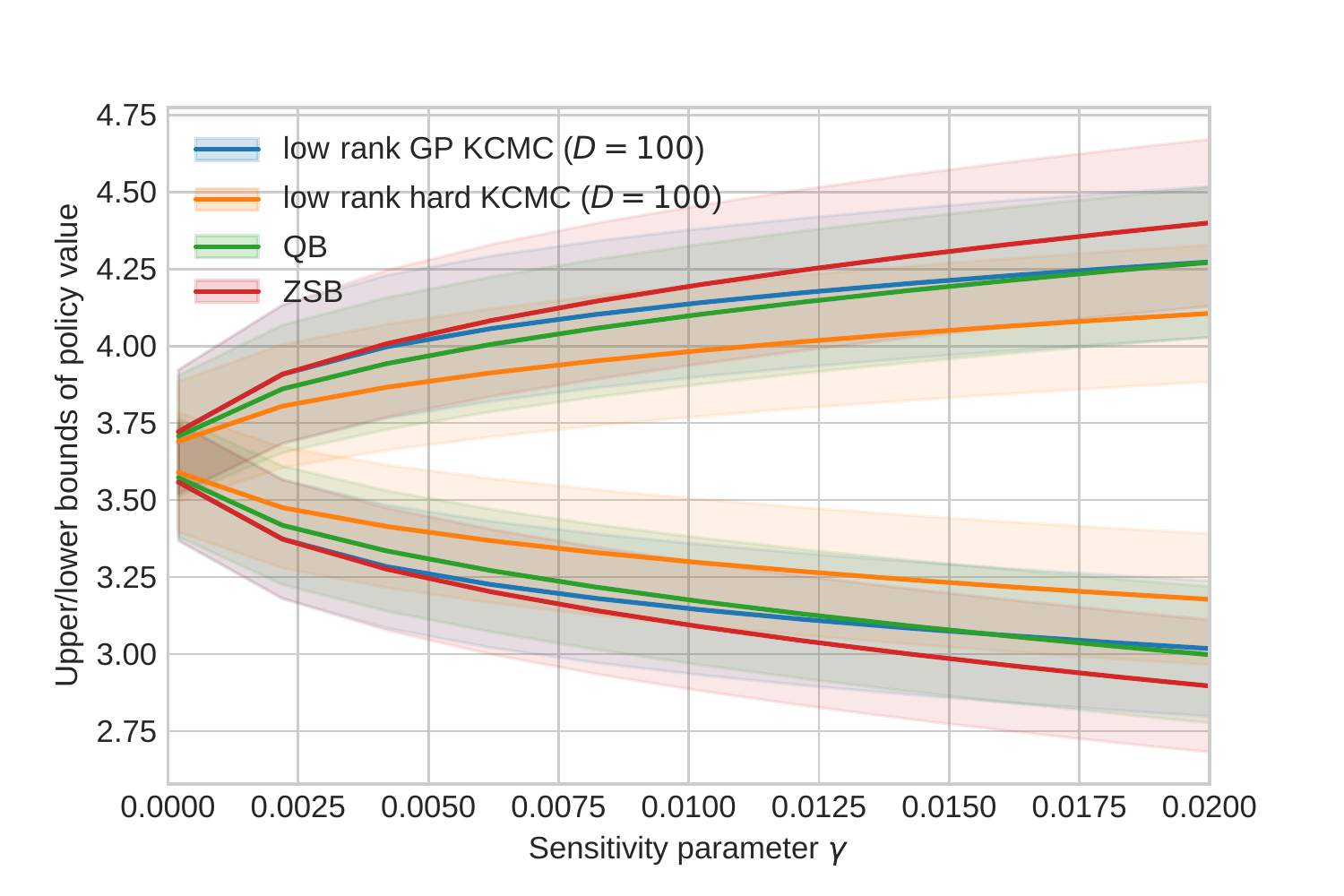}
        \caption{}
        \label{fig:policy_evaluation_synthetic_binary_changing_gamma_KL}
    \end{subfigure}
    \begin{subfigure}[b]{0.32\textwidth}
        \centering
        \includegraphics[width=\linewidth, trim={5 0 0 11mm},clip]{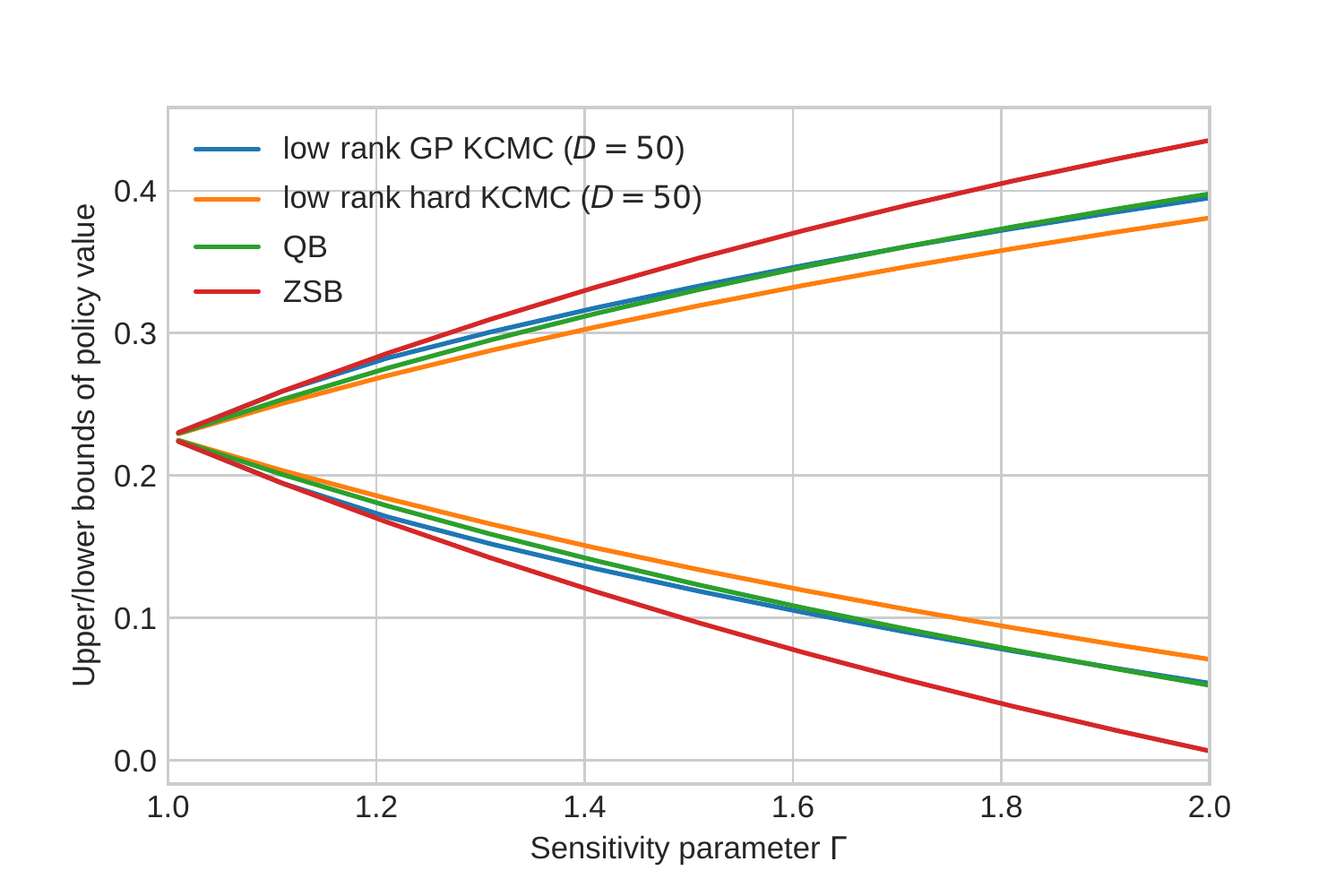}
        \caption{}
        \label{fig:policy_evaluation_real_binary_changing_lambda}
    \end{subfigure}
    \caption{Estimated upper and lower bounds using different types of estimators for sensitivity analysis: (a) Tan's marginal sensitivity model for policy value on synthetic data; (b) The KL sensitivity model for policy value on synthetic data; (c) Tan's marginal sensitivity model ($\Gamma=1.5$) for average treatment effect on NLS data. }
\end{figure*}

\subsection{Experimental Settings}

In the first three experiments, we use the synthetic data adapted from \citet{kallus2018confounding, kallus2021minimax}.
We repeat the experiment 10 times with different random seeds and report the mean and one standard deviation range by a line and a band around it.
The last experiment uses subsamples of data from the 1966-1981 National Longitudinal Survey (NLS) of Older and Young Men, which was also used in \citet{dorn2022sharp}.

In the experiments, four types of estimators are compared.
As the baseline, we consider the conventionally used ZSB estimator which solves the empirical version of \eqref{eq:v_inf_zsb}.
To this baseline, we compare the proposed estimators based on two types of kernel conditional moment constraints (KCMC), which are
\eqref{eq:kernel_conditional_moment_constraints_low_rank_gaussian_process} 
and  \eqref{eq:kernel_conditional_moment_constraints_low_rank_orthogonality}.
We call them the low-rank GP KCMC and the low-rank hard KCMC, respectively.
For low-rank hard KCMC, the orthogonal function class was chosen by the kernel PCA.
Lastly, as a reference, we include the quantile balancing (QB) estimator by \citet{dorn2021doubly}, which is a special case of low-rank hard KCMC.
More details on the experimental settings and additional results can be found in the supplementary material.\footnote{The code can be found at \url{https://github.com/kstoneriv3/confounding-robust-inference}.}

\vspace{-2mm}
\subsection{Policy Evaluation}
\vspace{-2mm}
Figure \ref{fig:policy_evaluation_synthetic_binary_changing_lambda} compares the tightness of the bounds obtained by different estimators in policy evaluation.
Clearly, the sharper estimators (KCMC and QB) are producing tighter bounds than the ZSB estimator.
Here, we can see that the low-rank hard KCMC's bounds are much tighter than those of the other sharp estimators.
This exemplifies the aforementioned trade-off between the sharpness and the credibility of bounds obtained by the soft and hard KCMC.

\vspace{-2mm}
\subsection{Extension to f-divergence Sensitivity Model}
\vspace{-2mm}
Next, to illustrate application of the f-sensitivity models \eqref{eq:f_policy_uncertainty_set},
we present an example of the KL-sensitivity model in Figure \ref{fig:policy_evaluation_synthetic_binary_changing_gamma_KL}.
We can see that the KL-sensitivity model can provide continuous control of the level of confounding by the sensitivity parameter, similarly to Tan's marginal sensitivity model.

\vspace{-2mm}
\subsection{Extension to Policy Learning}
\vspace{-2mm}
Figure \ref{fig:policy_learning_synthetic_binary} shows the learning curves during the max-min policy optimization with Tan's marginal sensitivity model.
Though order $\hat V^\KCMC_\textinf \geq \hat V^\ZSB_\textinf$ is still maintained, the KCMC lower bounds estimated from training data are actually higher than the ground truth.
This can be interpreted as an overfitting phenomenon in the joint maximization of learning and the inner dual problem.
Since $V_\textinf^\CMC$ estimated by test data is still lower than the ground truth,  improvement of policy learning may be possible with more careful control of test errors such as cross validation.

\begin{figure}[!htb]
    \centering
    \includegraphics[width=\linewidth, trim={5 0 0 11mm},clip]{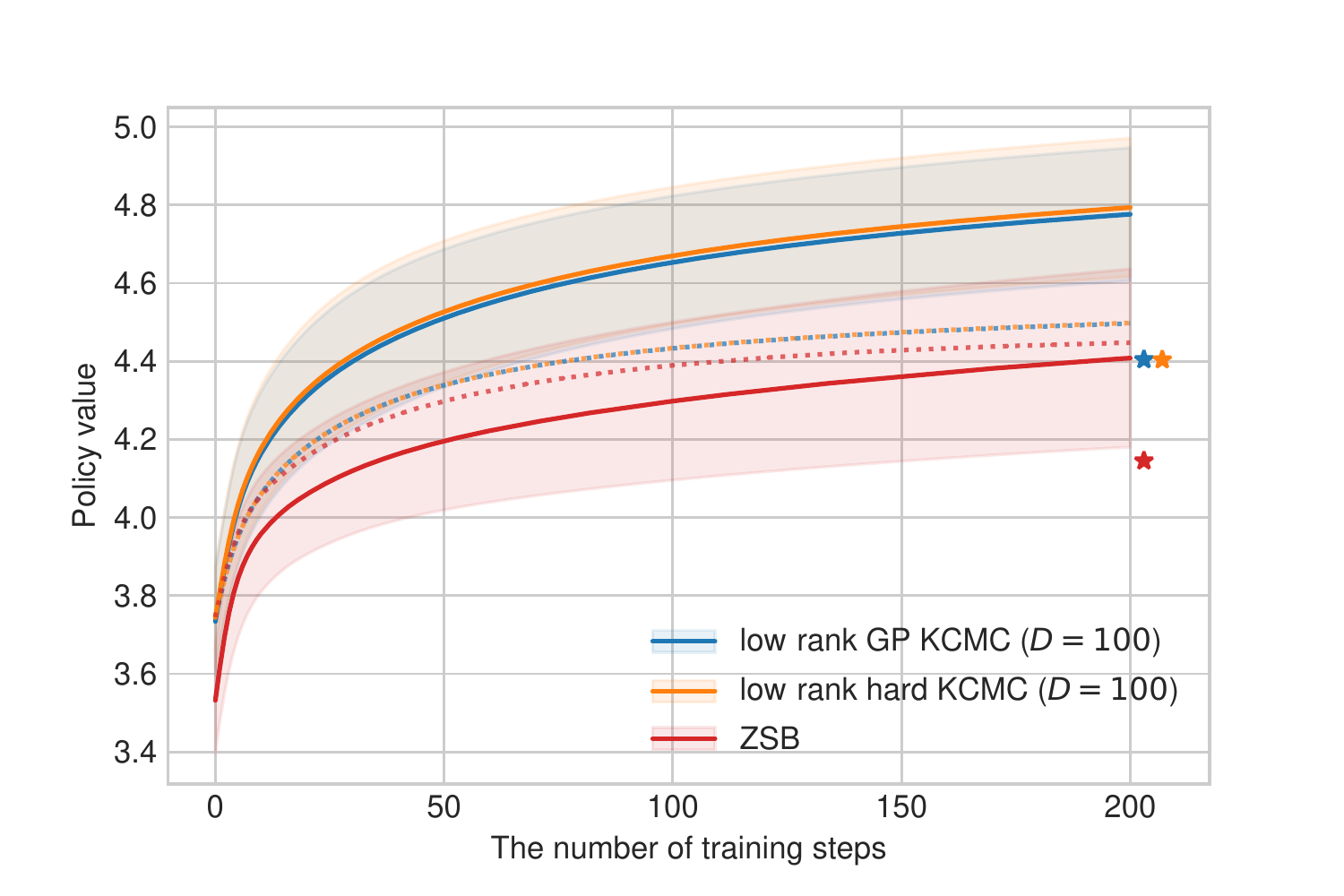}
    \caption[Caption for LOF]{The value of $\hat V_\textinf$ on training data during the policy learning. The dotted lines represent ground truth policy values $V$. The star symbols ($\star$) at the end of the learning curves indicate the average lower bound $V_\textinf^\CMC$ of the learned policy. 
    \protect\footnotemark
    }
    \label{fig:policy_learning_synthetic_binary}
\end{figure}

\footnotetext{
    Ground truth $V$ was estimated by unconfounded Monte Carlo simulation of the true data-generating process. 
    Lower bound $V_\textinf^\CMC$ was estimated by the low-rank hard KCMC estimators on test data.
}

\vspace{-2mm}
\subsection{Treatment Effect Estimation on NLS Data}
\vspace{-2mm}
Lastly, in Figure \ref{fig:policy_evaluation_real_binary_changing_lambda}, 
we show the upper and lower bounds of the average treatment effect ($\EE[Y|T=1] - \EE[Y|T=0]$) of union membership ($T$) on log wages ($Y$) estimated from the NLS dataset.
Similarly to the previous examples, the low-rank GP KCMC and quantile balancing estimates are very close while the ZSB and the low-rank hard KCMC produce looser and (possibly overly) tighter bounds.


\section{CONCLUSION}\label{chap:conclusion}

In this paper, we proposed kernel approximation of the conditional moment constraints to achieve sharp and general sensitivity analysis.
We theoretically studied the property of the kernel conditional moment constraints and established consistency guarantees for policy evaluation and learning.
We also confirmed the effectiveness of our approach empirically, with numerical examples covering various types of problems in the sensitivity analysis.


\subsubsection*{Acknowledgements}
K.I. was supported by the Heiwa Nakajima Foundation. N.H. was supported by ETH Research Grant and Swiss National Science Foundation. 
The authors acknowledge Takafumi Kanamori (Tokyo Institute of Technology) for pointing out a critical technical error in an early version of the draft. 

\subsubsection*{References}
\bibliography{references}

\appendix
\onecolumn

\section{Derivation of ZSB Constraints and Conditional Moment Constraints}
Here, we discuss the two ways to relax the distributional constraint of the original uncertainty set for base policy $\pi_\base$.
The condition that $\pi_\base(t|x, u)$ is a proper distribution is equivalent to 
\begin{equation}
   \sumint_\cT \pi_\base(t'|x, u) \rd t' = 1 \text{ and } \pi_\base(t|x, u) \geq 0
   \text{ for any }
   t\in\cT, x\in\cX, \text{ and }u\in\cU.
   \label{eq:proper_base_policy}
\end{equation}
These constraints have traditionally been relaxed to the ZSB constraints, which are used in the well-known H\'ajek estimator \citep{zhao2019sensitivity, kallus2018confounding, kallus2021minimax}.
However, there is also a tighter relaxation called conditional moment constraints, which we employ in our work.
In the following, we present these two types of relaxation.

\subsection{ZSB Constraint}
When action space $\cT$ is discrete and finite, a well-known relaxation of \eqref{eq:proper_base_policy} is
\begin{equation}
    \EE_\obs[\bbmone_{T=t}w(Y, T, X)] = 1 \text{ for any } t\in\cT
    \label{app-eq:zsb_constraint}
\end{equation}
and
\begin{equation}
    w(y, t, x) \geq 0 \text{ for any } y\in\cY, t\in\cT, \text{ and }x\in\cX,
    \label{app-eq:nonnegativity_constraint}
\end{equation}
where $\bbmone$ denotes the indicator function for event $A$.
Following the naming convention in \citet{dorn2022sharp}, we will call this constraint the ZSB constraint after the authors of \citet{zhao2019sensitivity}.
Condition \eqref{app-eq:zsb_constraint} can be obtained as
\begin{align*}
    \EE_\obs[w(Y, T, X)\bbmone_{T=t}]
    &= \EE_{T\sim\pi_\base(\cdot|X, U)}\left[\frac{\bbmone_{T=t}}{\pi_\base(T|X, U)}\right] \\
    &= \EE_{X, U}\left[
        \int_\RR \sum_\cT \left(\frac{\bbmone_{t'=t}}{\pi_\base(t'|X, U)}\right) p(y|t', X, U)\pi_\base(t'|X, U) \rd y \rd t' 
    \right] \\
    &= \EE_{X, U}\left[
        \sum_\cT \bbmone_{t'=t} \int_\RR p(y|t', X, U) \rd t' \rd y 
    \right]
    = 1,
\end{align*}
and the non-negativity condition is trivial from the definition of $w(y, t, x)$.
Combining the above with the relaxation of the conditional f-constraint, the following uncertainty set with the ZSB constraint can be defined:
\begin{equation}
    \cW_{f_{t, x}}^\ZSB := \left\{
    w(y, t, x) \geq 0:
    \begin{array}{c}
        \EE_{X, U} \left[f_{T, X}\left(p_\obs(T|X)w(Y, T, X)\right)\right] \leq \gamma \\
        \text{and} \\
        \EE_\obs[\bbmone_{T=t}w(Y, T, X)] = 1 \text{ for any } t\in\cT
    \end{array}
    \right\}.
\end{equation}
This uncertainty set has been traditionally adopted by many works such as \citet{tan2006distributional, zhao2019sensitivity, kallus2018confounding, kallus2021minimax}.
For this uncertainty set, associated lower bound
\begin{equation}
V_\textinf^\ZSB(\pi) := \inf_{w\in\cW^\ZSB}\EE_\obs\left[w(Y, T, X)\pi(T|X)Y \right]\\
\end{equation}
can be consistently approximated straightforwardly. 
By approximating the expectations by empirical average, we get a linear program with parameter $\bsw=(w_1, \ldots, w_n)^T = \left(w(Y_1, T_1, X_1), \ldots, w(Y_n, T_n, X_n)\right)^T$,
\begin{equation}
    \hat V_\textinf^\ZSB(\pi)
    := \min_{\bsw \in \hat\cW_{f_{t, x}}^\ZSB} \hat\EE_n [w(Y, T, X) \pi(T|X)Y ]
\label{eq:v_inf_zsb_empirical}
\end{equation}
where 
\begin{equation}
\hat\cW_{f_{t, x}}^\ZSB := \left\{
\bsw \geq \bszero:
\begin{array}{c}
    \hat\EE_n[f_{T, X}\left(p_\obs(T|X) w(Y, T, X)\right)] \leq \gamma \\
    \text{and}\\
    \hat\EE_n[\bbmone_{T=t}w(Y, T, X)] = 1 \text{ for all }t\in\cT
\end{array}\right\}.
\label{eq:zsb_empirical_uncertainty_set}
\end{equation}

Here we should note that \eqref{eq:v_inf_zsb_empirical} is not exactly the estimator used in \citet{zhao2019sensitivity} and its recent extensions. Instead, they solved $\min_{\bsw\geq\bszero}\frac{\hat \EE_n[w(Y, T, X)\pi(T|X)Y]}{\hat \EE_n[w(Y, T, X)]}$ such that $\hat \EE_n[f_{T, X}(p_\obs(T|X)w(Y, T, X))] \leq \gamma$, by the linear fractional programming.

\subsection{Conditional Moment Constraints}
Now, we introduce the sharper constraints that we leverage in our work.
We relax constraint \eqref{eq:proper_base_policy} as conditional moment constraints
\begin{equation}
    \EE_\obs[w(Y, T, X)|T=t, X=x] \cdot p_\obs(t|x) = 1 \text{ for any } t\in\cT \text{ and }x\in\cX
    \label{app-eq:conditional_moment_constraints}
\end{equation}
plus non-negativity constraint \eqref{app-eq:nonnegativity_constraint}.
For this relaxation, we do not require that action space $\cT$ is discrete and finite.
We can check the validity of this relaxation from 
\begin{align*}
\EE_\obs &[w(Y, T, X)|T=t, X=x]\cdot p_\obs(t, x) \\
&= \EE_\obs \left[
    \EE_{T\sim\pi_\base(\cdot|X, U)}\left[\frac{1}{\pi_\base(T|X, U)}|Y, T, X\right]
| T=t, X=x \right] \cdot p_\obs(t, x) \\
&= \EE_{T\sim\pi_\base(\cdot|X, U)}\left[ \frac{1}{\pi_\base(T|X, U)} | T=t, X=x \right] \cdot p_\obs(t, x) \\
&= \sumint_\cU \frac{1}{\pi_\base(t|x, u)} p_{\pi_\base}(u| t, x) \rd u \cdot p_\obs(t, x) \\
&= \sumint_\cU \frac{1}{\pi_\base(t|x, u)} p_{\pi_\base}(t, x, u) \rd u \\
&= \sumint_\cU p(x, u)\rd u 
= p_\obs(x)
\end{align*}
and the fact that $p_\obs(t|x) = p_\obs(t, x) / p_\obs(x)$.
Here, $p_{\pi_\base}(t, x, u)$ and $p_{\pi_\base}(u|t, x)$ denote the joint and conditional distribution of $T$, $X$, and $U$ under confounded contextual bandits model.
We again combine these conditional moment constraints (CMC) with the relaxed conditional f-constraint to obtain
\begin{equation}
    \cW_{f_{t, x}}^\CMC:= \left\{
    w(y, t, x) \geq 0:
    \begin{array}{c}
        \EE_{X, U} \left[f_{T, X}\left(p_\obs(T|X)w(Y, T, X)\right)\right] \leq \gamma \\
        \text{and} \\
        \EE_\obs[w(Y, T, X)|T=t, X=x] \cdot p_\obs(t|x) = 1 \text{ for any } t\in\cT \text{ and } x\in\cX
    \end{array}
    \right\}
\end{equation}
and its corresponding lower bound
\begin{equation}
V_\textinf^\CMC(\pi):= \inf_{w\in\cW^\CMC}\EE_\obs\left[w(Y, T, X)\pi(T|X)Y \right]. \label{app-eq:v_inf_conditional_moment_constraints}\\
\end{equation}

Naturally, for these uncertainty sets, one can show inclusion relations $\cW_{f_{t, x}} \subseteq \cW^\CMC_{f_{t, x}}\subseteq \cW^\ZSB_{f_{t, x}}$.
The former inclusion follows from the definition of $\cW^\CMC_{f_{t, x}}$. 
We can show the latter inclusion by taking the (conditional) expectation of conditional moment constraints \eqref{app-eq:conditional_moment_constraints} with respect to $p_\obs(x| t)$ to obtain ZSB constraint \eqref{app-eq:zsb_constraint}.

Here, we can also show that there exist cases where strict inclusion holds so that these sets are not equivalent.
For the latter inclusion, it is trivial to show that strict inclusion holds when the conditional moment constraints are stronger than the ZSB constraint.
For the former inclusion, we can construct the following toy example where there exists some $w(y, t, x) \in\cW^\CMC_{f_{t, x}} \setminus \cW_{f_{t, x}}$.

\begin{example}[A non-realizable element in $\cW^\CMC$]\label{ex:non_realizability}
Let us assume that the context space is a singleton and the action space and the reward space are binary so that $\cX=\{x\}$ and $\cT=\cY=\{-1, +1\}$.
Then, observational distribution 
\[
    p_\obs(Y=\pm 1, T=\pm 1, X=x) = 1/4 
\]
and inverse probability weight 
\[
    w(y, t, x) = \left\{
    \begin{array}{l}
        3.1 \text{\ \  if \ \ }y = -1 \\
        0.9 \text{\ \  if \ \ }y = +1
    \end{array}
    \right.
\]
satisfy conditional moment constraints \eqref{app-eq:conditional_moment_constraints} as well as non-negativity constraints \eqref{app-eq:nonnegativity_constraint} since
\[
    \EE_\obs[w(Y, T, X)|T=+1, X=x]\cdot p_\obs(T=+1|X=x) = \left(0.9 \cdot \frac{1}{2} + 3.1 \cdot \frac{1}{2}\right)\cdot \frac{1}{2} = 1
\]
and 
\[
    \EE_\obs[w(Y, T, X)|T=-1, X=x]\cdot p_\obs(T=-1|X=x) = \left(0.9 \cdot \frac{1}{2} + 3.1 \cdot \frac{1}{2}\right)\cdot \frac{1}{2} = 1.
\]
By considering some conditional f-constraint that contains the above parameter values, we can construct some uncertainty set $\cW^\CMC_{f_{t, x}}$ that contains this $w(y, t, x)$ given above observational distribution $p_\obs(y, t, x)$.
However, there exists no proper policy $\pi_\base(t|x, u)$ and underlying model $p(y|t, x, u)$ and $p(x, u)$ that satisfy both
\[
    p_\obs(y, t, x) = \sumint_\cU p(y|t, x, u)\pi_\base(t|x, u)p(x, u)\rd u
\]
and 
\begin{align*}
    w(y&, t, x) \cdot p_\obs(y, t, x) \\
    &= \EE_{T\sim\pi_\base(\cdot|X, U)}\left[\frac{1}{\pi_\base(T|X, U)}|Y=y, T=t, X=x\right] \cdot p_\obs(y, t, x)\\
    &= \sumint_\cU p(y|t, x, u)p(x, u)\rd u. 
\end{align*}
Indeed,
\begin{align*}
    p_\obs(Y=+1, t, x) 
    &= \sumint_\cU p(y|t, x, u)\pi_\base(t|x, u)p(x, u)\rd u \\
    &\leq \sumint_\cU p(y|t, x, u)p(x, u)\rd u  \\
    &= w(+1, t, x) \cdot p_\obs(Y=+1, t, x)
\end{align*}
contradicts our assumptions on the model parameters $p_\obs(Y=+1, t, x)=1/4$ and $w(+1, t, x)=0.9$.
In other words, such element $w(y, x, t)$ in uncertainty set $\cW_{f_{t, x}}^\CMC$ is not realizable, and thus, the strict inclusion holds for $\cW_{f_{t, x}} \subseteq \cW_{f_{t, x}}^\CMC$ in this case.
This example exploits the too much flexibility of the conditional f-constraint by taking the unrealizable case where $w(+1, t, x) < 1$. 
In the case of discrete action space, we know that $w\geq 1$ because the inverse probability of discrete action is always no less than $1$.
\end{example}

Having shown the inclusion relations of the uncertainty sets, we can discuss the relations among the lower bounds.
Assuming that the true base policy is contained in the original sensitivity model, we have $V \geq V_\textinf \geq V_\textinf^\CMC \geq V_\textinf^\ZSB$.
One important question to ask here is under what kind of constraints the second equality holds so that the lower bound of the conditional moment constraints is tight.
Surprisingly, recent work by \citet{dorn2022sharp} showed that equality $V_\textinf = V_\textinf^\CMC$ holds for average treatment effect estimation with Tan's marginal sensitivity model \citep{tan2006distributional}.
They showed that there exists minimizer $w^*(y, t, x)$ of $\mathrm{ATE}(w):=\EE_\obs[w(Y, T, X)\bbmone_{T=1}Y] - \EE_\obs[w(Y, T, X)\bbmone_{T=0}Y]$, which is realizable so that there exists $p(y|t, x, u)$, $\pi_\base(t|x, u)$, and $p(x, u)$ that is compatible with any $p_\obs(y, t, x)$ and minimizer $w^*(y, t, x)$.
However, they did not provide any realizability results for more general settings such as the evaluation of general policy, non-binary action spaces, and general box and f-divergence constraints.
Clearly, Example \ref{ex:non_realizability} shows that the same equality does not always hold for any box constraints, as it can be the minimizer for some box constraints and some policies.
However, the question of which constraint class yields a tight bound with a realizable minimizer is an open question.

\section{Alternative f-sensitivity Models by Jin et al. (2022)}

Here, we describe the difference between our f-sensitivity models and the f-sensitivity models proposed by \citet{jin2022sensitivity} that also uses uncertainty sets defined with the f-divergence.
They proposed a similar uncertainty set that relaxes the condition
\[
    D_f\left[p\left(Y(1)|X, T=1\right) || p\left(Y(1)|X, T=0\right)\right] \leq \gamma,
\]
where variable $Y(1)$ is the potential outcome variable for treatment $T=1$ in Rubin's potential outcome framework \citep{rubin2005causal} with binary treatment.
Under the assumption of unconfoundedness, the potential outcome variable $Y(1)$ must satisfy $Y(1) \indep T|X$, and thus it must satisfy 
$D_f\left[p\left(Y(1)|X, T=1\right) || p\left(Y(1)|X, T=0\right)\right]=0$
almost surely with respect to $p_\obs$.
Their sensitivity model can be interpreted as a relaxation of this assumption by allowing the violation of it up to $\gamma$.

In terms of the modeling paradigm, our f-sensitivity model follows the same modeling framework as Tan (2006), which takes into account the difference between observational policy $p_\obs(t|x)$ and underlying confounded policy $\pi_\base(t|x, u)$.
On the other hand, the model by \citep{jin2022sensitivity} considers the distributional shift between observation $Y(1)|X = x, T = 1$ and counterfactual $Y(1)|X = x, T = 0$, and therefore, their way of modeling is different from the one that Tan (2006) and its extension is based on.

\section{Quantile Balancing Estimator by Dorn and Guo (2022)}

In this section, we discuss the recently proposed tractable estimators for lower bound $V_\textinf^\CMC$ by \citet{dorn2022sharp}.
As the optimization for $V_\textinf^\CMC$ involves infinite dimensional constraints for all $t\in\cT$ and $x\in\cX$, we cannot apply the method analogous to \eqref{eq:v_inf_zsb_empirical} to obtain the empirical version of the lower bound.
They studied box constraints $a_w(t, x) \leq w(y, t, x) \leq b_w(t, x)$ in the case of discrete finite action space and proposed the first tractable method to impose such constraints.
They showed that we can solve \eqref{app-eq:v_inf_conditional_moment_constraints} in the case of box constraints as
\[
    V_{\textinf, \ \rbox}^\CMC =\min_{a_w(t, x) \leq w(y, t, x) \leq b_w(t, x)}\EE_\obs[w(Y, T, X)\pi(T|X)Y]
\]
subject to
\begin{equation}
    \EE_\obs[w(Y, T, X) \pi(T|X) Q(T, X)] = \EE_\obs\left[\left(\frac{\pi(T|X)}{p_\obs(T|X)}\right)Q(T, X)\right],
    \label{eq:quantile_balancing_constraints}
\end{equation}
where $Q(t, x)$ denotes the $\tau(t, x)$-quantile of the conditional distribution of $Y$ given $T=t$ and $X=x$ for $\tau(t, x):=\frac{1 / p_\obs(t|x) - a_w(t, x)}{b_w(t, x) - a_w(t, x)}$.
In the case of the marginal sensitivity model by \citet{tan2006distributional}, the expression for $\tau(t, x)$ can be simplified as $\tau(x) = \frac{1}{1 + \Gamma}$.
They leveraged the above characterization for $V_\textinf^\CMC$ by plugging in estimate $\hat Q(t, x)$ of the conditional quantile function and empirically approximating the expectation to obtain a tractable linear programming problem for the quantile balancing (QB) estimator,
\[
    \hat V_{\textinf,\ \rbox}^\text{QB} =\min_{a_w(t, x) \leq w_i \leq b_w(t, x)} \hat\EE_n[w(Y, T, X)\pi(T|X)Y]
\]
subject to
\[
    \hat\EE_n[w(Y, T, X)\pi(X|T)\hat Q(T, X)] = \hat\EE_n \left[\left(\frac{\pi(T|X)}{p_\obs(T|X)}\right)\hat Q(T, X)\right].
\]

Indeed, we will show later that this quantile balancing estimator is a special case of our estimator where we have a nearly optimal choice of orthogonal function class $\{\psi_1\}$ where $\psi_1(x, t) = \left(\frac{\pi(t|x)}{p_\obs(t|x)}\right)\hat Q(t, x)$.
Therefore, the KCMC estimator is guaranteed to be no looser than QB estimator when $\left(\frac{\pi(t|x)}{p_\obs(t|x)}\right)\hat Q(t, x) \in \mathrm{span}\left(\{\psi_d(t, x)\}_{d=1}^D\right)$.
To put it in another way, if we estimate the quantile by linear quantile regression with feature vectors $\left\{ \left( \frac{p_\obs(t|x)}{\pi(t|x)} \right) \psi_d(x, t)\right\}_{d=1}^D$, the QB estimator is no tighter than the KCMC estimator.

As our estimator generalizes the previous work, our estimator overcomes some drawbacks of the quantile balancing estimators.
For example, the quantile balancing estimator cannot handle policy learning and the f-divergence constraint.
Policy learning is difficult with the quantile balancing estimator because taking the derivative with respect to policy requires differentiability of the solution of the above linear programming with respect to parameter $\hat Q$.  
Moreover, the quantile balancing method is designed only for box constraints and does not have a proper extension to the f-sensitivity model.
In contrast, our estimator of sharper bound $V_\textinf^\CMC$ based on the kernel method can naturally handle the above-mentioned generalized cases of sensitivity analysis.

\section{More Detail on Low-rank Gaussian Process Constraints} 

Here, we provide more discussion on the derivation and interpretations of the low-rank Gaussian process kernel conditional moment constraints.

\subsection{Derivation}

Let's consider spectral decomposition of the kernel matrix $K=V\Lambda V^T$ with orthonormal matrix $V\in\RR^{n\times n}$ and diagonal matrix $\Lambda = \mathrm{diag}(\lambda_1, \ldots, \lambda_n)\in \RR^{n\times n}$ satisfying $\lambda_1 \geq \lambda_2 \geq \ldots \geq \lambda_n \geq 0$.
By truncating the spectrum after the first $D$ dimensions, we approximate the kernel matrix as $K\approx \tilde K := \tilde V \tilde \Lambda {\tilde V}^T$, where $\tilde \Lambda = \mathrm{diag}(\lambda_1, \ldots, \lambda_D)\in \RR^{D\times D}$ and $\tilde V \in \RR^{n\times D}$ is the first $D$ columns of matrix $V$.
Then, we can obtain the low-rank version of the Gaussian process regression model as 
\begin{equation}
\begin{array}{l}
    \bsz \sim \mathcal{N}(\bszero, \tilde \Lambda) \\
    \bsg = \tilde V \bsz, \\
    \bse = \bsg + \varepsilon, \\
    \varepsilon \sim\mathcal{N}(\bszero, \sigma^2I_n).
\end{array}
\label{eq:low_rank_gaussian_process_regression_model}
\end{equation}
Here, we can verify that this model is a valid low-rank approximation, by checking that the prior mean and covariance of $\bsg$ are $\bszero$ and $\tilde K$ respectively. 

For this low-rank model, the posterior distribution of $\bsg$ given $\bse$ is again a multivariate normal distribution with mean and variance $\tilde\mu_{\bsg|\bse} = \tilde K( \tilde K + \sigma^2 I_n)^{-1} \bse$ and $\tilde\Sigma_{\bsg|\bse} = \tilde K - \tilde K(\tilde K + \sigma^2 \mathrm{I})^{-1}\tilde K$.
Therefore, we can analogously define the credible set and the kernel conditional moment constraints as
\begin{equation}
\tilde C_{\bsg|\bse} = \{\bsg : (\bsg - \tilde\mu_{\bsg|\bse})^T \tilde\Sigma_{\bsg|\bse}^{-1} (\bsg - \tilde\mu_{\bsg|\bse}) \leq \chi^2_{D}(1 - \alpha)\}.
\end{equation}
and
\begin{align}
\begin{split}
    \KCMC_\text{low-rank GP}
    &\stackrel{\text{def}}{\Leftrightarrow} \bszero \in \tilde C_{\bsg|\bse} \\
    &\Leftrightarrow (\bsw \odot \bsp_{\obs, T|X} - 1)^T M_\text{low-rank GP} (\bsw \odot \bsp_{\obs, T|X} - 1) \leq \chi^2_{D}(1 - \alpha)
\end{split}
\end{align}
where $M_\text{low-rank GP}:=(\tilde K+\sigma^2I_n)^{-1}\tilde K(\tilde K - \tilde K(\tilde K+\sigma^2I_n)^{-1}\tilde K)^{-1}\tilde K(\tilde K+\sigma^2I_n)^{-1}$.

A big difference between the credible set of the original Gaussian process and the low-rank Gaussian process is the degree of freedom of the $\chi^2$ distribution.
This is because the distribution of the $\bsg$ is essentially restricted to the $D$-dimensional subspace spanned by columns of $\tilde V$.
The condition that the credible set of $\bsg$ includes $\bszero\in\RR^n$ is equivalent to the condition that $\bszero\in\RR^D$ is contained in the credible set of $\bsz$, whose posterior is $D$-dimensional multivariate normal distribution.

\subsection{A Spectral Interpretation of Low-rank and Full-rank Constraints}
With the spectral decomposition, we can obtain more intuitive expressions for quadratic forms $\bse^T M_\text{GP} \bse$ and $\bse^T M_\text{low-rank GP}\bse$, which are
\begin{align}
\bse^T M_\text{GP} \bse &= \frac{1}{\sigma^2}(V^T\bse)^T\mathrm{diag}\left(\frac{\lambda_1}{\lambda_1 + \sigma^2}, \ldots, \frac{\lambda_n}{\lambda_n + \sigma^2}\right)(V^T\bse), \\
\bse^T M_\text{low-rank GP} \bse &= \frac{1}{\sigma^2}(\tilde V^T\bse)^T\mathrm{diag}\left(\frac{\lambda_1}{\lambda_1 + \sigma^2}, \ldots, \frac{\lambda_D}{\lambda_D + \sigma^2}\right)(\tilde V^T\bse).
\end{align}
Now we can interpret the above quantities from the perspective of the kernel principal component analysis (KPCA) \citep{scholkopf1997kernel}.
Let $\phi^\KPCA_1, \ldots, \phi^\KPCA_n$ be the empirically normalized principal components in RKHS corresponding to the empirical spectrum $\lambda_1/n, \ldots, \lambda_n/n$, which satisfy $\hat \EE_n\left|\phi^\KPCA_i(Y, T, X)\right|^2 = 1$ for any $i$.
Then, it can be shown that $\phi^\KPCA_d(T_i, X_i) = \sqrt{n} V_{d, i}$, and therefore, 
$V^T\bse / \sqrt{n} = \left(\hat\EE_n[\phi^\KPCA_1 e], \ldots, \hat\EE_n[\phi^\KPCA_n e] \right)^T$ 
and $\tilde V^T\bse / \sqrt{n} = \left(\hat\EE_n[\phi^\KPCA_1 e], \ldots, \hat\EE_n[\phi^\KPCA_D e] \right)^T$.
Using these relations, we can re-write the above quadratic forms as
\begin{align}
\begin{split}
\bse^T M_\text{GP} \bse &= \frac{1}{\sigma^2}\sum_{d=1}^{n}\frac{\lambda_d / n}{\lambda_d / n + \sigma^2 / n}\left|\sqrt{n}\hat\EE_n[\phi^\KPCA_d(T, X)e(Y, T, X)]\right|^2 \\
\bse^T M_\text{low-rank GP} \bse &= \frac{1}{\sigma^2}\sum_{d=1}^{D}\frac{\lambda_d / n}{\lambda_d / n + \sigma^2 / n}\left|\sqrt{n}\hat\EE_n[\phi^\KPCA_d(T, X)e(Y, T, X)]\right|^2.
\end{split}
\label{eq:quadratic_constraint_spectral_decomposition}
\end{align}
These expressions provide several intuitions of the kernel conditional moment constraints. 
First, the difference in the degree of freedom for the $\chi^2$ upper bounds clearly corresponds to the number of summands for individual constraints.
Second, when we decompose the individual summand into weight $\frac{\lambda_d / n}{\lambda_d / n + \sigma^2 / n}$ and squared penalty $\left|\sqrt{n}\hat\EE_n[\phi^\KPCA_d(T, X)e(Y, T, X)]\right|^2$, the weight term discounts the squared penalty as the magnitude of the spectrum $\lambda_d$ decays.
Especially, the summand becomes negligible when $\lambda_d$ is significantly smaller than $\sigma^2$.
If $\lambda_D$ is small enough compared to $\sigma^2$, both quadratic forms are approximately equal, and thus, the low-rank constraints can be tighter than its full-rank counterpart owing to the smaller $\chi^2$ upper bound.
This is a side benefit of low-rank constraints that can also justify its use in practice.
Third, the squared penalty term can be interpreted as a soft version of constraint $\EE_\obs[e(Y, T, X)\phi^\KPCA_d(T, X)]=0$.
Lastly, we can approximately apply the central limit theorem to $\sqrt{n}\hat\EE_n[\phi^\KPCA_d(T, X)e(Y, T, X)]$ and obtain its asymptotic distribution. 
For the sake of approximation, let us assume that $\VV_\obs[e(Y, T, X)|T, X] = \EE_\obs[|e(Y, T, X)|^2|T, X] = \sigma^2$ and that $\phi^\KPCA_d(t, x)$ is fixed for any $n$ and it satisfies $\EE_\obs|\phi^\KPCA_d|^2\approx1$.
Then, $\sqrt{n}\hat\EE_n[\phi^\KPCA_d(T, X)e(Y, T, X)]$ is asymptotically normal with mean zero and variance 
$\VV_\obs[\phi^\KPCA_d e] =$
$\EE_\obs|\phi^\KPCA_d e|^2=$
$\EE_\obs\left[|\phi^\KPCA_d(T, X)|^2 \EE_\obs[|e(Y, T, X)|^2|T, X]\right] =$
$\sigma^2\EE_\obs|\phi^\KPCA(T, X)|^2 $
$\approx \sigma^2$.
Therefore, we see that the above quadratic forms are the discounted sum of squares of asymptotically standard normal random variables, which again gives an intuition for the $\chi^2$ upper bound.

\subsection{Choice of Parameter \texorpdfstring{$\sigma^2 > 0$}{sigma}}
Finally, we discuss a practice consideration on the choice of $\sigma^2$.
From \eqref{eq:quadratic_constraint_spectral_decomposition}, we can see that the quadratic constraint gets tighter when we reduce the value of $\sigma^2$, as scaling factor $\frac{1}{\sigma^2}$ of the summation, as well as discounting weight $\frac{\lambda_d / n}{\lambda_d / n + \sigma^2 / n}$, gets larger as we decrease $\sigma^2$. 
In the sensitivity analysis, we are mostly interested in extreme cases, and thus, it is better to have overly loose constraints than overly tight constraints.
Therefore, when choosing $\sigma^2$, we can take the largest possible value of $\sup_{t\in\cT, x\in\cX}\VV_\obs[e|T=t, X=x]$.
In the case of the box constraints, we can calculate the upper and lower bound of $e(y, t, x)$, and therefore, the difference between both bounds can be a good guess of the supremum.
With regard to the f-sensitivity model, there is no way to obtain such a bound.
For this issue, a practical workaround may be to define the uncertainty set by combining both the f-divergence constraints and the box constraints, so that we can still use the choice of $\sigma^2$ for the box constraints.

\section{Theoretical Analysis}

In this section, we study the property of the kernel conditional moment constraints in the confounding robust inference.
For the convenience of the analysis, we only consider low-rank orthogonality condition $\hat \EE_n[e(Y, T, X)\bspsi(T, X)] = \bszero$ as the kernel conditional moment constraints in this section.
Thus, we focus on the property of the following population lower bound 
\begin{equation}
    V_\textinf^\KCMC(\pi) = \inf_{w\in\cW^\KCMC_{f_{t, x}}}\EE_\obs[w(Y, T, X)\pi(T|X)Y]
    \label{app-eq:v_inf_kernel_conditional_moment_constraints}
\end{equation}
for
\begin{equation}
    \cW^\KCMC_{f_{t, x}} = \left\{
    w(y, t, x)\geq 0:
    \begin{array}{c}
         \EE_\obs[f_{T, X}(w(Y, T, X)p_\obs(T|X))] \leq \gamma \\
         \text{and} \\
         \EE_\obs[\left(w(Y, T, X)p_\obs(T|X) - 1\right)\psi_d] = 0 \text{ \ for any }d = 1, \ldots, D
    \end{array}
    \right\}
\end{equation}
and its empirical version 
\begin{equation}
    \hat V_\textinf^\KCMC(\pi) = \inf_{\bsw\in\hat\cW^\KCMC_{f_{t, x}}}\hat\EE_n[w(Y, T, X)\pi(T|X)Y]
    \label{app-eq:empirical_v_inf_kernel_conditional_moment_constraints}
\end{equation}
for
\begin{equation}
    \hat\cW^\KCMC_{f_{t, x}} = \left\{
    w(y, t, x) \geq 0:
    \begin{array}{c}
         \hat\EE_n[f_{T, X}(w(Y, T, X)p_\obs(T|X))] \leq \gamma \\
         \text{and} \\
         \hat\EE_n[\left(w(Y, T, X)p_\obs(T|X) - 1\right)\psi_d] = 0 \text{ \ for any }d = 1, \ldots, D
    \end{array}
    \right\}.
\end{equation}
We first study the property of the minimizers for the above problems.
Then, we analyze the specification error of the kernel conditional moment constraints $\left|V^\CMC_\textinf - V^\KCMC_\textinf\right|$. 
We then provide a condition on orthogonal function set $\{\psi_d(t, x)\}_{d=1}^D$ under which the specification error becomes zero.
Finally, we study empirical estimator $\hat V_\textinf^\KCMC$ and prove consistency guarantees for both policy evaluation and learning.

Before further discussion, we introduce several simplifications of notations.
We omit subscripts of $\cW_{f_{t, x}}^\CMC$, $\cW_{f_{t, x}}^\KCMC$, $\hat\cW_{f_{t, x}}^\KCMC$ and $\EE_\obs$, unless they are unclear from the context.
We also introduce $r(y, t, x) = \left(\frac{\pi(t|x)}{p_\obs(t|x)}\right)\cdot y$ and re-parametrization $\tilde w(y, t, x) = p_\obs(t|x)w(y, t, x)$.

Furthermore, we introduce the subgradient and the Fenchel conjugate here, as we will make heavy use of them in this section.
The subgradient of convex function $f$ is represented by $\partial f$.
When we apply the addition operator to the subgradient, it represents the Minkowski sum.
Other operations to the subgradient such as multiplication are similarly defined.
The Fenchel conjugate of $f:\RR\to\RR$ is defined as $f^*(v):=\sup_{u}\{uv - f(u)\}$.
There are a few important properties of the Fenchel conjugate.
The Fenchel conjugate is always convex because the supremum of a family of convex functions is convex.
\footnote{In this case, $v\mapsto vu - f(u)$ is linear, and therefore, is convex.}
Additionally, there exists maximizer $u^*$ that solves $f^*(v) = \max_{u}\left\{vu - f(u)\right\}$ and it satisfies 
\begin{equation}
    u^* \in \partial f^*(v)
    \label{eq:fenchel_conjugate_solution}
\end{equation}
if $f$ is closed and convex.
Function $f$ is closed and convex if its epigraph $\mathrm{epi}(f):=\{(u, t): u\in\mathrm{dom}(f), \ f(u)\leq t\}$ is closed and convex, and these conditions are satisfied in our problem.
A more thorough treatment of the subgradient and the Fenchel conjugate can be found in  \citep{boyd2004convex}.

\subsection{Characterization of Solutions}

(The derivation of the dual problem below has errors, but they have fairly straightforward fix as discussed in Appendix \ref{app:errata}!)

In this section, we derive explicit formulae for the minimizers that give three lower bounds $V_\textinf^\CMC$, $V_\textinf^\KCMC$, and $\hat V_\textinf^\KCMC$, which are
\begin{align}
\begin{split}
    w^*_\CMC &= \arg\min_{w\in\cW^\CMC}\EE[w(Y, T, X)\pi(T|X)Y], \\
    w^*_\KCMC &= \arg\min_{w\in\cW^\KCMC}\EE[w(Y, T, X)\pi(T|X)Y], \\
    \hat w_\KCMC &= \arg\min_{w\in\hat\cW^\KCMC}\hat\EE_n[w(Y, T, X)\pi(T|X)Y].
\end{split}
\label{eq:w_solutions}
\end{align}
Here, we know that these problems have the minimizers because the above problems are minimizations of linear objectives under convex constraints.
Furthermore, we know that the strong duality holds for the above convex optimizations, as their feasibility sets have a non-empty relative interior, satisfying Slater's constraint qualification.
Using these properties, we obtain the following lemma:

\begin{lemma}[Characterization of solutions]\label{lemma:w_characterization}
Let $w^*_\CMC$, $w^*_\KCMC$, and $\hat w_\KCMC$ be defined as in \eqref{eq:w_solutions}. 
Then, there exist function $\eta_\CMC:\cT\times\cX\to\RR$, vectors $\eta_\KCMC, \eta_\KCMC'\in\RR^D$, and constants $\eta_f, \eta_f', \eta_f'' > 0$ such that 
\begin{align}
w^*_\CMC(y, t, x) &\in \left( \frac{1}{p_\obs(t|x)} \right) \partial f_{t, x}^* \left( \frac{\eta_\CMC(t, x) - r(y, t, x)}{\eta_f} \right),
\label{eq:cmc_solution_characterization} \\
w^*_\KCMC(y, t, x) &\in \left( \frac{1}{p_\obs(t|x)} \right) \partial f_{t, x}^* \left( \frac{{\eta_\KCMC}^T\bspsi(t, x) - r(y, t, x)}{\eta_f'} \right),
\label{eq:kcmc_solution_characterization} \\
\hat w_\KCMC(y, t, x) &\in \left( \frac{1}{p_\obs(t|x)} \right) \partial f_{t, x}^* \left( \frac{{\eta_\KCMC'}^T\bspsi(t, x) - r(y, t, x)}{\eta_f''} \right).
\label{eq:empirical_kcmc_solution_characterization}
\end{align}  
\end{lemma}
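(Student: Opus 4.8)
The plan is to establish all three characterizations \eqref{eq:cmc_solution_characterization}--\eqref{eq:empirical_kcmc_solution_characterization} with a single Lagrangian-duality argument, exploiting the strong duality (via Slater's condition) already noted for the problems in \eqref{eq:w_solutions}, and reducing the inner minimization to a \emph{pointwise} Fenchel-conjugate problem whose minimizer is governed by \eqref{eq:fenchel_conjugate_solution}. First I would rewrite each problem in terms of the reparametrized weight $\tilde w = p_\obs(T|X)w$ and the quantity $r$: the objective becomes $\EE_\obs[\tilde w\, r]$ (or $\hat\EE_n[\tilde w\, r]$), the f-constraint becomes $\EE_\obs[f_{T,X}(\tilde w)]\leq\gamma$, and the moment equalities become $\EE_\obs[(\tilde w-1)\psi_d]=0$ for KCMC, respectively $\EE_\obs[(\tilde w-1)\mid t,x]=0$ for every $(t,x)$ in the CMC case. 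The non-negativity $\tilde w\geq 0$ I would fold directly into the effective domain of $f_{t,x}$, which is contained in $[0,\infty)$ for both the f-divergence and the box instances, so that no separate multiplier is required and $\partial f^*_{t,x}$ automatically returns non-negative values.

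For the KCMC problem I would then form the Lagrangian with a scalar multiplier $\eta_f\geq 0$ for the f-constraint and a vector multiplier $\eta_\KCMC\in\RR^D$ for the $D$ moment equalities, and invoke strong duality to exchange the order of optimization. Since the objective and all constraints are expectations of functions of the single scalar $\tilde w(y,t,x)$, the inner minimization over $\tilde w$ decomposes into an independent pointwise minimization at each $(y,t,x)$,
\begin{equation*}
\min_{u}\left[\eta_f f_{t,x}(u) + u\bigl(r(y,t,x) - \eta_\KCMC^T\bspsi(t,x)\bigr)\right],
\end{equation*}
where the sign of the (unconstrained) equality multipliers is chosen for convenience. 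Factoring out $\eta_f$ and comparing with $f^*_{t,x}(v)=\sup_u\{uv-f_{t,x}(u)\}$, I would recognize this as $-\eta_f f^*_{t,x}(v)$ evaluated at $v=\bigl(\eta_\KCMC^T\bspsi(t,x)-r(y,t,x)\bigr)/\eta_f$. By the conjugate-solution property \eqref{eq:fenchel_conjugate_solution}, valid because $f_{t,x}$ is closed and convex, the pointwise minimizer $u^\ast$ satisfies $u^\ast\in\partial f^*_{t,x}(v)$; translating back through $w=\tilde w/p_\obs$ yields exactly \eqref{eq:kcmc_solution_characterization}.

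The remaining two cases follow the same template. The CMC characterization \eqref{eq:cmc_solution_characterization} is identical except that the continuum of conditional moment equalities over $\cT\times\cX$ produces a function-valued multiplier $\eta_\CMC(t,x)$ in place of the finite sum $\eta_\KCMC^T\bspsi(t,x)$; here I would handle the infinite-dimensional dual at the level already adopted in the excerpt, pairing the constraints against $p_\obs$ so the pointwise term acquires $\eta_\CMC(t,x)$. The empirical characterization \eqref{eq:empirical_kcmc_solution_characterization} is word-for-word the KCMC argument with $\EE_\obs$ replaced by $\hat\EE_n$, producing its own multipliers $\eta_\KCMC'$ and $\eta_f''$.

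The two steps I expect to demand the most care are the pointwise (interchange-of-infimum-and-integral) reduction and the claim that the f-multiplier is strictly positive. For the former I would appeal to decomposability of the feasible set: because $\tilde w\geq 0$ and the integrand are separable across $(y,t,x)$, the infimum over the function $\tilde w$ equals the integral of pointwise infima, by the standard interchange theorem for normal integrands. For the latter, the characterization divides by $\eta_f$, so I must exclude $\eta_f=0$. I would argue by complementary slackness that the f-constraint binds at the optimum: in the genuine f-divergence case a slack constraint would let the pointwise minimizer of the linear objective run to the boundary of $\mathrm{dom}(f_{t,x})$, where the divergence is unbounded and violates $\gamma$, whereas in the box case $f^*_{t,x}$ is positively homogeneous, so any $\eta_f>0$ (say $\eta_f=1$) gives the same bang-bang subgradient. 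Establishing this nondegeneracy uniformly across the family of admissible $f_{t,x}$ is the genuine obstacle; the remaining manipulations are routine convex-analysis bookkeeping.
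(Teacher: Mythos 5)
Your proposal is correct and follows essentially the same route as the paper: reparametrize by $\tilde w = p_\obs(t|x)\,w$, invoke strong duality via Slater's condition, reduce the inner minimization over $\tilde w$ to a pointwise Fenchel conjugate, and read the primal solution off the subgradient property \eqref{eq:fenchel_conjugate_solution}, with the CMC case using a function-valued multiplier and the empirical case replacing $\EE_\obs$ by $\hat\EE_n$. The two points where you invest extra care---folding $\tilde w \geq 0$ into the effective domain of $f_{t,x}$ so no separate multiplier is needed, and rescuing the division by $\eta_f$ in the box case via positive homogeneity of $\partial f^*_{t,x}$---are precisely the corrections the paper itself records in its errata (Appendix \ref{app:errata}), where the original derivation omitted the non-negativity multiplier and incorrectly assumed the f-constraint is always tight.
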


\begin{proof}
    See below.
\end{proof}

\subsubsection{Characterization of \texorpdfstring{$w^*_\CMC$}{the solution for conditional moment constraint}}
By using the strong duality, we can transform the original problem for $V_\textinf^\CMC$ as
\begin{align}
V_\textinf^\CMC
&= \inf_{w\in\cW^\CMC}\EE[\tilde w(Y, T, X)r(Y, T, X)] \\
&= \inf_{\tilde w}\sup_{\substack{\eta_\CMC:\cT\times\cX\to\RR,\\ \eta_f\geq 0}}
    \EE[\tilde w r] 
    - \EE\left[\eta_\CMC(T, X)\EE[\tilde w - 1|T, X]\right] 
    + \eta_f\left(\EE[f_{T, X}(\tilde w)] - \gamma\right) \\
&= \sup_{\substack{\eta_\CMC:\cT\times\cX\to\RR,\\ \eta_f\geq 0}}\inf_{\tilde w}
    \EE[\tilde w r] 
    - \EE\left[\eta_\CMC(T, X)(\tilde w - 1)\right] 
    + \eta_f\left(\EE[f_{T, X}(\tilde w)] - \gamma\right) \\
&= \sup_{\substack{\eta_\CMC:\cT\times\cX\to\RR,\\ \eta_f > 0}}
    - \eta_f \gamma 
    + \EE[\eta_\CMC]
    - \eta_f \EE\left[\sup_{\tilde w}\left\{\left(\frac{\eta_\CMC - r}{\eta_f}\right)\tilde w - f_{T, X}(\tilde w)\right\}\right]  \\
&= \sup_{\substack{\eta_\CMC:\cT\times\cX\to\RR,\\ \eta_f > 0}}
    - \eta_f \gamma 
    + \EE[\eta_\CMC]
    - \eta_f \EE\left[f_{T, X}^*\left(\frac{\eta_\CMC - r}{\eta_f}\right)\right].
    \label{app-eq:v_inf_conditional_moment_constraints_dual}
\end{align}
In the second last line, we assumed $\eta_f > 0$\st{, as the inner minimization achieves $-\infty$ if we take $\eta_f = 0$ (unless $r(Y, T ,X)$ does not depend on $Y$ almost surely)} (See the list of errata in Appendix \ref{app:errata}).

Now, as primal solution $w^*_\CMC$ must satisfy the Karush–Kuhn–Tucker (KKT) conditions, we can take the maximizers of \eqref{app-eq:v_inf_conditional_moment_constraints_dual} as $\eta_f^*$ and $\eta_\CMC^*(t, x)$.
Using property \eqref{eq:fenchel_conjugate_solution} of the Fenchel conjugate, we can obtain solution form of $w^*_\CMC$
\begin{equation}
    w^*_\CMC(y, t, x) \in  \left( \frac{1}{p_\obs(t|x)} \right) \partial f_{t, x}^* \left( \frac{\eta_\CMC^*(t, x) - r(y, t, x)}{\eta_f^*} \right),
\end{equation}
which proves \eqref{eq:cmc_solution_characterization}. Here, factor $1/p_\obs(t|x)$ in front of the subgradient appears because of reparametrization $\tilde w(y, t, x) = p_\obs(t|x)w(y, t, x)$.

Now we are interested in dual solutions $\eta_f^*$ and $\eta_\CMC^*$.
To characterize the dual solution, we take the stationary conditions for the dual problem as
\begin{align}
0 &\in \partial_{\eta_f} \left(
    - \eta_f \gamma 
    + \EE[\eta_\CMC]
    - \eta_f \EE\left[f_{T, X}^*\left(\frac{\eta_\CMC - r}{\eta_f}\right)\right]
\right) \\
&= - \gamma 
    - \EE\left[f_{T, X}^*\left(\frac{\eta_\CMC - r}{\eta_f}\right)\right]
    + \EE\left[\left(\frac{\eta_\CMC - r}{\eta_f}\right) \cdot \partial f_{T, X}^*\left(\frac{\eta_\CMC - r}{\eta_f}\right)\right] 
    \label{eq:eta_f_characterization_conditional_moment_constraints}
\end{align}
and
\begin{align}
0 &\in \partial_{\eta_\CMC} \left(
    - \eta_f \gamma 
    + \EE[\eta_\CMC]
    - \eta_f \EE\left[f_{T, X}^*\left(\frac{\eta_\CMC - r}{\eta_f}\right)\right]
\right) \\
&= 1 - \EE\left[\partial f_{T, X}^*\left(\frac{\eta_\CMC - r}{\eta_f}\right)|T=t, X=x\right],
    \label{eq:eta_cmc_characterization_conditional_moment_constraints}
\end{align}
where we used the functional gradient on measure $p_\obs$ in the second subgradient condition. 
As $\tilde w^*_\CMC(y, t, x) := p_\obs(t|x) \cdot w^*_\CMC(y, t, x) \in \partial f_{t, x}^* \left( \frac{\eta_\CMC^*(t, x) - r(y, t, x)}{\eta_f^*} \right)$, we can see that the second condition corresponds to the conditional moment constraints.

In general cases, it is difficult to derive analytical expressions for solutions $\eta_f^*$ and $\eta^*_\CMC$, as well as $w^*_\CMC$.
However, we can actually obtain their explicit expressions in the case of box constraints.

\begin{example}[Solutions for box constraints]\label{ex:box_constraint_analytical_solution}
Let us consider the box constraints in the form of the conditional f-constraint.
Here, for notational simplicity, we omit the subscript of $a_{\tilde w}$ and $b_{\tilde w}$.
For example, we will simply write $f_{t, x}(\tilde w) = I_{[a(t, x), b(t, x)]}(\tilde w)$.
Then, for this choice of function $f_{t, x}$, we can derive its conjugate and its subgradient as
\begin{equation}
    f_{t, x}^*(v) = \begin{cases}
        a(t, x)v & \text{ if \ \ }v < 0, \\
        0 & \text{ if \ \ }v = 0, \\
        b(t, x)v & \text{ if \ \ }v > 0,
    \end{cases}
\end{equation}
and 
\begin{equation}
    \partial f_{t, x}^*(v) = \begin{cases}
        a(t, x) & \text{ if \ \ }v < 0, \\
        [a(t, x), b(t, x)] & \text{ if \ \ }v = 0, \\
        b(t, x) & \text{ if \ \ }v > 0.
    \end{cases}
\end{equation}
Substituting the above expression of $\partial f_{t, x}^*$ in the subgradient term of condition \eqref{eq:eta_cmc_characterization_conditional_moment_constraints}, we can derive more explicit expression
\begin{align}
&\EE\left[\partial f_{T, X}^*\left(\frac{\eta_\CMC - r}{\eta_f}\right)|T=t, X=x\right] \\
&= \PP(r < \eta_\CMC(t, x)) \cdot b(t, x)
        + \PP(r = \eta_\CMC(t, x)) \cdot [a(t, x), b(t, x)]
        + \PP(r > \eta_\CMC(t, x)) \cdot a(t, x).
\end{align}
Here, we used the box constraints' property $a(t, x) \leq 1 \leq b(t, x)$, which follows from the requirement that $f_{t, x}$ must satisfy $f_{t, x}(1)=0$ for any $t\in\cT$ and $x\in\cX$.
Assuming that the conditional distribution of $r(Y, T, X)$ given $T=t$ and $X=x$ yields continuous distribution for any $t\in\cT$ and $x\in\cX$, the second subgradient condition becomes
\begin{equation}
1 = \PP(r \leq \eta_\CMC(t, x)) \cdot b(t, x)
        + \PP(r > \eta_\CMC(t, x)) \cdot a(t, x).
\end{equation}
This implies that $\PP(r \leq \eta_\CMC(t, x)) = \frac{1 - a(t, x)}{b(t, x) - a(t, x)}=: \tau(t, x)$, and therefore,
\[
    \eta^*_\CMC(t, x) = \left( \frac{\pi(t|x)}{p_\obs(t|x)} \right)Q(t, x)
\] 
where $Q(t, x)$ was defined as the $\tau(t, x)$-th quantile of the conditional distribution of $Y$ given $T=t$ and $X=x$.
From this dual solution, the primal solution can also be recovered using \eqref{eq:cmc_solution_characterization} as
\begin{equation}
    w^*_\CMC(y, t, x) =
    \begin{cases}
        b(t, x) & \text{ if \ \ }y \leq Q(t, x), \\
        a(t, x) & \text{ otherwise}.
    \end{cases}
\end{equation}
\end{example}

\subsubsection{Characterization of \texorpdfstring{$w^*_\KCMC$}{the solution for the population kernel conditional moment constraints}}
Now, we derive the characterization of $w^*_\KCMC$.
Let $\bspsi(t, x) = \left(\psi_1(T, X), \ldots, \psi_D(T, X)\right)^T$.
We can use exactly the same technique as above and reach a similar characterization of the solution as
\begin{align}
V_\textinf^\KCMC
&= \inf_{w\in\cW^\KCMC}\EE[\tilde w(Y, T, X)r(Y, T, X)] \\
&= \inf_{\tilde w}\sup_{\substack{\eta_\KCMC\in\RR^D,\\ \eta_f\geq 0}}
    \EE[\tilde w r] 
    - \EE\left[(\tilde w - 1) {\eta_\KCMC}^T\bspsi\right] 
    + \eta_f\left(\EE[f_{T, X}(\tilde w)] - \gamma\right) \\
&= \sup_{\substack{\eta_\KCMC\in\RR^D,\\ \eta_f\geq 0}}
    \EE[\tilde w r] 
    - \EE\left[(\tilde w - 1) {\eta_\KCMC}^T\bspsi\right] 
    + \eta_f\left(\EE[f_{T, X}(\tilde w)] - \gamma\right) \\
&= \sup_{\substack{\eta_\KCMC\in\RR^D,\\ \eta_f > 0}}
    - \eta_f \gamma 
    + {\eta_\KCMC}^T\EE[\bspsi]
    - \eta_f \EE\left[\sup_{\tilde w}\left\{\left(\frac{{\eta_\KCMC}^T\bspsi - r}{\eta_f}\right)\tilde w - f_{T, X}(\tilde w)\right\}\right]  \\
&= \sup_{\substack{\eta_\KCMC\in\RR^D,\\ \eta_f > 0}}
    - \eta_f \gamma 
    + {\eta_\KCMC}^T\EE[\bspsi]
    - \eta_f \EE\left[f_{T, X}^*\left( \frac{{\eta_\KCMC}^T\bspsi - r}{\eta_f}\right)\right].
    \label{app-eq:v_inf_kernel_conditional_moment_constraints_dual}
\end{align}
Now, using the maximizers of dual problem \eqref{app-eq:v_inf_kernel_conditional_moment_constraints_dual} $\eta_f^*$ and $\eta_\KCMC^*$, we can obtain a characterization of $w^*_\KCMC$ as
\begin{equation}
    w^*_\KCMC(y, t, x) \in  \left( \frac{1}{p_\obs(t|x)} \right) \partial f^* \left( \frac{{\eta_\KCMC^*}^T\bspsi(t, x) - r(y, t, x)}{\eta_f^*} \right),
    \label{eq:w_solution_kernel_conditional_moment_constraints}
\end{equation}
which proves \eqref{eq:kcmc_solution_characterization}.

Now we study the characterization of dual solutions $\eta_f^*$ and $\eta_\CMC^*$.
Again, we take the stationary conditions as
\begin{align}
0 &\in \partial_{\eta_f} \left(
    - \eta_f \gamma 
    + {\eta_\KCMC}^T\EE[\bspsi]
    - \eta_f \EE\left[f_{T, X}^*\left(\frac{{\eta_\KCMC}^T\bspsi - r}{\eta_f}\right)\right]
\right) \\
&= - \gamma 
    - \EE\left[f_{T, X}^*\left(\frac{{\eta_\KCMC}^T\bspsi - r}{\eta_f}\right)\right]
    + \EE\left[\left(\frac{{\eta_\KCMC}^T\bspsi - r}{\eta_f}\right) \cdot \partial f_{T, X}^*\left(\frac{{\eta_\KCMC}^T\bspsi - r}{\eta_f}\right)\right] 
    \label{eq:eta_f_characterization_kernel_conditional_moment_constraints}
\end{align}
and
\begin{align}
0 &\in \partial_{\eta_\KCMC} \left(
    - \eta_f \gamma 
    + {\eta_\KCMC}^T\EE[\bspsi]
    - \eta_f \EE\left[f_{T, X}^*\left(\frac{{\eta_\KCMC}^T\bspsi - r}{\eta_f}\right)\right]
\right) \\
&= \EE\left[\bspsi \left(
    1 - \partial f_{T, X}^*\left(\frac{{\eta_\KCMC}^T\bspsi - r}{\eta_f}\right)
\right)|T=t, X=x\right].
\label{eq:eta_kcmc_characterization_kernel_conditional_moment_constraints}
\end{align}
Again, as $\tilde w^*_\KCMC(y, t, x) = p_\obs(t|x) \cdot w^*_\KCMC(y, t, x) \in \partial f_{t, x}^* \left( \frac{{\eta_\KCMC^*}^T\bspsi(t, x) - r(y, t, x)}{\eta_f^*} \right)$, we can see that the second condition corresponds to the kernel conditional moment constraints.

\subsubsection{Characterization of \texorpdfstring{$\hat w_\KCMC$}{the solution for the kernel conditional moment constraints}}
Again, using the same techniques, we can derive the characterization of $\hat w_\KCMC$.
By exchanging $\EE$ with $\hat \EE_n$ in the proof for $w^*_\KCMC$ above and writing the maximizers of dual problem
\begin{equation}
    \sup_{\substack{\eta_\KCMC\in\RR^D,\\ \eta_f > 0}}
    - \eta_f \gamma 
    + {\eta_\KCMC}^T\hat\EE_n[\bspsi]
    - \eta_f \hat\EE_n\left[f_{T, X}^*\left( \frac{{\eta_\KCMC}^T\bspsi - r}{\eta_f}\right)\right].
    \label{app-eq:empirical_v_inf_kernel_conditional_moment_constraints_dual}
\end{equation}
as $\hat\eta_f$ and $\hat\eta_\KCMC$, we get
\begin{equation}
    \hat w_\KCMC(y, t, x) \in  \left( \frac{1}{p_\obs(t|x)} \right)
    \partial f_{t, x}^* \left(
        \frac{{{}\hat\eta_\KCMC}^T\bspsi(t, x) - r(y, t, x)}{{\hat\eta}_f}
    \right),
    \label{eq:empirical_w_solution_kernel_conditional_moment_constraints}
\end{equation}
which proves \eqref{eq:empirical_kcmc_solution_characterization}.

\subsection{Specification Error}

Using the above characterization of the solutions, we can find a condition under which the specification error of estimator $\hat V_\textinf^\KCMC(\pi)$ becomes zero for policy $\pi$ so that $\left| V_\textinf^\KCMC(\pi) - V_\textinf^\CMC(\pi) \right| = 0$.

\begin{theorem}[No specification error]\label{app-th:no_specification_error}
Let $\eta^*_\CMC(t, x)$ be the solution of dual problem \eqref{app-eq:v_inf_conditional_moment_constraints_dual} for $V_\textinf^\CMC(\pi)$. Then, if 
\begin{equation}
    \eta_\CMC^* \in \mathrm{span}\left(\{\psi_1, \ldots, \psi_D\}\right),
    \label{app-eq:eta_cmc_in_kernel_subspace}
\end{equation}
we have
\begin{equation}
    V_\textinf^\CMC(\pi) = V_\textinf^\KCMC(\pi).
\end{equation}
\end{theorem}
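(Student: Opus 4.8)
The plan is to work entirely on the dual side, exploiting the fact that the two lower bounds already come equipped with the explicit dual formulations \eqref{app-eq:v_inf_conditional_moment_constraints_dual} and \eqref{app-eq:v_inf_kernel_conditional_moment_constraints_dual}, both of which are exact by strong duality (Slater's condition holds, as noted above). The crucial observation I would make first is that the KCMC dual is literally the CMC dual with the dual variable $\eta_\CMC$ constrained to lie in the finite-dimensional subspace $\mathrm{span}(\{\psi_1,\ldots,\psi_D\})$. Indeed, if I substitute $\eta_\CMC = {\eta_\KCMC}^T\bspsi$ into the CMC dual objective, then $\EE[\eta_\CMC] = {\eta_\KCMC}^T\EE[\bspsi]$ and the conjugate term $\EE[f_{T,X}^*((\eta_\CMC - r)/\eta_f)]$ becomes $\EE[f_{T,X}^*(({\eta_\KCMC}^T\bspsi - r)/\eta_f)]$, which is exactly the KCMC dual objective. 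Thus the two supremum problems differ only in that the KCMC one optimizes over the subspace while the CMC one optimizes over all functions $\eta_\CMC:\cT\times\cX\to\RR$.

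Given this identification, the first inequality is immediate: restricting the feasible region of a supremum cannot increase its value, so $V_\textinf^\KCMC \leq V_\textinf^\CMC$. This is consistent with the primal-side inclusion $\cW^\CMC \subseteq \cW^\KCMC$ and holds regardless of the hypothesis.

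For the reverse inequality I would invoke the span condition. Let $(\eta_\CMC^*, \eta_f^*)$ be an optimal dual pair for the CMC problem, so that the CMC dual objective at this point equals $V_\textinf^\CMC$. By assumption $\eta_\CMC^* \in \mathrm{span}(\{\psi_1,\ldots,\psi_D\})$, so there is a coefficient vector $\bar\eta \in \RR^D$ with $\eta_\CMC^* = {\bar\eta}^T\bspsi$. Then $(\bar\eta, \eta_f^*)$ is feasible for the KCMC dual, and by the substitution identity above the KCMC dual objective at $(\bar\eta, \eta_f^*)$ equals the CMC dual objective at $(\eta_\CMC^*, \eta_f^*)$, namely $V_\textinf^\CMC$. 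Taking the supremum over the KCMC dual variables gives $V_\textinf^\KCMC \geq V_\textinf^\CMC$. Combining with the previous inequality yields $V_\textinf^\KCMC = V_\textinf^\CMC$, i.e.\ zero specification error.

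The step to guard most carefully is the appeal to strong duality together with attainment of the infinite-dimensional dual maximizer $\eta_\CMC^*$: the whole argument presupposes that the CMC dual value is actually achieved at some function $\eta_\CMC^*$ (not merely approached), so that the statement ``$\eta_\CMC^* \in \mathrm{span}$'' is meaningful, and that the $\inf$--$\sup$ interchange used to derive \eqref{app-eq:v_inf_conditional_moment_constraints_dual} is valid. I expect this to be the only genuine subtlety; once attainment and exact duality are granted, the remainder is the one-line restriction-of-supremum comparison described above.
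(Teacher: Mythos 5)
Your proof is correct and takes essentially the same route as the paper's: both identify the KCMC dual as the CMC dual restricted to $\mathrm{span}\left(\{\psi_1,\ldots,\psi_D\}\right)$, note that under the span condition the unrestricted optimizer is feasible for the restricted problem (hence the two dual values coincide), and transfer back to the primal values via strong duality. Your explicit two-inequality structure and the caveat about attainment of $\eta_\CMC^*$ are just slightly more careful renderings of the same argument, since the paper's theorem statement already presupposes that the dual solution exists.
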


\begin{proof}
Take $\eta^*_f$ such that $\left(\eta^*_\CMC(t, x), \ \eta^*_f\right)$ is the solution of dual problem \eqref{app-eq:v_inf_conditional_moment_constraints_dual} for $V_\textinf^\CMC$.
Then, we can take multiplier $\eta_\KCMC^*$ that satisfies $\eta_\CMC^* = {\eta_\KCMC^*}^T \bspsi$.
Now, we can see that dual problem \eqref{app-eq:v_inf_kernel_conditional_moment_constraints_dual} for $V_\textinf^\KCMC$ can be considered as the restricted version of dual problem \eqref{app-eq:v_inf_conditional_moment_constraints_dual} for $V_\textinf^\CMC$
where $\eta_\CMC$ is constrained to the subspace spanned by $\{\psi_1, \ldots, \psi_D\}$.
Therefore, as restricted solution $\left(\eta^*_\KCMC, \ \eta^*_f\right)$ achieves the same value as the solution of the non-restricted problem, it is clearly a solution of restricted problem \eqref{app-eq:v_inf_kernel_conditional_moment_constraints_dual}.
Finally, owing to the strong duality, we can calculate the values of $V_\textinf^\CMC$ and $V_\textinf^\KCMC$ by the values of the dual problems, which implies $V_\textinf^\CMC = V_\textinf^\KCMC$.
\end{proof}

Interestingly, with the above result, we can derive quantile balancing constraint \eqref{eq:quantile_balancing_constraints} for the previously proposed sharp estimator by \citet{dorn2022sharp}.

\begin{example}[Derivation of quantile balancing estimator \citep{dorn2022sharp}]\label{ex:quantile_balancing_as_a_special_case}
Let us consider the same box constraints as Example \ref{ex:box_constraint_analytical_solution}.
For this problem, we know the analytical form of dual solution $\eta^*_\CMC(t, x) = \left(\frac{\pi(t|x)}{p_\obs(t| x)}\right)Q(t, x)$.
Therefore, we can take $D=1$ and set $\psi_1(t, y)= \left(\frac{\pi(t|x)}{p_\obs(t| x)}\right)Q(t, x)$ to meet condition \eqref{app-eq:eta_cmc_in_kernel_subspace} in Theorem \ref{app-th:no_specification_error} to obtain the kernel conditional moment constraint with no specification error.
\end{example}

\subsection{Consistency of Policy Evaluation and Learning}

Lastly, we study empirical estimator $\hat V_\textinf^\KCMC$ and provide convergence guarantees for policy evaluation and learning.
First, we prove the consistency of our estimator for fixed policy $\pi$ by reduction of our problem to the M-estimation \citep{van2000empirical} using the dual formulation.
We similarly show a reduction of policy learning problem $\max_{\pi\in\Pi}\hat V_\textinf^\KCMC$ to the M-estimation and prove the consistency of the learned policy parameter when policy class is finite-dimensional and concave.
This reduction to the M-estimation significantly simplified our proof compared to the proof in \citet{kallus2021minimax} using uniform convergence, because we can take advantage of the well-studied theory of M-estimation.
Though their approach using uniform convergence is very powerful, we found it not immediately applicable to our work, due to the difficulty of taking the empirical moment constraints into account.

\subsubsection{Consistency of Policy Evaluation}
To prove the consistency of policy evaluation and learning, we will make use of the two following convergence lemmas for loss function $\ell:\Theta\times\cZ\to\RR$, where $\Theta\subseteq\RR^K$ for some $K$ and $\cZ:=\cY\times\cT\times\cX$.
We assume that for $Z:=(Y, T, Z)$, the loss function satisfies $\EE|\ell_\theta(Z)| < \infty$ for any $\theta\in\Theta$.

\begin{lemma}[Uniform convergence on compact space, {\citet[Lemma 5.2.2.]{van2000empirical}}]\label{lemma:uniform_convergence_compact_space}
Assume that parameter space $\left(\Theta, \|\cdot\|\right)$ is compact.
Also assume that $\theta \mapsto \ell_\theta$, $\theta \in \Theta$ is continuous and it has an $L^1$ envelope so that $\EE[G(Z)]<\infty$ for $G(z):=\sup_{\theta\in\Theta} \left|\ell_\theta(z)\right|$.
Then, $\sup_{\theta\in\Theta}\left| \hat\EE_n[\ell_\theta(Z) - \EE[\ell_\theta(Z) \right| \pto 0$.
\end{lemma}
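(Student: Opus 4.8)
The plan is to establish this uniform law of large numbers by the classical covering-and-continuity argument, which reduces the uniform statement to a finite maximum of pointwise laws of large numbers controlled by an oscillation bound. Since the result is quoted from the literature, I would only sketch the standard proof adapted to the present notation.

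First I would fix $\epsilon > 0$ and introduce, for each $\theta \in \Theta$ and radius $\rho > 0$, the oscillation function $h_{\theta, \rho}(z) := \sup_{\theta' :\, \|\theta' - \theta\| \le \rho} \left| \ell_{\theta'}(z) - \ell_\theta(z) \right|$. Because $\Theta$ is compact, hence separable, and $\theta \mapsto \ell_\theta(z)$ is continuous, the supremum may be restricted to a countable dense subset, so each $h_{\theta, \rho}$ is measurable. Continuity gives $h_{\theta, \rho}(z) \downarrow 0$ as $\rho \downarrow 0$ for every fixed $z$, and the envelope bound $h_{\theta, \rho}(z) \le 2 G(z)$ together with $\EE[G] < \infty$ lets me apply dominated convergence to conclude $\EE[h_{\theta, \rho}] \to 0$ as $\rho \to 0$. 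Hence for each $\theta$ there is $\rho_\theta > 0$ with $\EE[h_{\theta, \rho_\theta}] < \epsilon$.

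Next I would exploit compactness. The open balls $\{B(\theta, \rho_\theta)\}_{\theta \in \Theta}$ cover $\Theta$, so a finite subcover $B(\theta_1, \rho_1), \ldots, B(\theta_N, \rho_N)$ exists. For an arbitrary $\theta$ lying in $B(\theta_j, \rho_j)$, the triangle inequality yields $\left| \hat\EE_n[\ell_\theta] - \EE[\ell_\theta] \right| \le \hat\EE_n[h_{\theta_j, \rho_j}] + \left| \hat\EE_n[\ell_{\theta_j}] - \EE[\ell_{\theta_j}] \right| + \EE[h_{\theta_j, \rho_j}]$, and taking the supremum over $\theta$ bounds the left-hand side by a maximum over the finite index set $j = 1, \ldots, N$ of each of the three terms separately.

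Finally I would invoke the pointwise weak law of large numbers. For each of the finitely many indices $j$, both $\hat\EE_n[\ell_{\theta_j}] \pto \EE[\ell_{\theta_j}]$ and $\hat\EE_n[h_{\theta_j, \rho_j}] \pto \EE[h_{\theta_j, \rho_j}]$ hold, since both summands are integrable; as $N$ is finite, the corresponding maxima converge in probability to their population counterparts. The middle maximum therefore vanishes, while the first and third maxima are each eventually within $\epsilon$ of $\max_j \EE[h_{\theta_j, \rho_j}] < \epsilon$, so $\limsup_n \sup_{\theta \in \Theta} \left| \hat\EE_n[\ell_\theta] - \EE[\ell_\theta] \right| \le 2\epsilon$ in probability, and letting $\epsilon \to 0$ finishes the argument. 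I expect the only genuine technical obstacle to be the measurability of the oscillation functions $h_{\theta, \rho}$ and the rigorous passage from pointwise continuity of $\theta \mapsto \ell_\theta$ to $L^1$-continuity via the dominated convergence theorem; the remainder is bookkeeping with the triangle inequality and the finite subcover.
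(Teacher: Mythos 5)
Your proof is correct, but there is nothing in the paper to compare it against: the paper states this lemma without proof, quoting it directly from the cited reference (\citet[Lemma 5.2.2.]{van2000empirical}). What you have reconstructed is the standard textbook argument for the uniform weak law of large numbers on a compact parameter space, and all the essential ingredients are in place: measurability of the oscillation functions via separability of the compact $\Theta$, the passage from pointwise continuity of $\theta \mapsto \ell_\theta(z)$ to $L^1$-continuity by dominated convergence with the envelope $2G$, the finite subcover, the three-term triangle-inequality decomposition, and the pointwise WLLN applied to the finitely many centers and oscillation functions (both integrable thanks to the envelope). The only blemish is the final constant bookkeeping: with high probability the empirical oscillation maximum is below $2\epsilon$, the population oscillation maximum is below $\epsilon$ deterministically, and the middle term vanishes, so the limsup bound should be $3\epsilon$ rather than $2\epsilon$; since $\epsilon$ is arbitrary this is immaterial to the conclusion.
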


\begin{lemma}[Consistency of convex M-estimation, {\citet[Lemma 5.2.3.]{van2000empirical}}]\label{lemma:m_estimation}
Let us define $\theta_0 \in\arg\min_{_\theta \in \Theta}\EE[\ell_\theta(Z)]$ and its M-estimator $\hat\theta_n \in\arg\min_{\theta \in \Theta}\hat\EE_n[\ell_\theta(Z)]$.
Suppose that $\theta_0$ is the unique minimizer and that $\Theta\subseteq\RR^k$ is convex.
Also assume that $\theta \mapsto \ell_\theta$, $\theta \in \Theta$ is continuous and convex, satisfying $\EE[G_\varepsilon]<\infty$ for $G_\varepsilon(z):=\sup_{\theta\in\Theta:\ \|\theta - \theta_0\| \leq \varepsilon} \left|\ell_\theta(z)\right|$ for some $\varepsilon > 0$.
Then, $\hat \theta_n \pto \theta_0$.
\end{lemma}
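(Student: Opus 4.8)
The plan is to upgrade the pointwise law of large numbers to \emph{locally uniform} convergence by exploiting convexity, and then to trap the empirical minimizer $\hat\theta_n$ inside an arbitrarily small ball around $\theta_0$. The uniqueness of the population minimizer and the convexity of the objective are the two structural facts that make this work without any rate or smoothness assumptions.

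First I would apply the law of large numbers at each fixed $\theta$: since $\EE|\ell_\theta(Z)| < \infty$, the empirical objective $M_n(\theta) := \hat\EE_n[\ell_\theta(Z)]$ converges in probability to $M(\theta) := \EE[\ell_\theta(Z)]$. Because $\theta \mapsto \ell_\theta(z)$ is convex for each $z$, both $M_n$ and $M$ are convex on the convex set $\Theta$, and the local envelope condition $\EE[G_\varepsilon] < \infty$ ensures $M$ is finite, hence continuous, on the open ball $B(\theta_0, \varepsilon)$. The key step is then the \emph{convexity lemma}: a sequence of finite convex functions converging pointwise on an open convex set converges uniformly on every compact subset. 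Applying this to $M_n \pto M$ yields $\sup_{\theta \in C} |M_n(\theta) - M(\theta)| \pto 0$ for every compact $C \subseteq B(\theta_0, \varepsilon)$.

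Next I would fix any $\rho \in (0, \varepsilon]$ and set $\delta := \inf_{\|\theta - \theta_0\| = \rho} M(\theta) - M(\theta_0)$, which is strictly positive because $\theta_0$ is the unique minimizer and $M$ is continuous on the compact sphere $S_\rho := \{\theta : \|\theta - \theta_0\| = \rho\}$. By the uniform convergence on $S_\rho \cup \{\theta_0\}$, with probability tending to one we have $M_n(\theta) > M_n(\theta_0)$ for every $\theta \in S_\rho$. A convexity trapping argument then forces $\|\hat\theta_n - \theta_0\| < \rho$: if not, the segment from $\theta_0$ to $\hat\theta_n$ would meet $S_\rho$ at some $\theta^* = (1-t)\theta_0 + t\hat\theta_n$ with $t \in (0,1)$, and convexity together with $M_n(\hat\theta_n) \leq M_n(\theta_0)$ would give $M_n(\theta^*) \leq (1-t)M_n(\theta_0) + t M_n(\hat\theta_n) \leq M_n(\theta_0)$, contradicting the strict inequality on $S_\rho$. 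Since $\rho$ is arbitrary, $\hat\theta_n \pto \theta_0$.

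The main obstacle is precisely the convexity lemma converting pointwise into locally uniform convergence; once it is granted, the trapping argument and the well-separatedness of $\theta_0$ are routine. Since the statement is exactly Lemma 5.2.3 of \citet{van2000empirical}, in the paper I would simply invoke that reference rather than reproduce the argument above.
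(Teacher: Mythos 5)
The paper offers no proof of this lemma at all --- it is imported verbatim from \citet[Lemma 5.2.3.]{van2000empirical} --- and you correctly conclude by simply invoking that same reference, so your approach coincides with the paper's. Your supporting sketch (pointwise law of large numbers, the convexity lemma upgrading pointwise to locally uniform convergence on compacta, and the convex trapping argument around the well-separated unique minimizer) is the standard and sound argument underlying the cited result.
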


As our dual problem for policy evaluation \eqref{app-eq:v_inf_kernel_conditional_moment_constraints_dual} and \eqref{app-eq:empirical_v_inf_kernel_conditional_moment_constraints_dual} are concave maximization, we can immediately apply the above lemma as follows.

\begin{theorem}[Consistency of policy evaluation]
Define parameter space of $\left(\eta_f, \eta_\KCMC\right)$ as $\Theta \subseteq \RR_+\times\RR^D$.
Further, define 
$\theta_0 :=(\eta^*_f, \eta^*_\KCMC)$ as the solution of dual problem \eqref{app-eq:v_inf_kernel_conditional_moment_constraints_dual} for $V_\textinf^\KCMC$ and 
$\hat\theta_n :=(\hat\eta_f, \hat\eta_\KCMC)$ as the solution to dual problem \eqref{app-eq:empirical_v_inf_kernel_conditional_moment_constraints_dual} for $\hat V_\textinf^\KCMC$.
Now, assume that $\theta_0$ is unique and that $\theta_0\in\Theta$.
Also assume that $f^*_{t, x}:\RR\to\RR$ is continuous for any $t\in\cT$ and $x\in\cX$.
Define $\ell:\Theta\times\cZ\to\RR$ as
\begin{equation}
    \ell_\theta (t, y, x) := \eta_f \gamma - {\eta_\KCMC}^T \bspsi(t, x)
    + \eta_f
        f_{t, x}^*  \left( \frac{{\eta_\KCMC}^T \bspsi(t, x) - r(y, t, x)}{\eta_f} \right)
\end{equation}
so that it is the negative version of the inside of the expectation of dual objectives \eqref{app-eq:v_inf_kernel_conditional_moment_constraints_dual} and  \eqref{app-eq:empirical_v_inf_kernel_conditional_moment_constraints_dual}.
Furthermore, assume $\ell$ satisfies $\EE\left|\ell_\theta\right|<\infty$ for any $\theta\in\Theta$ and $\EE[G_\varepsilon]<\infty$ for $G_\varepsilon:=\sup_{\theta\in\Theta:\ \|\theta - \theta_0\| \leq \varepsilon} \left| \ell_\theta \right|$ for some $\varepsilon > 0$.
Then, we have 
$\hat\theta_n\pto\theta_0$
and $\hat V_\textinf^\KCMC  \pto V_\textinf^\KCMC$.
\end{theorem}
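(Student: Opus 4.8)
The plan is to recognize both the empirical and population dual problems as convex M-estimation problems and then invoke Lemma~\ref{lemma:m_estimation} (consistency of convex M-estimation) to obtain $\hat\theta_n \pto \theta_0$, after which convergence of the values follows by combining the uniform law of large numbers (Lemma~\ref{lemma:uniform_convergence_compact_space}) with strong duality. Since $\theta_0$ minimizes $\theta \mapsto \EE[\ell_\theta]$ and $\hat\theta_n$ minimizes $\theta \mapsto \hat\EE_n[\ell_\theta]$ over the convex set $\Theta \subseteq \RR_+ \times \RR^D$, it suffices to verify the hypotheses of Lemma~\ref{lemma:m_estimation}: uniqueness of $\theta_0$ and convexity of $\Theta$ are assumed, and $\EE|\ell_\theta| < \infty$ together with the envelope bound $\EE[G_\varepsilon] < \infty$ are assumed, so the only conditions left to check are that $\theta \mapsto \ell_\theta$ is \emph{continuous} and \emph{convex}.

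The main step is establishing joint convexity of $\ell_\theta$ in $\theta = (\eta_f, \eta_\KCMC)$. The terms $\eta_f\gamma - {\eta_\KCMC}^T\bspsi(t,x)$ are affine in $\theta$, so the only nontrivial term is
\begin{equation*}
    \eta_f\, f_{t,x}^*\!\left(\frac{{\eta_\KCMC}^T\bspsi(t,x) - r(y,t,x)}{\eta_f}\right).
\end{equation*}
The key observation is that this is exactly the \emph{perspective} of the convex conjugate $f_{t,x}^*$ evaluated along the affine map $(\eta_f, \eta_\KCMC) \mapsto \bigl(\eta_f,\ {\eta_\KCMC}^T\bspsi(t,x) - r(y,t,x)\bigr)$. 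Because $f_{t,x}^*$ is convex (being a Fenchel conjugate, as noted above), its perspective $(s, v) \mapsto s\, f_{t,x}^*(v/s)$ is jointly convex on $\{s > 0\}\times\RR$, and composition with an affine map preserves convexity; hence this term, and therefore $\ell_\theta$, is convex in $\theta$. Continuity of $\theta \mapsto \ell_\theta$ on $\Theta$ follows from the assumed continuity of $f_{t,x}^*$ and the continuity of the perspective on the open half-space $\eta_f > 0$ where $\Theta$ lives. With convexity and continuity in hand, Lemma~\ref{lemma:m_estimation} yields $\hat\theta_n \pto \theta_0$.

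It then remains to deduce $\hat V_\textinf^\KCMC \pto V_\textinf^\KCMC$. By strong duality (which holds since Slater's condition is satisfied, as discussed above), $V_\textinf^\KCMC = -\EE[\ell_{\theta_0}]$ and $\hat V_\textinf^\KCMC = -\hat\EE_n[\ell_{\hat\theta_n}]$, so it suffices to show $\hat\EE_n[\ell_{\hat\theta_n}] \pto \EE[\ell_{\theta_0}]$. I would split this via the triangle inequality into $\bigl|\hat\EE_n[\ell_{\hat\theta_n}] - \EE[\ell_{\hat\theta_n}]\bigr| + \bigl|\EE[\ell_{\hat\theta_n}] - \EE[\ell_{\theta_0}]\bigr|$. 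Restricting to the closed ball $B = \{\theta : \|\theta - \theta_0\| \leq \varepsilon\}$, which for $\varepsilon$ small enough lies entirely in $\Theta$ (as $\eta_f^* > 0$ keeps it away from the boundary $\eta_f = 0$), is therefore compact, and carries the $L^1$ envelope $G_\varepsilon$, Lemma~\ref{lemma:uniform_convergence_compact_space} gives $\sup_{\theta\in B}\bigl|\hat\EE_n[\ell_\theta] - \EE[\ell_\theta]\bigr| \pto 0$; since $\hat\theta_n\in B$ with probability tending to one, the first term vanishes. The second term vanishes because $\theta\mapsto\EE[\ell_\theta]$ is continuous at $\theta_0$ (by dominated convergence using envelope $G_\varepsilon$ and pointwise continuity of $\ell$) and $\hat\theta_n \pto \theta_0$. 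The step I expect to be the crux is the convexity verification: it hinges on identifying the conjugate term as a perspective function, so that joint convexity in $(\eta_f, \eta_\KCMC)$ comes for free, which is precisely what makes the convex M-estimation machinery applicable and lets us sidestep the more delicate uniform-convergence argument of \citet{kallus2021minimax}.
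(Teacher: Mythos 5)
Your proof is correct and follows essentially the same route as the paper's: reduce both dual problems to convex M-estimation, apply Lemma~\ref{lemma:m_estimation} to get $\hat\theta_n \pto \theta_0$, then combine uniform convergence on a compact neighborhood of $\theta_0$ (Lemma~\ref{lemma:uniform_convergence_compact_space}) with strong duality to transfer consistency to the values $\hat V_\textinf^\KCMC \pto V_\textinf^\KCMC$. Your explicit verification of joint convexity of $\ell_\theta$ via the perspective of $f^*_{t,x}$ composed with an affine map is a detail the paper leaves implicit (it simply asserts the dual is a concave maximization), but it is exactly the right justification, so this is the same argument carried out more carefully.
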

\begin{proof}
We can immediately apply Lemma \ref{lemma:m_estimation} and get $\hat\theta_n\pto\theta_0$.
Thus, $\hat\theta_n$ tend to the inside of compact set $\{\theta\in\Theta:\ \|\theta - \theta_0\| \leq \varepsilon\}$, in which we have the uniform convergence of $\hat\EE_n[\ell_\theta(Z)]$ to $\EE[\ell_\theta(Z)]$.
Therefore, we have $\hat V_\textinf^\KCMC = -\hat\EE_n[\ell_{\hat\theta_n}(Z)] \pto -\EE[\ell_{\hat\theta_n}(Z)] \pto  - \EE[\ell_{\theta_0}(Z)] = V_\textinf^\KCMC$.
\end{proof}

In practice, it is difficult to check the assumption of integrability condition $\EE|\ell_\theta| < \infty$ for any $\theta\in\Theta$ as well as the uniqueness of the solution.
However, it is possible in some cases to check $L^1$ envelope condition $\EE[G_\varepsilon] < \infty$, because local Lipschitzness of $f^*$ implies the existence of such $\varepsilon$.
For example, for the box constraints of Example \ref{ex:box_constraint_analytical_solution}, we know that $f^*$ is upper bounded by $b_{\tilde w}(t, x)$.
For the f-divergence constraints, the conjugate function $f^*$ for many choices of f-divergence is locally Lipschitz, as shown in Table \ref{table:f_conjugate_list}.


\subsubsection{Consistency of Concave Policy Learning by M-estimation}

Now we consider policy learning.
Instead of providing the standard uniform convergence proof, our theoretical result leverages the preceding lemmas.

\begin{theorem}[Consistency of concave policy learning]
Assume concave policy class $\{\pi_\beta(t|x):\ \beta\in\mathcal{B}\}$ with convex parameter space $\mathcal{B}$ satisfying that $\beta\mapsto\pi_\beta(t|x)y$ is concave for any $y\in\cY$, $t\in\cT$ and $x\in\cX$.
Define the parameter space of $\theta=\left(\beta, \eta_f, \eta_\KCMC\right)$ as $\Theta = \mathcal{B}\times\Theta_\eta$ for some convex $\Theta_\eta\subseteq\RR_+\times\RR^D$.
Define also
$\theta_0 :=(\beta^*, \eta^*_f, \eta^*_\KCMC)$ 
so that $\beta^* \in \arg\max_{\beta\in\mathcal{B}} V_\textinf^\KCMC(\pi_\beta)$ 
and $\left(\eta^*_f, \eta^*_\KCMC\right)$ is the solution of dual problem \eqref{app-eq:v_inf_kernel_conditional_moment_constraints_dual}
for $V_\textinf^\KCMC$ at policy $\pi_{\beta^*}$. 
Similarly, define
$\hat\theta_n :=(\hat\beta, \hat\eta_f, \hat\eta_\KCMC)$ 
so that $\hat\beta \in \arg\max_{\beta\in\mathcal{B}} \hat V_\textinf^\KCMC(\pi_\beta) $ 
and $\left(\hat\eta_f, \hat\eta_\KCMC\right)$ is the solution of dual problem 
\eqref{app-eq:empirical_v_inf_kernel_conditional_moment_constraints_dual}
for $\hat V_\textinf^\KCMC$ at policy $\pi_{\hat\beta}$. 
Now, assume that $\theta_0$ is unique and that $\theta_0\in\Theta$.
Also assume that $f^*_{t, x}:\RR\to\RR$ and $\beta\to\pi_\beta$ is continuous for any $t\in\cT$ and $x\in\cX$. 
Define $\ell:\Theta\times\cZ\to\RR$ as
\begin{equation}
    \ell_\theta (t, y, x) := \eta_f \gamma - {\eta_\KCMC}^T \bspsi(t, x)
    + \eta_f
        f_{t, x}^*  \left( \frac{{\eta_\KCMC}^T \bspsi(t, x) - \left(\frac{\pi_\beta(t|x)}{p_\obs(t|x)}\right)y}{\eta_f} \right)
\end{equation}
so that it is the negative version of the inside of the expectation of dual objectives \eqref{app-eq:v_inf_kernel_conditional_moment_constraints_dual} and  \eqref{app-eq:empirical_v_inf_kernel_conditional_moment_constraints_dual}.
Furthermore, assume $\ell$ satisfies $\EE\left|\ell_\theta\right|<\infty$ for any $\theta\in\Theta$ and $\EE[G_\varepsilon]<\infty$ for $G_\varepsilon:=\sup_{\theta\in\Theta:\ \|\theta - \theta_0\| \leq \varepsilon} \left| \ell_\theta \right|$ for some $\varepsilon > 0$.
Then, we have
$\hat\theta_n\pto\theta_0$
and 
$\hat V_\textinf^\KCMC(\pi_{\hat\beta}) \pto V_\textinf^\KCMC(\pi_{\beta^*})$.
\end{theorem}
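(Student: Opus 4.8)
The plan is to reduce the policy-learning problem to a single convex M-estimation over the joint parameter $\theta=(\beta,\eta_f,\eta_\KCMC)$ and then invoke the same machinery (Lemmas~\ref{lemma:m_estimation} and \ref{lemma:uniform_convergence_compact_space}) used for policy evaluation. First I would record the key duality identity: by the strong duality behind \eqref{app-eq:v_inf_kernel_conditional_moment_constraints_dual}, for each fixed $\beta$ we have $V_\textinf^\KCMC(\pi_\beta)=\max_{(\eta_f,\eta_\KCMC)}\EE[-\ell_\theta]$, and likewise $\hat V_\textinf^\KCMC(\pi_\beta)=\max_{(\eta_f,\eta_\KCMC)}\hat\EE_n[-\ell_\theta]$ for the empirical problem. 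Taking the outer maximum over $\beta\in\mathcal{B}$ and using $\max_\beta\max_\eta=\max_{(\beta,\eta)}$ then gives
\[
\max_{\beta\in\mathcal{B}}V_\textinf^\KCMC(\pi_\beta)=\max_{\theta\in\Theta}\EE[-\ell_\theta],\qquad \max_{\beta\in\mathcal{B}}\hat V_\textinf^\KCMC(\pi_\beta)=\max_{\theta\in\Theta}\hat\EE_n[-\ell_\theta],
\]
so that $\theta_0$ and $\hat\theta_n$ are exactly the joint minimizers of $\EE[\ell_\theta]$ and $\hat\EE_n[\ell_\theta]$. This recasts the problem as estimating the minimizer of a population convex risk by its empirical counterpart.

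The crucial step is to verify that $\theta\mapsto\ell_\theta$ is jointly convex on the convex set $\Theta=\mathcal{B}\times\Theta_\eta$; this is where the concavity of the policy class enters. The terms $\eta_f\gamma-\eta_\KCMC^T\bspsi$ are linear, so only the perspective term $P(v,\eta_f):=\eta_f f_{t,x}^*(v/\eta_f)$ with $v=\eta_\KCMC^T\bspsi(t,x)-(\pi_\beta(t|x)/p_\obs(t|x))y$ needs attention. I would argue in two stages: (i) $P$ is the perspective of the convex function $f_{t,x}^*$, hence jointly convex in $(v,\eta_f)$ on $\eta_f>0$, and moreover \emph{nondecreasing} in $v$, since $\partial_v P=\partial f_{t,x}^*(v/\eta_f)$ and, by \eqref{eq:fenchel_conjugate_solution}, every element of this subdifferential is a maximizer $\tilde w$ lying in $\mathrm{dom}(f_{t,x})\subseteq[0,\infty)$, i.e. a nonnegative reparametrized weight $\tilde w=p_\obs w\ge 0$; (ii) under the assumption that $\beta\mapsto\pi_\beta(t|x)y$ is concave, $-(\pi_\beta/p_\obs)y$ is convex in $\beta$, so $v$ is convex in $(\beta,\eta_\KCMC)$ (affine in $\eta_\KCMC$, convex in $\beta$). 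Combining (i) and (ii) through the vector composition rule for convex functions — $P$ convex and nondecreasing in its convex argument $v$, affine in its argument $\eta_f$ — shows that the perspective term, and hence $\ell_\theta$, is convex in $\theta$. Continuity of $\theta\mapsto\ell_\theta$ follows from continuity of $f_{t,x}^*$ and of $\beta\mapsto\pi_\beta$.

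With joint convexity in hand, the remaining steps mirror the policy-evaluation proof. Using the assumed uniqueness of $\theta_0$, the envelope condition $\EE[G_\varepsilon]<\infty$, and convexity of $\Theta$, Lemma~\ref{lemma:m_estimation} gives $\hat\theta_n\pto\theta_0$. Then $\hat\theta_n$ eventually lies in the compact neighborhood $\{\theta:\|\theta-\theta_0\|\le\varepsilon\}$, on which Lemma~\ref{lemma:uniform_convergence_compact_space} yields uniform convergence of $\hat\EE_n[\ell_\theta]$ to $\EE[\ell_\theta]$; combining this with continuity of $\theta\mapsto\EE[\ell_\theta]$ (dominated convergence via $G_\varepsilon$) gives
\[
\hat V_\textinf^\KCMC(\pi_{\hat\beta})=-\hat\EE_n[\ell_{\hat\theta_n}]\pto -\EE[\ell_{\hat\theta_n}]\pto -\EE[\ell_{\theta_0}]=V_\textinf^\KCMC(\pi_{\beta^*}).
\]
The main obstacle is the joint-convexity verification in the second paragraph: everything else is a direct transcription of the evaluation argument, but convexity in $\theta$ is genuinely new here and hinges on the interplay between the concavity of the policy class and the monotonicity of the perspective of $f^*$ guaranteed by the nonnegativity of the weights. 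A secondary point to state explicitly is that the jointly optimal $\hat\theta_n$ really does have first block equal to some $\arg\max_\beta\hat V_\textinf^\KCMC(\pi_\beta)$, which is the elementary commutation of the two maxima noted above.
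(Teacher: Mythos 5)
Your proposal is correct and matches the paper's overall architecture: commute the two maxima to recast policy learning as a single convex M-estimation over $\theta=(\beta,\eta_f,\eta_\KCMC)$, apply Lemma~\ref{lemma:m_estimation} to get $\hat\theta_n\pto\theta_0$, then invoke Lemma~\ref{lemma:uniform_convergence_compact_space} on the compact neighborhood $\{\theta:\|\theta-\theta_0\|\leq\varepsilon\}$ to pass to $\hat V_\textinf^\KCMC(\pi_{\hat\beta})\pto V_\textinf^\KCMC(\pi_{\beta^*})$. Where you genuinely differ is in verifying the joint convexity, which is the crux where policy-class concavity enters. The paper argues through the Lagrangian: for each fixed weight $\tilde w\geq 0$, the map $\theta\mapsto \EE[\tilde w\,(\pi_\beta(T|X)/p_\obs(T|X))Y]-\EE[(\tilde w-1)\,\eta_\KCMC^T\bspsi]+\eta_f(\EE[f_{T,X}(\tilde w)]-\gamma)$ is concave (concave in $\beta$ precisely because $\tilde w\geq0$, affine in $(\eta_f,\eta_\KCMC)$), so the dual objective, as a pointwise infimum over $\tilde w$ of concave functions, is concave in $\theta$. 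You instead prove convexity of $\ell_\theta(z)$ pointwise in $z$, via joint convexity of the perspective of $f^*_{t,x}$, its monotonicity in $v$, and the monotone vector-composition rule; the nonnegativity of the weights, which the paper uses to keep the Lagrangian concave in $\beta$, reappears in your proof as the source of that monotonicity. Both arguments are valid and buy the same conclusion; yours has the minor expository advantage of delivering exactly the pointwise-in-$z$ convexity that Lemma~\ref{lemma:m_estimation} literally assumes, whereas the paper states concavity only for the population objective and leaves the empirical/pointwise analogue implicit (it follows from the same Fenchel identity applied under $\hat\EE_n$). Two small notes: your monotonicity claim requires $\mathrm{dom}(f_{t,x})\subseteq[0,\infty)$, which is precisely the corrected formulation recorded in Appendix~\ref{app:errata}; and a cleaner route than the subgradient argument is to observe that $f^*_{t,x}(v)=\sup_{u\geq 0}\{uv-f_{t,x}(u)\}$ is a supremum of functions nondecreasing in $v$, hence nondecreasing, without needing subdifferentiability.
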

\begin{proof}
Due to the concavity of policy class $\{\pi_\beta(t|x):\ \beta\in\mathcal{B}\}$, 
we know that $\beta\mapsto\EE\left[\tilde w(Y, T, X)\left(\frac{\pi_\beta(T|X)}{p_\obs(T|X)}\right)Y\right]$ is concave for any $t\in\cT$ and $x\in\cX$.
Then, we can see that 
\begin{align}
    \max_{\beta\in\mathcal{B}} V_\textinf^\KCMC 
    &= \max_{\beta\in\mathcal{B}} \max_{\substack{\eta_\KCMC\in\RR^D\\\eta_f>0}} \min_{\tilde w}
    \EE\left[\tilde w(Y, T, X) \left( \frac{\pi_\beta(T|X)}{p_\obs(T|X} \right) Y \right]  \\
    &\quad\quad\quad\quad\quad\quad\quad\quad\quad\quad
    - \EE\left[(\tilde w - 1) {\eta_\KCMC}^T\bspsi\right] 
    + \eta_f\left(\EE[f_{T, X}(\tilde w)] - \gamma\right) \\
    &= \max_{\beta\in\mathcal{B}}
        \max_{\substack{\eta_\KCMC\in\RR^D\\\eta_f>0}}
        \EE[ - \ell_\theta(Y, T, X)] \\
    &= \max_{\theta\in\Theta} \EE[ - \ell_\theta(Y, T, X)] 
\end{align}
is a concave maximization problem, because $\EE[\ell_\theta]$ is the pointwise infimum of concave functions.
Thus, we can apply Lemma \ref{lemma:m_estimation} and get $\hat\theta_n\overset{p.}{\to}\theta_0$,
which implies $\hat\theta_n$ tend to the inside of compact set $\{\theta\in\Theta:\ \|\theta - \theta_0\| \leq \varepsilon\}$, where we have uniform convergence guarantee of $\hat\EE_n[\ell_\theta(Z)]$.
Therefore, we have $\hat V_\textinf^\KCMC(\pi_{\hat \beta}) = -\hat\EE_n[\ell_{\hat\theta_n}(Z)] \pto -\EE[\ell_{\hat\theta_n}(Z)] \pto  - \EE[\ell_{\theta_0}(Z)] = V_\textinf^\KCMC(\pi_{\beta ^*})$.
\end{proof}

\section{Experimental Settings and Additional Numerical Examples}
Lastly, we provide the details of our numerical experiments and provide more experimental results of the f-sensitivity models.

\subsection{Datasets}
In the experiment, we used two types of data, one is synthetic and the other is real-world data.
We base most of the experiments on the first synthetic data adopted from \citet{kallus2018confounding, kallus2021minimax}.
The second dataset is a real-world example used in \citet{dorn2022sharp}, and it is used to illustrate the application of our methods to a real-world dataset.

Now we explain our first dataset. 
We use the following data-generating process for this synthetic data.
\begin{align}
    \xi &\sim \mathrm{Bern}(1/2), \\
    X &\sim \mathcal{N}(\mu_x, \mathrm{I}_5), \\
    Y_0|X, \xi &\sim \mathcal{N}(\beta_{x, 0}^T X + \beta_{\xi, 0} \xi + \beta_{\text{const}, 0}, 1), \\
    Y_1|X, \xi &\sim \mathcal{N}(\beta_{x, 1}^T X + \beta_{\xi, 1} \xi + \beta_{\text{const}, 1}, 1), \\
    U &= \bbmone_{Y_0 > Y_1}, \\
    T|X, Y_1, Y_2, U &\sim \mathrm{Bern}(e(X, U)), \\
    Y &= Y_T,
    \label{eq:experimental_data_1}
\end{align}
where $e(X, U):= \frac{6e(X)}{4 + 5U + e(X)(2 - 5U)}$ and $e(x):= \sigma(\beta_e^T x)$. Here, $\sigma(u):=\frac{\exp(u)}{1 + \exp(u)}$ indicates the sigmoid function.
The parameters we used are
\begin{align*}
    \mu_x &= [-1, 0.5, -1, 0, -1], \\
    \beta_{x, 0} &= [0, .5, -0.5, 0, 0], \\
    \beta_{x, 1} &= [-1.5, 1.5, -2, 1, 0.5], \\
    \beta_{\xi, 0} &= 1, \\
    \beta_{\xi, 1} &= -1, \\
    \beta_{\text{const}, 0} &= 2.5,\\
    \beta_{\text{const}, 1} &= -0.5,\\
    \beta_e &= [0, 0.75, -0.5, 0, -1].
\end{align*}
For the policy evaluation task, we used policy $\pi(t=1|x):=e(x)$.

For the real-world data example, we use the same dataset as \citet{dorn2022sharp}, which is 668 subsamples of data from the 1966-1981 National Longitudinal Survey (NLS) of Older and Young Men. 
The subsamples consist of the 1978 cross-section of Young Men who are craftsmen or laborers and are not enrolled in school. 
We estimate the average treatment effect ($\EE[Y|T=1] - \EE[Y|T=0]$) of union membership ($T$) on log wages ($Y$), and eight other covariates are used as $X$. 
For the average treatment effect estimation, we substitute $\bbmone_{t=1} - \bbmone_{t=0}$ in place of $\pi(t|x)$.
Note that this substituted quantity is a difference of policy $\pi_1(t|x):= \bbmone_{t=1}$ and $\pi_0(t|x):= \bbmone_{t=0}$, and therefore it is not a proper policy. 
Nevertheless, such a substitution is still possible, as our method can accommodate any function $\pi(t|x)$ in place of the evaluated policy.
\footnote{If we re-define the reward and the evaluated policy as $Y':= 2(\bbmone_{T=1}Y - \bbmone_{T=0}Y)$ and $\pi(t|x):=\frac{1}{2}$, the offline policy evaluation of such a setup is equivalent to the average treatment effect estimation.}

For synthetic data, we generate a dataset of 500 samples for individual experimental configurations, unless otherwise specified.
We repeat the experiment 10 times using different random seeds and report the mean of the 10 experiments.
Additionally, we indicate plus/minus one standard deviation from the mean by the colored band around the line representing the mean value.

Lastly, conditional probability $p_\obs(t|x)$ used to construct the estimators was estimated from the data using the logistic regression with covariate $X$.

\subsection{Compared Estimators}
In the numerical examples, we consider four types of estimators.

First, as the baseline method, we consider the conventionally used ZSB estimator.
We impose the ZSB constraints on other estimators, in order to see the additional improvements by these constraints.
To impose ZSB constraints while ensuring the feasibility of the associated convex programming, we applied appropriate rescaling to the estimates of $p_\obs(t|x)$.
\footnote{We multiplied estimate $\hat p_\obs(t|X_i)$ by $\frac{1}{n}\sum_{i=1}^n[ \bbmone_{T_i=t} / \hat p_\obs(T_i|X_i)]$. When all the constraints can be expressed as linear constraints, fractional linear programming can be used to impose the ZSB constraints in a more natural manner \citep{zhao2019sensitivity}. However, the quadratic constraints of GP KCMC make it impossible to use the linear fractional programming approach.}
As discussed above, this estimator is conventionally used \citep{tan2006distributional, kallus2018confounding, zhao2019sensitivity} but is known to provide too conservative bounds.

Against this baseline, we compared two types of the proposed estimators based on the kernel moment constraints (KCMC), which are the low-rank Gaussian process KCMC and the low-rank hard KCMC using the orthogonal function class obtained by the kernel PCA.
In the following, we call them "low-rank GP KCMC" and "low-rank hard KCMC", respectively.
For both of the low-rank KCMC estimators, we used $100$-rank approximation so that $D=100$, unless otherwise specified.

Additionally, we compared the quantile balancing (QB) estimator by \citet{dorn2022sharp}.
As discussed in Example \ref{ex:quantile_balancing_as_a_special_case}, this estimator can be considered as a special case of the low-rank hard KCMC estimators that uses the optimal orthogonal function class, in the case of box constraints.

To solve the convex programming involved in the above estimators, we used MOSEK \citep{aps2019mosek} and ECOS \citep{bib:Domahidi2013ecos} through the API of CVXPY \citep{diamond2016cvxpy}.

\subsection{Additional Numerical Experiments with f-divergence Sensitivity Models}

Here, we list more examples of the f-sensitivity analysis with various types of f-divergences using the synthetic dataset.
The f-divergences considered here are listed in Table \ref{table:f_conjugate_list}.

\begin{table}[htbp]
    \centering
    \caption{Commonly used f-divergence, corresponding convex function $f:\RR_+\to\RR$, and its Fenchel conjugate $f^*:\mathrm{dom}(f^*)\to\RR$.}
    \begin{tabular}{ l || c | c | c }
      \hline
       f-divergence & $f(u)$ & $f^*(v)$ & $\mathrm{dom}(f^*)$\\ 
      \hline
      \hline
      KL & $u\log u$ & $\exp(v - 1)$ & $\RR$\\ 
      \hline
      Reverse KL & $-\log u$ & $-1 - \log (-v)$ & $\RR_-$ \\ 
      \hline
      Jensen-Shannon & \st{$-(u+1)\log\left(\frac{u + 1}{2}\right) + u \log  u$} (See Appendix \ref{app:errata}) & $-\log(2 - \exp(v))$ & $v < \log 2$\\ 
      \hline
      Squared Hellinger & $(\sqrt{u}-1)^2$ & $\frac{v}{1 - v} $ & $v<1$\\ 
      \hline
      Pearson $\chi^2$ & $(u-1)^2$ & $\frac{1}{4}v^2 + v$ & $\RR$\\ 
      \hline
      Neyman $\chi^2$ & $\frac{1}{u}-1$ & $-2 \sqrt{-v} + 1$ & $\RR_-$\\ 
      \hline
      Total Variation & $\frac{1}{2}|u - 1|$ & $v$ & $-\frac{1}{2} \leq v \leq  \frac{1}{2}$ \\ 
      \hline
    \end{tabular}
    \label{table:f_conjugate_list}
\end{table}

Similarly to the case of Tan's marginal sensitivity models, the sharp estimators are tighter than the ZSB estimator.
The low-rank hard KCMC is also providing (potentially excessively) tighter bounds than the low-rank Gaussian process KCMC and the quantile balancing estimators.

Interestingly, the quantile balancing constraint \footnote{We used $\Gamma = 1.5$, which corresponds to finding $\hat Y(t, x)$ that approximates the $40$ percentile.} is providing almost as sharp bound as the GP KCMC-based methods, even though it is no longer the theoretically optimal constraint.
Still, it is possible to give some intuition on the use of orthogonal function class $\{\phi_1\}$ with $\phi_1(t, x) := \left(\frac{\pi(t|x)}{p_\obs(t|x}\right)\hat Y(t, x)$ as follows:
If we have a good regressor of $Y$ satisfying $\hat Y(T, X) \approx Y$ and if $\tilde w$ follows constraint 
$\EE_\obs\left[(\tilde w(Y, T, X) - 1) \left(\frac{\pi(T|X)}{p_\obs(T|X)}\right) \hat Y(T, X)\right] = 0$,
we have
\begin{align}
\EE_\obs& \left[\tilde w(Y, T, X) \left(\frac{\pi(T|X)}{p_\obs(T|X)}\right) Y(T, X)\right] \\
&= \EE_\obs\left[\tilde w(Y, T, X) \left(\frac{\pi(T|X)}{p_\obs(T|X)}\right) \{(Y - \hat Y(T, X)) + \hat Y(T, X)\}\right] \\
&= \EE_\obs\left[\tilde w(Y, T, X) \left(\frac{\pi(T|X)}{p_\obs(T|X)}\right) ((Y - \hat Y(T, X))\right] 
+ \EE_\obs\left[\left(\frac{\pi(T|X)}{p_\obs(T|X)}\right) \hat Y(T, X)\right] \\
&\approx \EE_\obs\left[\left(\frac{\pi(T|X)}{p_\obs(T|X)}\right) \hat Y(T, X)\right].
\end{align}
This implies that if $Y$ is easy to predict with $X$ and $T$, the orthogonal constraint using the regressor $\hat Y(t, x)$ gives a similar value to the IPW estimator of $\hat Y(T, X)$ as long as $\tilde w$ is not too far from $1$.

\begin{figure}[htbp]
     \centering
     \begin{subfigure}[b]{0.45\textwidth}
     \centering
        \includegraphics[width=\linewidth]{figures/policy_evaluation_synthetic_binary_changing_gamma_KL_}
         \caption{KL sensitivity model}
     \end{subfigure}
     \begin{subfigure}[b]{0.45\textwidth}
     \centering
        \includegraphics[width=\linewidth]{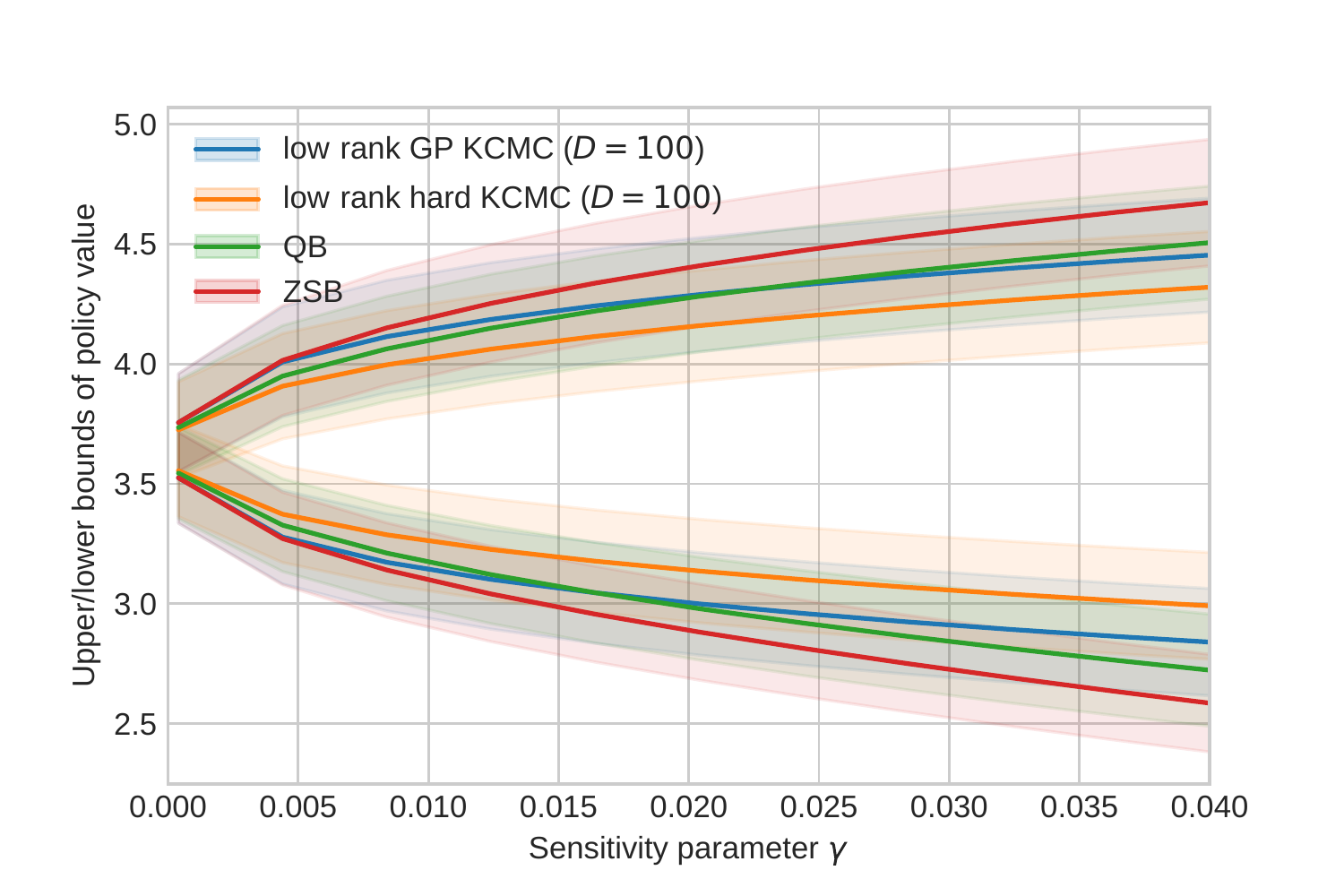}
         \caption{Reverse KL sensitivity model}
     \end{subfigure}
     \begin{subfigure}[b]{0.45\textwidth}
     \centering
        \includegraphics[width=\linewidth]{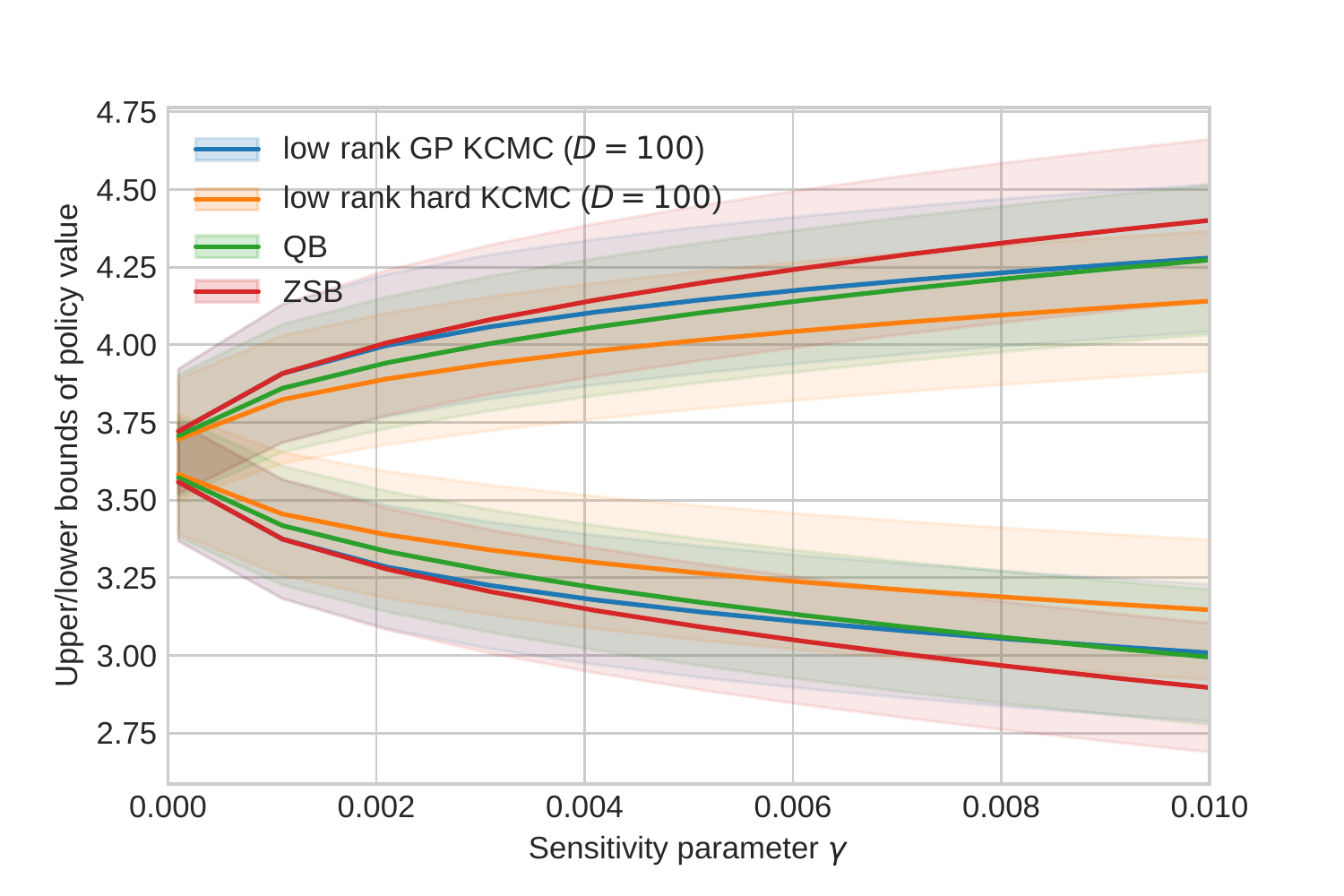}
         \caption{Squared Hellinger sensitivity model}
     \end{subfigure}
     \begin{subfigure}[b]{0.45\textwidth}
     \centering
        \includegraphics[width=\linewidth]{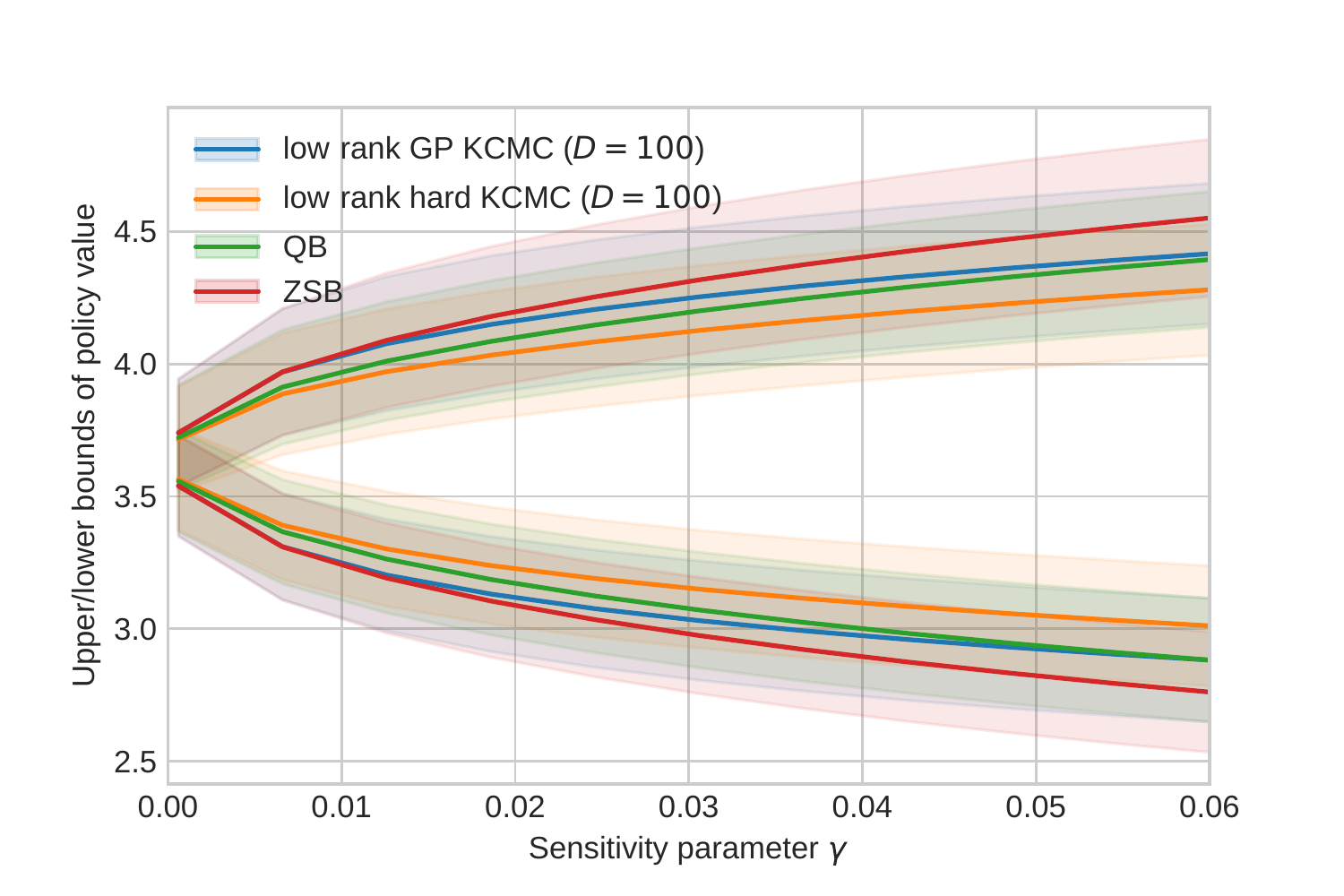}
         \caption{Pearson $\chi^2$ sensitivity model}
     \end{subfigure}
     \begin{subfigure}[b]{0.45\textwidth}
     \centering
        \includegraphics[width=\linewidth]{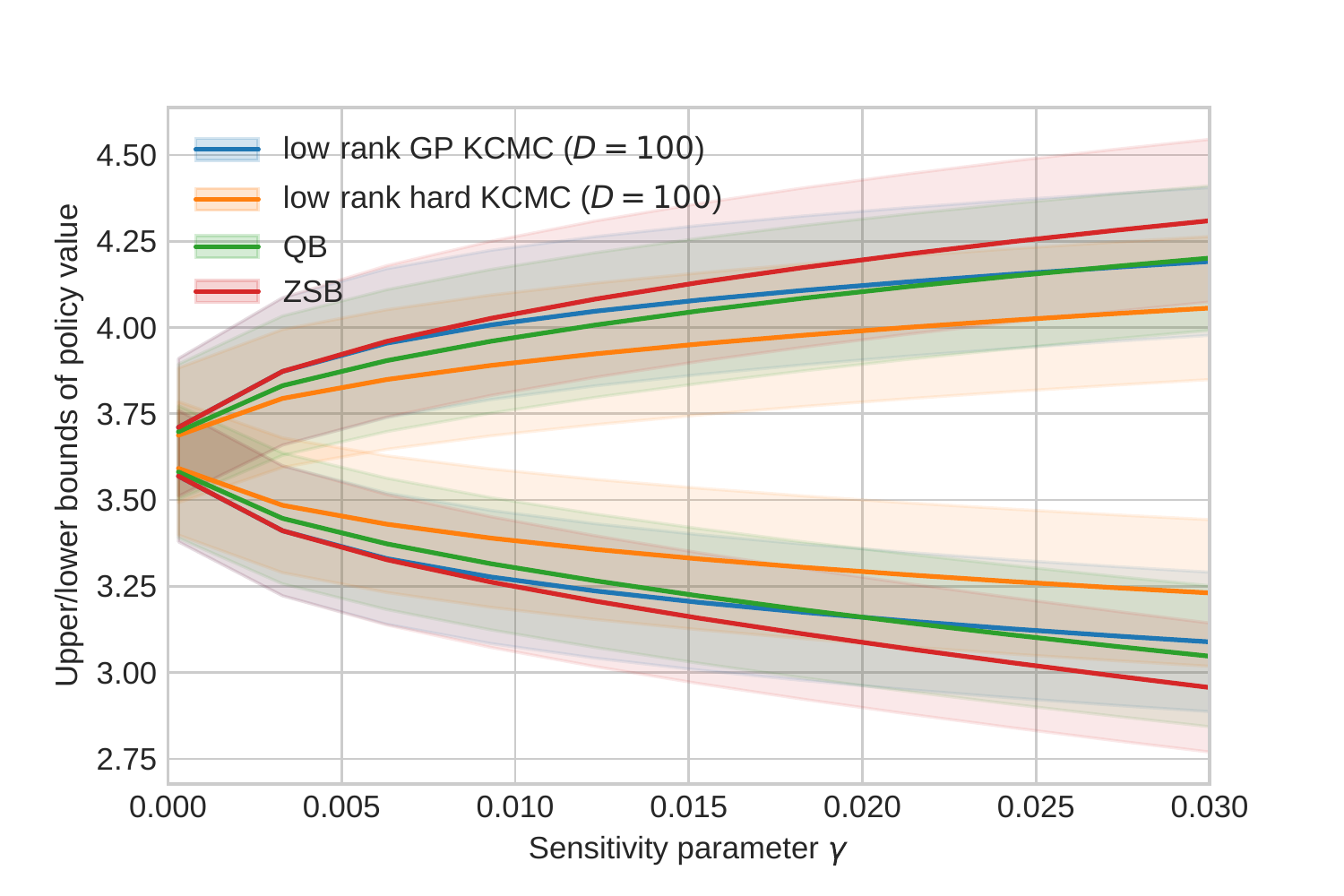}
         \caption{Neyman $\chi^2$ sensitivity model}
     \end{subfigure}
     \begin{subfigure}[b]{0.45\textwidth}
     \centering
        \includegraphics[width=\linewidth]{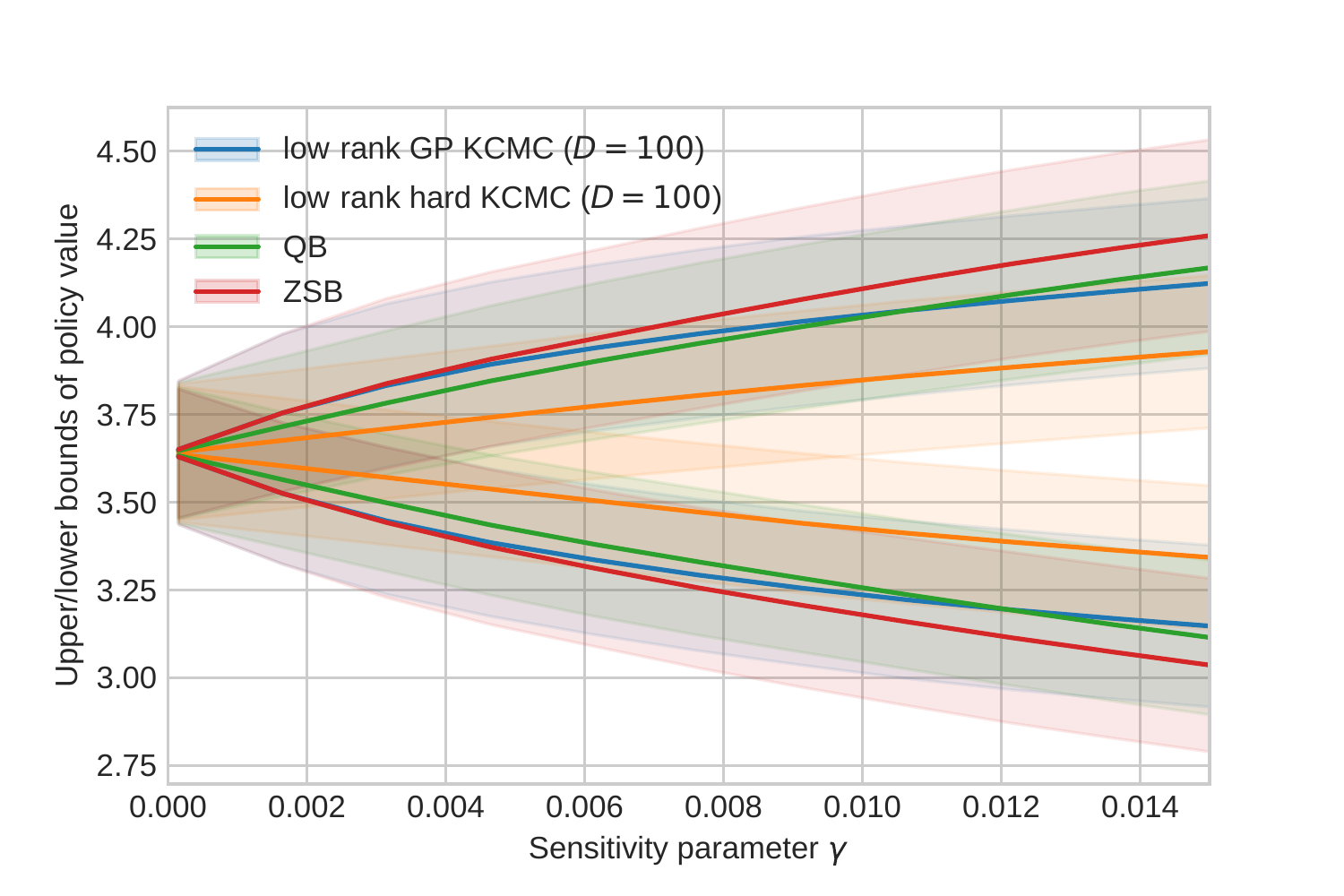}
         \caption{Total variation sensitivity model}
     \end{subfigure}
    \caption{Estimated upper and lower bounds of policy value using different estimators for different f-sensitivity models. The synthetic data is used.}
\end{figure}

\newpage

\section{A List of Errata}\label{app:errata}

\begin{itemize}
    \item Function $f$ for Jensen-Shannon divergence in Table \ref{table:f_conjugate_list} must be divided by $2$ so that $f(u) = -\frac{1}{2}(u+1)\log\left(\frac{u + 1}{2}\right) + \frac{1}{2}u \log u$.
    \item In many parts of this paper, $f^*(v)$ must be replaced by $f^{*_\nearrow}(v):=\inf_{v \leq \tilde v} f^*(\tilde v)$, which is an infimal convolution of $f^*$ and $(I_{[0, \infty)})^*(v) = I_{(-\infty, 0]}(v)$, which is known to be convex. This modification is required because we forgot to include multipliers for constraints $0 \leq \tilde w$ in Equations \eqref{app-eq:v_inf_conditional_moment_constraints_dual} and \eqref{app-eq:v_inf_kernel_conditional_moment_constraints_dual} when deriving the dual problem.
    Alternatively, we can assume the conditional f-constraint already includes the condition $w\geq 0$ so that $f_{x, t}(v) = \infty$ for any $v<0$.
    \item In derivation of dual problem \eqref{app-eq:empirical_v_inf_kernel_conditional_moment_constraints_dual}, \eqref{app-eq:v_inf_kernel_conditional_moment_constraints_dual}, and \eqref{app-eq:empirical_v_inf_kernel_conditional_moment_constraints_dual}, we can assume that $\eta_f>0$ except for the case of box constraints.
    In the case of box constraints, the conditional f-constraint is not tight, implying $\eta_f = 0$.
    In such a case, the last line of \eqref{app-eq:empirical_v_inf_kernel_conditional_moment_constraints_dual}, for example, simplifies to $\EE\left[ \eta_\CMC(T, X) - f^*_{T, X} \left(\eta_\CMC(T, X) - r(Y, T, X)\right)\right]$.
\end{itemize}

\end{document}